\documentclass{article}

\usepackage[preprint]{neurips_2026}

\usepackage[utf8]{inputenc} 
\usepackage[T1]{fontenc}    
\usepackage{hyperref}       
\usepackage{url}            
\usepackage{booktabs}       
\usepackage{amsfonts}       
\usepackage{nicefrac}       
\usepackage{microtype}      
\usepackage{xcolor}         
\usepackage{amsmath,amssymb,amsthm}
\usepackage{xspace}

\usepackage{multirow}

\usepackage{algorithm}
\usepackage{algpseudocode}

\newcommand{\best}[1]{\textbf{#1}}
\newcommand{\second}[1]{\underline{#1}}

\usepackage{graphicx}
\usepackage{tikz}
\usetikzlibrary{arrows.meta,fit,calc,backgrounds}

\usepackage{pifont}

\newcommand{\xmark}{\ding{55}}
\let\checkmark\cmark

\hypersetup{
  colorlinks=true,
  linkcolor=blue,
  citecolor=blue,
  urlcolor=blue,
}

\newcommand{\eps}{\varepsilon}
\newcommand{\Cost}{\mathrm{Cost}}

\newcommand{\mbe}{\textsc{MBE}}

\newcommand{\msma}{\textsc{MSMA}}

\newcommand{\cfs}{\textsc{CFS}\xspace}
\newcommand{\cfsm}{\text{\textsc{CFS}}}

\newcommand{\cfsvar}[2]{\ensuremath{\cfsm\,(#1\times #2)}}
\newcommand{\cfsvartag}[3]{\ensuremath{\cfsm\,(#1\times #2_{\mathrm{#3}})}}

\newcommand{\cfssubvartag}[4]{\ensuremath{\cfsm_{\mathrm{#1}}\,(#2\times #3_{\mathrm{#4}})}}

\newcommand{\AvgAUROC}{\mathrm{AvgAUROC}}
\newcommand{\AvgWorstAUROC}{\mathrm{AvgWorstAUROC}}

\newcommand{\diffpath}{\textsc{DiffPath}}
\newcommand{\ddpmood}{\textsc{DDPM-OOD}}
\newcommand{\lmd}{\textsc{LMD}}

\newcommand{\gepc}{\textsc{GEPC}}
\newcommand{\scoped}{\textsc{SCOPED}}
\newcommand{\eigenscore}{\textsc{EigenScore}}

\newtheorem{theorem}{Theorem}

\newtheorem{remark}{Remark}

\newtheorem{proposition}{Proposition}

\title{Backbone-Equated Diffusion OOD via Sparse Internal Snapshots}

\author{%
Yadang Alexis Rouzoumka$^{1,2}$ \\
DEMR, ONERA \& SONDRA \\
Université Paris-Saclay \\
\texttt{yadang-alexis.rouzoumka@centralesupelec.fr, rouzoumkaalexis@yahoo.fr}
\And
Jean Pinsolle$^{2}$ \\
SONDRA, CentraleSupélec \\
Université Paris-Saclay
\And
Eugénie Terreaux$^{1}$ \\
DEMR, ONERA \\
Université Paris-Saclay
\AND
Christèle Morisseau$^{1}$ \\
DEMR, ONERA \\
Université Paris-Saclay
\And
Jean-Philippe Ovarlez$^{1,2}$ \\
DEMR, ONERA \& SONDRA \\
Université Paris-Saclay
\And
Chengfang Ren$^{2}$ \\
SONDRA, CentraleSupélec \\
Université Paris-Saclay
}

\begin{document}

\maketitle

\begin{abstract}
Fair comparison between diffusion-based OOD detectors is challenging, as conclusions can vary with backbone choice, corruption parameterization, and test-time budget. We address this issue through a \emph{Mutualized Backbone-Equated} (\mbe) protocol that aligns canonical corruption levels and logical test-time cost across diffusion backbones. Within this setting, we introduce \emph{Canonical Feature Snapshots} (\cfs), a family of detectors that probes a frozen diffusion backbone using only a tiny number of native internal activations at canonical low-noise levels. On a controlled CIFAR-scale benchmark, the strongest one-forward CFS variant is \(\cfs(1{\times}2)\), while an even smaller decoder-only variant remains highly competitive. This shows that much of the relative-OOD signal exposed by frozen diffusion backbones is concentrated in a small number of sparse internal states, rather than requiring full denoising trajectories or high-capacity downstream heads. We further provide a local diagnostic theory explaining these observations through conditional encoder-decoder complementarity, diagonal-score separation, and low-noise corruption stability. The official implementation is available at
\url{https://github.com/RouzAY/cfs-diffusion-ood/}.
\end{abstract}

\section{Introduction}
\label{sec:intro}

Out-of-distribution (OOD) detection asks whether a model can identify inputs that fall outside the regime where its predictions should be trusted. In diffusion models, once the backbone is frozen, the same image can be probed across a structured family of corruption levels, producing a hierarchy of hidden states before the final denoising output.

Most diffusion-based OOD methods probe this backbone in \emph{output space}: through reconstructions, denoising residuals, trajectories, output-side geometry, or posterior-consistency scores \citep{mahmood2021msma, graham2023ddpmodd, liu2023lmd, gao2023diffguard, heng2024diffpath, rouzoumka2026gepc, barkley2025scoped, shoushtari2025eigenscore}. This has produced useful detectors, but it leaves open a more basic question: Within a frozen diffusion backbone, where does the most useful information for OOD detection reside?

We argue that this question is obscured by two coupled issues. The first is \emph{protocol confounding}: in diffusion OOD, conclusions can depend strongly on checkpoint family (e.g., DDPM vs.\ EDM), corruption coordinates, and test-time budget, much as protocol choices already matter in post-hoc OOD benchmarking more broadly \citep{yang2022openood, zhang2024openoodv15}. The second is representation mismatch: output-space summaries rely on compressed readouts and may miss discriminative information still present in internal states, consistent with the growing view of diffusion models as representation learners \citep{yang2023repfusion, luo2023diffhyper, yu2025repa}.


We study this question under a controlled evaluation setting. First, we introduce a \emph{Mutualized Backbone-Equated} (MBE) protocol that aligns checkpoint-family policy, canonical corruption levels, and logical test-time cost across diffusion backbones. Second, within this setting, we propose \emph{Canonical Feature Snapshots} (CFS), a family of OOD detectors that probes a frozen diffusion backbone through a tiny number of aligned internal activations.

On a controlled CIFAR-scale benchmark, the strongest one-forward operating point is \(\cfs(1{\times}2)\), while a decoder-only variant,\(\cfs_{\mathrm{dec}}(1{\times}1)\), remains highly competitive. Under controlled evaluation, useful OOD signal in frozen diffusion backbones is already strongly concentrated in a tiny number of sparse native internal snapshots.

To explain these trends, we develop a measurable local-testing view of sparse diffusion probing. It yields three diagnostic principles: conditional encoder-decoder complementarity, diagonal-score separation, and low-noise corruption stability. The theory is local rather than universal, but it produces directly estimable quantities for selecting hooks, levels, and encoder-decoder pairings, and we show empirically that these diagnostics track downstream OOD behavior across both improved-diffusion and EDM backbones.





\paragraph{Contributions.}
\begin{itemize}
    \item \textbf{A controlled protocol and sparse detector.}
    We formulate diffusion OOD comparison under a \emph{Mutualized Backbone-Equated} (MBE) protocol, and introduce \emph{Canonical Feature Snapshots} (CFS), a detector family that probes frozen diffusion backbones through a tiny number of aligned internal activations.

    \item \textbf{Evidence that the OOD signal is highly concentrated.}
    Under MBE, \(\cfs(1{\times}2)\) is a strong one-forward operating point, while the single late-decoder probe \(\cfs_{\mathrm{dec}}(1{\times}1)\) remains highly competitive, showing that a relevant signal is strongly concentrated in sparse native snapshots.

    \item \textbf{A local diagnostic explanation.}
    We develop a measurable local-testing view that explains conditional encoder-decoder complementarity, diagonal-score separation, and low-noise stability, and show that its diagnostics track empirical behavior.
\end{itemize}

\section{Related Work and Positioning}
\label{sec:related}

\noindent\textbf{Post-hoc OOD detection and protocol sensitivity.}
A large literature studies post-hoc OOD scores for pretrained discriminative models, including Mahalanobis distances, energy scores, activation shaping, virtual-logit matching, and nearest-neighbor geometry \citep{lee2018mahalanobis, liu2020energy, sun2021react, wang2022vim, sun2022knn, lu2025oodsurvey}. Benchmarking efforts such as OpenOOD and OpenOOD v1.5 show how strongly protocol choices can affect conclusions \citep{yang2022openood, zhang2024openoodv15}. These concerns are even sharper in diffusion OOD, where checkpoint family, corruption coordinates, and test-time budget introduce additional confounders.


\noindent\textbf{Diffusion OOD in output space.}
Most diffusion-based OOD methods probe denoisers through output-space quantities or output-derived summaries. \msma\ aggregates multiscale score norms \citep{mahmood2021msma}; \ddpmood\ and \lmd\ use denoising, reconstruction, or inpainting behavior \citep{graham2023ddpmodd, liu2023lmd}; likelihood-based diffusion OOD has also been studied \citep{goodier2023likelihood}; DiffGuard adds conditional guidance \citep{gao2023diffguard}; and \diffpath\ summarizes denoising trajectories from a single unconditional diffusion backbone \citep{heng2024diffpath}. Other methods use output-side geometric or consistency structure: \gepc\ measures transformation-induced posterior-consistency violations in denoiser outputs \citep{rouzoumka2026gepc}, while SCOPED and EigenScore study output-side geometry or uncertainty \citep{barkley2025scoped, shoushtari2025eigenscore}. Despite their differences, these methods share the same basic viewpoint: the OOD signal is extracted primarily from denoiser outputs, reconstructions, trajectories, consistency residuals, or output-side geometry.

\noindent\textbf{Diffusion models as representation learners.}
A parallel literature views diffusion models as representation learners. Pretrained diffusion backbones provide useful internal features for downstream tasks \citep{yang2023repfusion}, and semantically meaningful descriptors can be consolidated from multi-layer and multi-timestep states \citep{luo2023diffhyper}. Related work also suggests encoder/decoder asymmetry: Faster Diffusion reports that encoder activations vary less across denoising steps than decoder activations \citep{li2024faster}. While not an OOD result, this supports the idea that different internal states play different functional roles. REPA further argues that strong hidden representations are important for generation quality \citep{yu2025repa}. Recent OOD work also supports representation-space modeling beyond raw-pixel likelihoods \citep{ding2025repr_ood, jarve2025latent_density_ood}. These works motivate our representation-first viewpoint.

\noindent\textbf{Positioning.}
We study diffusion OOD under a \emph{Mutualized Backbone-Equated} protocol and ask: under a shared-source, backbone-equated, budget-accounted comparison, how much relative-OOD signal is already present in a tiny number of sparse native frozen snapshots? CFS answers this question with a deliberately minimal detector family: no reconstruction module, no guidance machinery, no reverse-path recursion, and no high-capacity downstream head. 
Table~\ref{tab:positioning} summarizes \cfs positioning.

\begin{table}[t]
\centering
\caption{\textbf{Positioning relative to diffusion-based OOD detection.} Our distinction is not merely that internal features can help, but that under a shared-source, backbone-equated protocol, a tiny number of sparse native frozen snapshots already captures a strong relative-OOD signal.}
\label{tab:positioning}
\small
\setlength{\tabcolsep}{4.2pt}
\resizebox{\textwidth}{!}{%
\begin{tabular}{lcccc}
\toprule
Method & Primary probe & Native internal features & Reconstruction / guidance & Shared-source protocol \\
\midrule
\msma\ \citep{mahmood2021msma} & multiscale output descriptor & \xmark & \xmark & \xmark \\
\ddpmood\ \citep{graham2023ddpmodd} & reconstruction / manifold & \xmark & \checkmark & \xmark \\
\lmd\ \citep{liu2023lmd} & inpainting / reconstruction & \xmark & \checkmark & \xmark \\
DiffGuard \citep{gao2023diffguard} & guided output probing & \xmark & \checkmark & \xmark \\
\diffpath\ \citep{heng2024diffpath} & denoising trajectory & \xmark & \xmark & \xmark \\
\gepc\ \citep{rouzoumka2026gepc} & output consistency & \xmark & \xmark & \xmark \\
\scoped\ \citep{barkley2025scoped} & score geometry & \xmark & \xmark & \xmark \\
\eigenscore\ \citep{shoushtari2025eigenscore} & covariance geometry & \xmark & \xmark & \xmark \\
DLSR \citep{yang2024dlsr} & feature reconstruction & \checkmark & \checkmark & \xmark \\
\midrule
\textbf{CFS (ours)} & sparse native snapshots & \checkmark & \xmark & \checkmark \\
\bottomrule
\end{tabular}}
\end{table}

\section{Mutualized Backbone-Equated Protocol (\mbe)}
\label{sec:mbe}

OOD benchmarking has shown that evaluation details can dominate perceived progress \citep{yang2022openood, zhang2024openoodv15}. In diffusion OOD, this problem is amplified by additional degrees of freedom, including checkpoint family, corruption parameterization, and hidden test-time budget. We therefore evaluate all methods under a shared protocol designed to remove these confounders.

\subsection{Canonical corruption and cross-backbone alignment}
\label{sec:mbe_canonical}

A first ingredient of \mbe\ is a shared corruption view across backbone families. Diffusion backbones evaluate corrupted versions of an input across a family of noise levels \citep{ho2020ddpm, nichol2021improved, karras2022edm}, but their native interfaces differ: some expose discrete timesteps \(t\), while others use continuous noise scales such as \(\sigma\). To compare OOD detectors across backbones, we therefore work with a common canonical corruption parameterization.

For a clean sample $\mathbf{x}_0$, we write corruption as
\begin{equation}
\mathbf{x}_\lambda = a(\lambda)\mathbf{x}_0 + b(\lambda)\boldsymbol{\eps}, \qquad \boldsymbol{\eps} \sim \mathcal N(0,I),
\label{eq:canon_corruption}
\end{equation}
and use the canonical coordinate, with a slight abuse of notation,
\begin{equation}
\lambda := \log \frac{a(\lambda)^2}{b(\lambda)^2},
\label{eq:logsnr_def}
\end{equation}
i.e., logSNR. Large \(\lambda\) corresponds to cleaner observations and small \(\lambda\) to noisier ones.

For improved-diffusion checkpoints, \(a_t=\sqrt{\bar\alpha_t}\), \(b_t=\sqrt{1-\bar\alpha_t}\), and \(\lambda_t=\log \frac{\bar\alpha_t}{1-\bar\alpha_t}\), so a desired canonical level is matched to the nearest native timestep in logSNR space. EDM-style backbones instead expose a continuous noise input; our adapter maps each canonical \(\lambda\) to the appropriate backbone-specific model input and returns coefficients satisfying Eq.~\eqref{eq:canon_corruption}. Canonicalization matters because many diffusion OOD methods are intrinsically multilevel: without a shared corruption coordinate, methods may be compared at levels that are not actually matched in corruption strength.

\subsection{Controlled comparison under \mbe}
\label{sec:mbe_controlled}

Under \mbe, a method is evaluated under the same source-family policy, preprocessing, canonical corruption levels, ID/OOD splits, and logical test-time cost as its competitors. The purpose of \mbe\ is scientific comparison, not per-backbone peak tuning. A diffusion OOD method can otherwise appear stronger for reasons unrelated to its scoring rule: a better-matched checkpoint, a different input normalization, a different corruption coordinate, or a larger hidden test-time budget.

Concretely, \mbe\ routes all methods through the same canonical corruption semantics and shared adapter interface, and evaluates them under the same split policy and logical budget accounting. It does not force identical native implementations. Detailed canonicalization, adapter outputs, and baseline implementation taxonomy are deferred to Appendix~\ref{app:protocols} and Appendix~\ref{app:taxonomy_baselines}.


For discrete backbones, mapping a continuous logSNR grid to native timesteps can produce duplicates. We therefore distinguish the candidate grid resolution \(K_{\mathrm{grid}}\) from the number of effective canonical levels \(K_c\) actually used by a method; Appendix~\ref{app:kgrid}--\ref{app:canon_mapping} gives the precise construction.

We report the logical test-time cost as
\[
\Cost_m = \#F_m + \#J_m,
\]
where \(\#F_m\) is the number of backbone forward evaluations per image and \(\#J_m\) the number of Jacobian-type evaluations when applicable. In the main comparisons of this paper, the dominant variation is in \(\#F\), and all compared methods have \(\#J=0\).

\section{Method: Canonical Feature Snapshots (\cfs)}
\label{sec:method}

\subsection{Canonical Feature Snapshots}

We ask whether a tiny number of native frozen internal activations already captures useful OOD signal once protocol confounders are removed. Let \(P_\star\) denote the source distribution used to train a frozen diffusion checkpoint, and let \(P\) and \(Q\) denote the evaluation ID and OOD datasets. We do not treat the checkpoint as an  OOD oracle for \(P_\star\); instead, we use it as a frozen representation map and define OOD relative to the evaluation reference bank \(P\).


A \cfs\ instance is specified by a small set of canonical levels \(\Lambda\), a small set of native internal hooks \(\mathcal H\), and an ID-only scoring head on the resulting pooled slot descriptors. For each \(\lambda\in\Lambda\), we form
\[
\mathbf x_\lambda
=
a(\lambda)\mathbf x_0
+
b(\lambda)\boldsymbol{\varepsilon},
\qquad
\boldsymbol{\varepsilon}\sim\mathcal N(\mathbf 0,\mathbf I),
\]
run one forward pass through the frozen diffusion backbone, and pool each selected activation
\(\mathbf u_{\lambda,h}(\mathbf x_\lambda)\) into a descriptor
\[
\mathbf z_{\lambda,h}(\mathbf x_0)
:=
\operatorname{Pool}
\!\left(
\mathbf u_{\lambda,h}(\mathbf x_\lambda)
\right).
\]
The sparse representation is
\[
\boldsymbol{\Phi}(\mathbf x_0)
=
\big[
\mathbf z_{\lambda,h}(\mathbf x_0)
\big]_{(\lambda,h)\in\mathcal S},
\qquad
\mathcal S\subseteq \Lambda\times\mathcal H.
\]
OOD detection then reduces to fitting ID-only slot statistics on \(\boldsymbol{\Phi}(P)\) and scoring deviation from that reference geometry.

The paper focuses on two operating points: \(\cfs(1{\times}2)\), which uses one deep encoder and one late decoder hook at a single low-noise level, and \(\cfs_{\mathrm{dec}}(1{\times}1)\), a decoder-only variant. Because all retained hooks at a fixed level are captured within the same backbone forward pass, logical cost depends only on the number of selected levels:
\[
\#F_{\cfs}=|\Lambda|,\qquad \#J_{\cfs}=0.
\]

Figure~\ref{fig:cfs_overview} summarizes this pipeline.

\begin{figure}[t]
\centering
\resizebox{\linewidth}{!}{%
\begin{tikzpicture}[
    >=Latex,
    font=\footnotesize,
    box/.style={
        draw=black!45,
        rounded corners=2.6mm,
        thick,
        align=center,
        inner sep=4pt
    },
    infobox/.style={
        box, fill=yellow!10,
        minimum width=2.15cm,
        minimum height=1.65cm
    },
    inputbox/.style={
        box, fill=blue!10,
        minimum width=1.85cm,
        minimum height=1.30cm
    },
    procbox/.style={
        box, fill=violet!8,
        minimum width=3.15cm,
        minimum height=1.65cm
    },
    unetframe/.style={
        box, fill=violet!10,
        minimum width=4.90cm,
        minimum height=2.70cm
    },
    hookbox/.style={
        box, fill=orange!12,
        minimum width=2.20cm,
        minimum height=1.45cm,
        anchor=south
    },
    headbox/.style={
        box, fill=cyan!10,
        minimum width=2.30cm,
        minimum height=1.95cm,
        anchor=south
    },
    topheadbox/.style={
        box, fill=cyan!10,
        minimum width=2.55cm,
        minimum height=1.80cm
    },
    outbox/.style={
        box, fill=green!12,
        minimum width=2.75cm,
        minimum height=1.95cm,
        anchor=south
    },
    arr/.style={-Latex, thick},
    darr/.style={-Latex, thick, dashed},
    taparr/.style={-Latex, thick, dashed, rounded corners=7pt}
]

\node[infobox] (meta) at (1.35,1.30)
{probe setup\\[2pt]
$\lambda\in\Lambda$\\[2pt]
$\displaystyle \lambda=\log\frac{a(\lambda)^2}{b(\lambda)^2}$};

\node[procbox] (corr) at (4.55,1.30)
{canonical corruption\\[4pt]
$\displaystyle x_\lambda=a(\lambda)x_0+b(\lambda)\varepsilon$};

\node[unetframe] (unet) at (9,1.30) {};

\node[topheadbox] (slotfeat) at (14,1.30)
{native feature slot\\[3pt]
$\displaystyle F_{k,h}$\\[-1pt]
$\displaystyle=\mathrm{hook}_{k,h}(x_\lambda)$};

\node[inputbox] (x0) at (1.35, -0.78) 
{$x_0$\\[3pt]input};

\node[hookbox] (enc) at (8.18,-2.80)
{deep encoder\\[3pt]snapshot};

\node[hookbox] (dec) at (10.67,-2.80)
{late decoder\\[3pt]snapshot};

\node[headbox, text width=2.45cm] (pool) at (14,-3)
{pooled slot\\[2pt]
$\displaystyle z_{k,h}$\\[-1pt]
$\displaystyle =[\,\mu(F_{k,h});\sigma(F_{k,h})\,]$};

\node[headbox, text width=2.95cm] (head) at (17.55,-3.10)
{diagonal score\\[2pt]
$\displaystyle s_{k,h}(x)=\frac{1}{D}\sum_j$\\[-1pt]
$\displaystyle \frac{(z_j-\hat\mu_j)^2}{\hat v_j}$};

\node[outbox, text width=3.05cm] (ood) at (17.55, 0)
{final score\\[2pt]
$\displaystyle S_{\cfs}(x)=\frac{1}{|\mathcal S|}$\\[-1pt]
$\displaystyle \sum_{(k,h)\in\mathcal S} s_{k,h}(x)$};

\draw[arr]  (meta.east) -- (corr.west);
\draw[arr]  (x0.east)  -| (corr.south);
\draw[arr]  (corr.east) -- (unet.west);
\draw[arr]  (unet.east) -- (slotfeat.west);




\draw[arr]  (slotfeat.south) -- (pool.north);
\draw[arr]  (pool.east) -- (head.west);
\draw[arr]  (head.north) -- (ood.south);

\node[font=\bfseries\footnotesize, text=black!72]
    at ($(unet.north)+(0,0.22)$) {frozen diffusion U-Net};

\fill[violet!25] ($(unet.center)+(-1.70, 0.95)$) rectangle ($(unet.center)+(-1.30,-0.95)$);
\fill[violet!40] ($(unet.center)+(-1.00, 0.72)$) rectangle ($(unet.center)+(-0.62,-0.72)$);
\fill[violet!70] ($(unet.center)+(-0.35, 0.42)$) rectangle ($(unet.center)+(-0.02,-0.42)$);

\fill[gray!22]   ($(unet.center)+(0.25, 0.18)$) rectangle ($(unet.center)+(0.55,-0.18)$);

\fill[violet!70] ($(unet.center)+(0.95, 0.42)$) rectangle ($(unet.center)+(1.28,-0.42)$);
\fill[violet!40] ($(unet.center)+(1.48, 0.72)$) rectangle ($(unet.center)+(1.86,-0.72)$);
\fill[violet!25] ($(unet.center)+(2.00, 0.95)$) rectangle ($(unet.center)+(2.36,-0.95)$);


\draw[violet!45, line width=0.9pt, line cap=round]
    ($(unet.center)+(-1.30, 0.82)$) -- ($(unet.center)+(2.06, 0.82)$);

\draw[violet!45, line width=0.9pt, line cap=round]
    ($(unet.center)+(-0.62, 0.55)$) -- ($(unet.center)+(1.54, 0.55)$);

\draw[violet!45, line width=0.9pt, line cap=round]
    ($(unet.center)+(-0.02, 0.30)$) -- ($(unet.center)+(1.01, 0.30)$);



\coordinate (tapE) at ($(unet.center)+(-0.82,-0.72)$);
\coordinate (tapD) at ($(unet.center)+(1.67,-0.72)$);

\draw[darr, draw=black!55] (tapE) -- (enc.north);
\draw[darr, draw=black!55] (tapD) -- (dec.north);

\begin{scope}[on background layer]
    \node[
        draw=magenta!55, fill=magenta!4,
        rounded corners=4mm, line width=1.0pt,
        inner xsep=10pt, inner ysep=11pt,
        fit=(meta)(corr)(unet)(x0)(enc)(dec)
    ] (grpProbe) {};

    \node[
        draw=cyan!75, fill=cyan!4,
        rounded corners=4mm, line width=1.0pt,
        inner xsep=10pt, inner ysep=11pt,
        fit=(slotfeat)(pool)(head)(ood)
    ] (grpScore) {};
\end{scope}

\node[font=\bfseries\footnotesize, text=black!72]
    at ($(grpProbe.north)+(0,0.38)$) {canonical probing};

\node[font=\bfseries\footnotesize, text=black!72]
    at ($(grpScore.north)+(0,0.38)$) {lightweight ID-only scoring};

\end{tikzpicture}%
}
\caption{\textbf{Overview of \cfs.} The input is corrupted at a canonical level, processed by a frozen diffusion U-Net, probed through a small number of native internal snapshots, and scored with a lightweight ID-only head.}
\label{fig:cfs_overview}
\end{figure}

\subsection{A local diagnostic view}

Our theory is local and diagnostic rather than universal: it asks which frozen internal statistics should be most useful for testing membership in the evaluation ID reference distribution, and it focuses on quantities directly measurable on held-out data.

Fix a canonical level \(\lambda\). Let \(\mathbf z_{d,\lambda}(\mathbf x_0)\) and \(\mathbf z_{e,\lambda}(\mathbf x_0)\) denote a selected late decoder snapshot and a selected deep encoder snapshot, and define
\[
\mathbf z_\lambda(\mathbf x_0)
=
[\mathbf z_{d,\lambda}(\mathbf x_0)^\top,
 \mathbf z_{e,\lambda}(\mathbf x_0)^\top]^\top .
\]
We use the local testing approximation
\begin{equation}
\mathbf z_\lambda(\mathbf x_0)\mid H_0
\sim
\mathcal N(\boldsymbol{\mu}_\lambda,\boldsymbol{\Sigma}_\lambda),
\qquad
\mathbf z_\lambda(\mathbf x_0)\mid H_1
\sim
\mathcal N(\boldsymbol{\mu}_\lambda+\boldsymbol{\Delta}_\lambda,\boldsymbol{\Sigma}_\lambda),
\label{eq:local_model_main}
\end{equation}
where \(H_0:\mathbf x_0\sim P\) and \(H_1:\mathbf x_0\sim Q\). This is a local model on pooled internal representations, not a global image model.

\begin{theorem}[Conditional encoder-decoder complementarity]
\label{thm:conditional_complementarity_main}
Under Eq.~\eqref{eq:local_model_main}, the paired separation decomposes as
\[
\mathrm{Sep}_{\mathrm{pair}}(\lambda)
=
\mathrm{Sep}_{\mathrm{dec}}(\lambda)
+
\mathrm{Res}_{e\mid d}(\lambda),
\qquad
\mathrm{Res}_{e\mid d}(\lambda)\ge 0.
\]
The residual vanishes iff the encoder shift is fully explained by the decoder shift. Hence
\[
\mathrm{Sep}_{\mathrm{pair}}(\lambda)\ge \mathrm{Sep}_{\mathrm{dec}}(\lambda).
\]
The full expression is given in Appendix~\ref{app:proof_conditional_complementarity}.
\end{theorem}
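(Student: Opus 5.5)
The plan is to identify \(\mathrm{Sep}\) with the squared Mahalanobis distance between the two Gaussian hypotheses in Eq.~\eqref{eq:local_model_main}. This is the natural separation for equal-covariance Gaussian testing: it determines the optimal (Fisher/LDA) AUROC \(\Phi(\sqrt{\mathrm{Sep}}/\sqrt2)\) and equals twice the KL divergence. Concretely, we set
\[
\mathrm{Sep}_{\mathrm{pair}}(\lambda)=\boldsymbol\Delta_\lambda^\top\boldsymbol\Sigma_\lambda^{-1}\boldsymbol\Delta_\lambda,\qquad
\mathrm{Sep}_{\mathrm{dec}}(\lambda)=\boldsymbol\Delta_d^\top\boldsymbol\Sigma_{dd}^{-1}\boldsymbol\Delta_d .
\]
We partition \(\boldsymbol\Delta_\lambda=(\boldsymbol\Delta_d,\boldsymbol\Delta_e)\) and \(\boldsymbol\Sigma_\lambda\) into blocks \(\boldsymbol\Sigma_{dd},\boldsymbol\Sigma_{de},\boldsymbol\Sigma_{ed},\boldsymbol\Sigma_{ee}\), assuming \(\boldsymbol\Sigma_\lambda\succ0\) so that every inverse below exists. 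The decoder-only separation is what one obtains from the marginal law of \(\mathbf z_d\), which under either hypothesis is \(\mathcal N(\cdot,\boldsymbol\Sigma_{dd})\).

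The core step is the block (Schur-complement) inversion of \(\boldsymbol\Sigma_\lambda\). Write \(\mathbf S_{e\mid d}=\boldsymbol\Sigma_{ee}-\boldsymbol\Sigma_{ed}\boldsymbol\Sigma_{dd}^{-1}\boldsymbol\Sigma_{de}\), which is positive definite because \(\boldsymbol\Sigma_\lambda\succ0\). Define the \emph{residual encoder shift} \(\boldsymbol\delta_{e\mid d}=\boldsymbol\Delta_e-\boldsymbol\Sigma_{ed}\boldsymbol\Sigma_{dd}^{-1}\boldsymbol\Delta_d\). This is exactly the shift in the conditional mean of \(\mathbf z_e\) given \(\mathbf z_d\) that is not predicted by regressing on the decoder. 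Expanding the quadratic form using the \(LDL^\top\) factorization
\[
\boldsymbol\Sigma_\lambda=\begin{pmatrix}\mathbf I&0\\ \boldsymbol\Sigma_{ed}\boldsymbol\Sigma_{dd}^{-1}&\mathbf I\end{pmatrix}
\begin{pmatrix}\boldsymbol\Sigma_{dd}&0\\0&\mathbf S_{e\mid d}\end{pmatrix}
\begin{pmatrix}\mathbf I&\boldsymbol\Sigma_{dd}^{-1}\boldsymbol\Sigma_{de}\\0&\mathbf I\end{pmatrix}
\]
gives
\[
\boldsymbol\Delta_\lambda^\top\boldsymbol\Sigma_\lambda^{-1}\boldsymbol\Delta_\lambda
=\boldsymbol\Delta_d^\top\boldsymbol\Sigma_{dd}^{-1}\boldsymbol\Delta_d+\boldsymbol\delta_{e\mid d}^\top\mathbf S_{e\mid d}^{-1}\boldsymbol\delta_{e\mid d}.
\]
This identifies \(\mathrm{Res}_{e\mid d}(\lambda)=\boldsymbol\delta_{e\mid d}^\top\mathbf S_{e\mid d}^{-1}\boldsymbol\delta_{e\mid d}\). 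Probabilistically, this is the chain rule for KL: the divergence of the joint law equals the divergence of the marginal of \(\mathbf z_d\) plus the divergence of the conditional law of \(\mathbf z_e\) given \(\mathbf z_d\). That conditional is Gaussian with covariance \(\mathbf S_{e\mid d}\) and a mean shift \(\boldsymbol\delta_{e\mid d}\) that does not depend on \(\mathbf z_d\). I would state this interpretation alongside the algebra.

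Nonnegativity follows from \(\mathbf S_{e\mid d}^{-1}\succ0\). For the equality case, \(\mathrm{Res}_{e\mid d}=0\) holds iff \(\boldsymbol\delta_{e\mid d}=0\), i.e.\ iff \(\boldsymbol\Delta_e=\boldsymbol\Sigma_{ed}\boldsymbol\Sigma_{dd}^{-1}\boldsymbol\Delta_d\). This is the formal meaning of ``the encoder shift is fully explained by the decoder shift'': the encoder shift equals its best linear prediction from the decoder shift under the ID covariance. The inequality \(\mathrm{Sep}_{\mathrm{pair}}\ge\mathrm{Sep}_{\mathrm{dec}}\) is then immediate.

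The main obstacle is not technical but definitional. The statement leaves \(\mathrm{Sep}\) and ``fully explained'' implicit, so the proof must fix these definitions as above. The only technical point is the positive-definiteness assumption. If \(\boldsymbol\Sigma_\lambda\) were singular, one would instead use pseudo-inverses and require \(\boldsymbol\Delta_\lambda\in\mathrm{range}(\boldsymbol\Sigma_\lambda)\), and I would note this as a remark rather than handle it in the main argument.
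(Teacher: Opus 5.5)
Your proposal is correct and follows the paper's proof: both take $\mathrm{Sep}$ to be the Mahalanobis quadratic forms and invert $\boldsymbol\Sigma_\lambda$ through the Schur complement $\boldsymbol\Sigma_{e\mid d,\lambda}$ (your $LDL^\top$ factorization is a tidier way of applying the same block-inverse formula). Both then read off the residual $(\boldsymbol\Delta_{e}-\boldsymbol\Sigma_{ed}\boldsymbol\Sigma_{dd}^{-1}\boldsymbol\Delta_{d})^\top\boldsymbol\Sigma_{e\mid d}^{-1}(\boldsymbol\Delta_{e}-\boldsymbol\Sigma_{ed}\boldsymbol\Sigma_{dd}^{-1}\boldsymbol\Delta_{d})$, with nonnegativity and the equality case following from $\boldsymbol\Sigma_{e\mid d}\succ 0$; your KL chain-rule reading and explicit $\boldsymbol\Sigma_\lambda\succ 0$ assumption are useful clarifications that the paper leaves implicit.
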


Thus, the local model supports treating the late decoder as the primary sparse probe, while viewing the encoder as a complementary source of residual information.

\begin{proposition}[Low-noise corruption stability]
\label{prop:low_noise_stability_main}
Let
\[
\mathbf x_\lambda=a(\lambda)\mathbf x_0+b(\lambda)\boldsymbol{\varepsilon},
\qquad
\boldsymbol{\varepsilon}\sim\mathcal N(\mathbf 0,\mathbf I),
\qquad
\mathbf z_{\lambda,h}(\mathbf x_0)
=
\boldsymbol{\phi}_{\lambda,h}(\mathbf x_\lambda).
\]
Assume that \(\boldsymbol{\phi}_{\lambda,h}\) admits a first-order mean-square expansion around
\(a(\lambda)\mathbf x_0\), with Jacobian
\[
\mathbf J_{\lambda,h}(\mathbf x_0)
:=
\nabla_{\mathbf x}\boldsymbol{\phi}_{\lambda,h}(\mathbf x)
\big|_{\mathbf x=a(\lambda)\mathbf x_0}.
\]
Then, conditionally on \(\mathbf x_0\),
\begin{equation}
\mathbb E
\!\left[
\|
\mathbf z_{\lambda,h}(\mathbf x_0)
-
\boldsymbol{\phi}_{\lambda,h}(a(\lambda)\mathbf x_0)
\|_2^2
\,\middle|\,
\mathbf x_0
\right]
=
b(\lambda)^2
\operatorname{tr}
\!\left(
\mathbf J_{\lambda,h}(\mathbf x_0)
\mathbf J_{\lambda,h}(\mathbf x_0)^\top
\right)
+
o(b(\lambda)^2).
\label{eq:low_noise_stability_main}
\end{equation}
Thus, the within-image corruption variance of the hooked representation is first-order proportional to \(b(\lambda)^2\).
\end{proposition}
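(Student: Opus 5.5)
We condition on \(\mathbf x_0\) throughout, so the only randomness is \(\boldsymbol\varepsilon\sim\mathcal N(\mathbf 0,\mathbf I)\). We write \(\mathbf y:=a(\lambda)\mathbf x_0\), so that \(\mathbf x_\lambda=\mathbf y+b(\lambda)\boldsymbol\varepsilon\), and \(\mathbf J:=\mathbf J_{\lambda,h}(\mathbf x_0)\). We read the assumed first-order mean-square expansion as
\[
\boldsymbol\phi_{\lambda,h}(\mathbf y+b\boldsymbol\varepsilon)=\boldsymbol\phi_{\lambda,h}(\mathbf y)+b\,\mathbf J\boldsymbol\varepsilon+\mathbf r_b(\boldsymbol\varepsilon),
\qquad
\mathbb E\big[\|\mathbf r_b(\boldsymbol\varepsilon)\|_2^2\,\big|\,\mathbf x_0\big]=o(b^2)\ \text{as } b=b(\lambda)\to 0.
\]
This is the precise meaning we adopt. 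If only a pointwise Taylor expansion were available, we would add a domination or uniform-\(C^2\) condition so that the remainder is controlled in \(L^2\). That passage from pointwise to mean-square control is the one genuinely delicate point, and fixing the assumption in this form disposes of it.

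With this expansion, the deviation is \(\mathbf z_{\lambda,h}(\mathbf x_0)-\boldsymbol\phi_{\lambda,h}(\mathbf y)=b\mathbf J\boldsymbol\varepsilon+\mathbf r_b\). Expanding the squared norm gives
\[
\mathbb E\|b\mathbf J\boldsymbol\varepsilon+\mathbf r_b\|^2
= b^2\,\mathbb E\|\mathbf J\boldsymbol\varepsilon\|^2
+2b\,\mathbb E\langle \mathbf J\boldsymbol\varepsilon,\mathbf r_b\rangle
+\mathbb E\|\mathbf r_b\|^2.
\]

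We then bound the three terms in turn.

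\textbf{Leading term.} Since \(\mathbf J\) is deterministic given \(\mathbf x_0\) and \(\mathbb E[\boldsymbol\varepsilon\boldsymbol\varepsilon^\top]=\mathbf I\),
\[
\mathbb E\|\mathbf J\boldsymbol\varepsilon\|^2=\mathbb E\operatorname{tr}(\mathbf J\boldsymbol\varepsilon\boldsymbol\varepsilon^\top\mathbf J^\top)=\operatorname{tr}(\mathbf J\mathbf J^\top).
\]

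\textbf{Remainder term.} \(\mathbb E\|\mathbf r_b\|^2=o(b^2)\) holds by assumption.

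\textbf{Cross term.} By Cauchy--Schwarz,
\[
|2b\,\mathbb E\langle \mathbf J\boldsymbol\varepsilon,\mathbf r_b\rangle|\le 2b\,\big(\operatorname{tr}(\mathbf J\mathbf J^\top)\big)^{1/2}\big(\mathbb E\|\mathbf r_b\|^2\big)^{1/2}=2b\cdot O(1)\cdot o(b)=o(b^2).
\]
This uses the finiteness of \(\operatorname{tr}(\mathbf J\mathbf J^\top)\), which is implicit in the existence of the Jacobian.

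Summing the three contributions yields Eq.~\eqref{eq:low_noise_stability_main}.

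For the closing interpretation, note that \(\mathbb E[\mathbf J\boldsymbol\varepsilon]=\mathbf 0\). Combined with the same Cauchy--Schwarz control, this shows the conditional mean of \(\mathbf z_{\lambda,h}\) differs from \(\boldsymbol\phi_{\lambda,h}(\mathbf y)\) by only \(o(b)\). Hence the conditional covariance trace, which is the within-image corruption variance, also equals \(b^2\operatorname{tr}(\mathbf J\mathbf J^\top)+o(b^2)\), i.e.\ it is first-order proportional to \(b(\lambda)^2\).
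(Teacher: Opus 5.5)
Your argument is correct and matches the paper's own proof: write the deviation as $b(\lambda)\mathbf J\boldsymbol\varepsilon+\mathbf r$ using the mean-square expansion, then apply $\mathbb E\|\mathbf J\boldsymbol\varepsilon\|_2^2=\operatorname{tr}(\mathbf J\mathbf J^\top)$. You are in fact more careful than the paper, which silently absorbs the cross term $2b\,\mathbb E\langle\mathbf J\boldsymbol\varepsilon,\mathbf r\rangle$ into $o(b^2)$ and never justifies the variance reading that your $o(b)$ mean-shift bound supplies (your Cauchy--Schwarz step, like the paper, implicitly needs $\operatorname{tr}(\mathbf J_{\lambda,h}\mathbf J_{\lambda,h}^\top)$ to stay $O(1)$ as $b(\lambda)\to0$).
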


Finally, Appendix~\ref{app:theory_deferred_results} shows that the oracle diagonal score has mean \(1\) under \(H_0\) and \(1+\kappa_\lambda/d_\lambda\) under \(H_1\), where
\[
\mathbf D_\lambda=\operatorname{diag}(\boldsymbol{\Sigma}_\lambda),
\qquad
\kappa_\lambda
=
\boldsymbol{\Delta}_\lambda^\top
\mathbf D_\lambda^{-1}
\boldsymbol{\Delta}_\lambda,
\qquad
d_\lambda=\dim(\mathbf z_\lambda).
\]
These results motivate two diagnostics:
\[
\hat\kappa_\lambda(\mathcal S),
\qquad
\hat R_h(\lambda),
\]
corresponding to the estimated diagonal separation and content-to-instability ratio. Their operational estimators are introduced in the next subsection, while their formal analysis and supplementary interpretation are deferred to Appendix~\ref{app:theory}.

\subsection{Hook selection and lightweight scoring}

We hook \emph{block outputs} rather than arbitrary leaf submodules. This improves portability across architectures and avoids unstable low-level activations. The local theory suggests two priorities: late decoder states are the primary candidates, and a deep encoder hook, when used, should be interpreted as complementary residual information.

Within each structural region, we refine the final choice with a small ID-only proxy. For candidate module \(m\), let \(\mathbf z_{i,r}^{(m)}\) be its pooled feature for image \(i\) under corruption draw \(r\), averaged over the selected canonical levels, and let
\[
\bar{\mathbf z}_i^{(m)}
=
\frac{1}{R}\sum_{r=1}^R \mathbf z_{i,r}^{(m)}.
\]
We estimate
\[
\widehat{\mathbf C}_{\mathrm{img}}^{(m)}
=
\widehat{\operatorname{Cov}}_i
(\bar{\mathbf z}_i^{(m)}),
\qquad
\widehat{\mathbf C}_{\mathrm{corr}}^{(m)}
=
N^{-1}\sum_{i=1}^N
\widehat{\operatorname{Cov}}_r
(\mathbf z_{i,r}^{(m)}),
\]
and score the candidate by
\begin{equation}
\mathrm{Proxy}(m)
=
\frac{
\operatorname{tr}\widehat{\mathbf C}_{\mathrm{img}}^{(m)}
}{
\operatorname{tr}\widehat{\mathbf C}_{\mathrm{corr}}^{(m)}
}.
\label{eq:hook_proxy_main}
\end{equation}
This favors modules that vary across ID images while remaining stable under stochastic corruption at fixed content.

For a selected slot \((k,\ell)\), let
\[
\mathbf F_{k,\ell}(\mathbf x_0)
\in
\mathbb R^{C_{k,\ell}\times H_{k,\ell}\times W_{k,\ell}}
\]
denote the hooked feature map. We use the pooled descriptor
\begin{equation}
\mathbf z_{k,\ell}(\mathbf x_0)
=
[
\operatorname{Mean}_{\mathrm{sp}}(\mathbf F_{k,\ell}(\mathbf x_0))^\top,
\operatorname{Std}_{\mathrm{sp}}(\mathbf F_{k,\ell}(\mathbf x_0))^\top
]^\top
\in
\mathbb R^{D_{k,\ell}},
\label{eq:pooled_feature_new}
\end{equation}
where \(\operatorname{Mean}_{\mathrm{sp}}\) and \(\operatorname{Std}_{\mathrm{sp}}\) are channel-wise spatial mean and standard deviation. Using only ID-train data, we fit diagonal statistics
\((\hat{\boldsymbol{\mu}}_{k,\ell},\hat{\mathbf v}_{k,\ell})\). The slot score is
\begin{equation}
s_{k,\ell}(\mathbf x_0)
=
\frac{1}{D_{k,\ell}}
\sum_{j=1}^{D_{k,\ell}}
\frac{
\left(
\mathbf z_{k,\ell}^{(j)}(\mathbf x_0)
-
\hat{\boldsymbol{\mu}}_{k,\ell}^{(j)}
\right)^2
}{
\hat{\mathbf v}_{k,\ell}^{(j)}
},
\label{eq:slot_score_new}
\end{equation}
and the final score is
\begin{equation}
S_{\cfs}(\mathbf x_0)
=
\frac{1}{|\mathcal S|}
\sum_{(k,\ell)\in\mathcal S}
s_{k,\ell}(\mathbf x_0).
\label{eq:cfs_score_new}
\end{equation}

We intentionally use a lightweight diagonal score, also referred to as an ID-only score or ID-only proxy: \cfs\ is meant to test the quality of sparse frozen representations, not to rely on a high-capacity downstream classifier. While stronger downstream heads may improve absolute performance, they would partially confound the representation-quality interpretation targeted by \mbe. Alternative heads are therefore reported only as analysis in Appendix~\ref{app:ablation_heads}. For the local diagnostics, we estimate
\[
\hat\kappa_{\lambda}(\mathcal S)
=
\hat{\boldsymbol{\Delta}}_{\lambda,\mathcal S}^{\top}
\hat{\mathbf D}_{\lambda,\mathcal S}^{-1}
\hat{\boldsymbol{\Delta}}_{\lambda,\mathcal S},
\qquad
\hat R_h(\lambda)
=
\frac{
\operatorname{tr}\widehat{\mathbf C}_{\mathrm{img}}^{(h,\lambda)}
}{
\operatorname{tr}\widehat{\mathbf C}_{\mathrm{corr}}^{(h,\lambda)}
}.
\]
These diagnostics are not used to fit the detector; they only test whether sparse hooks and canonical levels behave as predicted by the local theory.

\section{Experiments}
\label{sec:experiments}

\subsection{Setup}

Our main benchmark is CIFAR-scale, where protocol control is the cleanest. We use
\begin{align}
\mathcal{I}_{\mathrm{small}}& =\{\text{CIFAR-10},\text{SVHN},\text{CelebA32}\}\, , \nonumber \\
\mathcal{O}_{\mathrm{small}}& =\{\text{CIFAR-10},\text{SVHN},\text{CelebA32},\text{CIFAR-100},\text{DTD}\}\,. \nonumber
\end{align}
For each \(i\in\mathcal{I}_{\mathrm{small}}\), all datasets in \(\mathcal{O}_{\mathrm{small}}\setminus\{i\}\) are treated as OOD, yielding \(12\) ID\(\to\)OOD pairs per backbone.

We evaluate two diffusion backbone families through the same adapter and canonical corruption interface: improved-diffusion backbones with discrete timesteps and native \(\eps\)-prediction, and EDM-family backbones with continuous noise conditioning and one-shot \(\hat x_0\) estimation. In the main benchmark, all methods use a shared CIFAR-10 checkpoint family; we then repeat the same benchmark with a shared CelebA32 checkpoint family for source-family robustness. Larger-scale ImageNet transfer results are deferred to the Appendix~\ref{app:large_scale}.

\subsection{Baselines and metrics}

All baselines are evaluated under the shared \mbe\ pipeline. Our main comparators are \msma\ \citep{mahmood2021msma}, \diffpath\ \citep{heng2024diffpath}, \ddpmood\ \citep{graham2023ddpmodd}, and \gepc\ \citep{rouzoumka2026gepc}.  Detailed baseline taxonomy and implementation choices are deferred to Appendix~\ref{app:protocols} and Appendix~\ref{app:taxonomy_baselines}. 



Unless stated otherwise, the main paper operating point is \(\cfs(1{\times}2)\), a one-level two-hook variant using paired encoder and decoder snapshots at the same low-noise canonical level. We also report \(\cfs_{\mathrm{dec}}(1{\times}1)\), a decoder-only companion. Richer variants such as \(\cfs(2\times2)\) and \(\cfs(2\times4)\) are studied in the Appendix~\ref{app:ablation_budget}.

All methods output OOD-high scores. Our primary metric is AUROC; full FPR95 breakdowns are reported in the appendix. Across pairs, we summarize performance by
\begin{align}
\AvgAUROC_{m,b}
&=
\frac{1}{|\mathcal{P}|}\sum_{p\in\mathcal{P}}\mathrm{AUROC}_{m,b}(p),\\
\AvgWorstAUROC_{m,b}
&=
\frac{1}{|\mathcal{I}|}\sum_{i\in\mathcal{I}}
\min_{o\in\mathcal{O}(i)}
\mathrm{AUROC}_{m,b}(i\to o),
\end{align}
and report logical test-time \(\Cost_m=\#F_m+\#J_m\) (with \(\#J_m=0\) for all main-paper comparisons).


Appendix~\ref{app:arch_transfer} provides an architecture-transfer sanity check on U-ViT, where CFS is applied to early/middle/late transformer block snapshots rather than U-Net encoder/decoder maps.


\section{Results}
\label{sec:results}




\textbf{Claim scope.}
Our claim is strictly about controlled comparison under \mbe: when source-family policy, canonical corruption semantics, and logical test-time cost are matched, sparse native snapshot probing is stronger than the output-space alternatives we evaluate at comparable or lower logical cost.

\subsection{Main CIFAR-scale comparison under \mbe}

Table~\ref{tab:main_cifar_ckpt} reports the main controlled CIFAR-scale comparison under \mbe.

\begin{table*}[t]
\centering
\caption{\textbf{Main comparison under \mbe\ on the CIFAR-scale benchmark.} IDs: CIFAR-10, SVHN, CelebA32. OODs: the remaining datasets among \{CIFAR-10, SVHN, CelebA32, CIFAR-100, DTD\}, yielding \(12\) pairs per backbone. All methods use the same source-family policy, preprocessing, canonical corruption construction, and split policy. Best and second-best results are highlighted in bold and underline, respectively. Complementary results are available in Appendix~\ref{app:cifar}.}
\label{tab:main_cifar_ckpt}
\small
\setlength{\tabcolsep}{4pt}
\resizebox{\textwidth}{!}{%
\begin{tabular}{lccccc}
\toprule
Method & Backbone & \(\AvgAUROC \uparrow\) & \(\AvgWorstAUROC \uparrow\) & \(\#F\)/img \(\downarrow\) & Notes \\
\midrule
\msma\ & improved & 0.792 & 0.688 & 10  & multiscale output descriptor \\
\msma\ & EDM      & 0.796 & 0.689 & 10  & multiscale output descriptor \\
\addlinespace
\diffpath\ & improved & 0.778 & 0.641 & 10  & recursive path statistic \\
\diffpath\ & EDM      & 0.792 & 0.635 & 10  & recursive path statistic \\
\addlinespace
\ddpmood\ & improved & 0.550 & 0.316 & 364 & reconstruction / manifold \\
\ddpmood\ & EDM      & 0.559 & 0.320 & 364 & reconstruction / manifold \\
\addlinespace
\gepc\ & improved & 0.616 & 0.546 & \second{8} & output-space consistency \\
\gepc\ & EDM      & 0.774 & 0.600 & \second{8} & output-space consistency \\
\midrule
\textbf{\(\cfs_{\mathrm{dec}}(1{\times}1)\)} & improved & \second{0.886} & \second{0.793} & \best{1} & single snapshot at low noise \\
\textbf{\(\cfs_{\mathrm{dec}}(1{\times}1)\)} & EDM      & \best{0.919} & \second{0.809} & \best{1} & single snapshot at low noise \\
\textbf{\(\cfs(1{\times}2)\)}     & improved & \best{0.887} & \best{0.799} & \best{1} & paired snapshots at low noise \\
\textbf{\(\cfs(1{\times}2)\)}     & EDM      & \second{0.916} & \best{0.814} & \best{1} & paired snapshots at low noise \\
\bottomrule
\end{tabular}}
\end{table*}


At a one-forward budget, \(\cfs(1{\times}2)\) is the strongest operating point on both backbone families. Relative to the strongest harmonized output-space baselines, it improves AvgWorstAUROC from \(0.689\) to \(0.814\) on EDM and from \(0.688\) to \(0.799\) on improved-diffusion, while reducing logical cost from \(8\)--\(10\) forwards per image to \(1\). At the same time, \(\cfs_{\mathrm{dec}}(1{\times}1)\) remains extremely close, showing that the representation-space gain is not only strong but also highly compressible. Budget-matched ablations are reported in Appendix~\ref{app:ablation_budget}.

DLSR is the closest prior in spirit, but it lies outside our MBE protocol because it introduces an additional learned feature-reconstruction module. Since DLSR is only available on its native published evaluation pairs, we report a comparison on that DLSR-native subset in Table~\ref{tab:dlsr_overlap} of Appendix~\ref{app:dlsr_positioning}.

\subsection{Theory diagnostics: measurable quantities predict sparse-probe quality}
\label{sec:theory_diagnostics_results}

We next test whether the local-testing theory yields measurable quantities that predict sparse-probe quality. It does (see Table~\ref{tab:theory_diag_summary}).

First, the diagonal noncentrality diagnostic \(\hat\kappa_\lambda(\mathcal S)/d\) is strongly aligned with downstream OOD performance across both backbone families: candidate sparse probes with larger estimated diagonal separation consistently yield larger AUROC. This supports the interpretation of the diagonal score as a local detector. Representative diagnostic scatter plots for \(\hat\kappa_\lambda(\mathcal S)/d\) versus AUROC are deferred to Appendix~\ref{app:empirical_diagnostic_protocol}. 

Second, the content-to-instability ratio \(\hat R_h(\lambda)\) is also strongly predictive of hook quality. Moreover, the within-image corruption variance decreases sharply as \(b(\lambda)^2\) decreases, in agreement with Proposition~\ref{prop:low_noise_stability_main}, supporting the low-noise advantage. Appendix~\ref{app:empirical_diagnostic_protocol} reports the complementary \(\hat R_h(\lambda)\) and low-noise stability plots.


\begin{table}[t]
\centering
\caption{
\textbf{Empirical validation of the measurable theory diagnostics.}
Spearman correlations are computed across candidate sparse probes and ID\(\to\)OOD pairs.
The diagnostics are not used to fit the detector; they only test whether the measured behavior of sparse hooks and canonical levels agrees with the local theory.
}
\label{tab:theory_diag_summary}
\small
\setlength{\tabcolsep}{4pt}
\begin{tabular}{llcc}
\toprule
Diagnostic & Backbone & Spearman \(\rho\)\\ 
\midrule
\(\hat\kappa_\lambda(\mathcal S)/d\) vs AUROC & improved & 0.923 \\ 
\(\hat\kappa_\lambda(\mathcal S)/d\) vs AUROC & EDM      & 0.938 \\ 
\addlinespace
\(\hat R_h(\lambda)\) vs Avg AUROC            & improved & 0.862 \\ 
\(\hat R_h(\lambda)\) vs Avg AUROC            & EDM      & 0.760 \\ 
\addlinespace
\bottomrule
\end{tabular}
\end{table}

\subsection{Focused ablations}


The appendix addresses three narrower questions: whether the gain is explained by hidden test-time budget in Appendix~\ref{app:ablation_budget}, whether the gain is driven by head complexity rather than representation quality (see Appendix~\ref{app:ablation_repr_comp}), and whether the ranking is stable across stochastic seeds through Appendix~\ref{app:ablation_seeds}. Across all cases, the central conclusion is unchanged: the gain comes primarily from \emph{where} the frozen backbone is probed.

\noindent\textbf{Hook-selection robustness.}
The ID-only hook-selection proxy does not need to recover the exact oracle pair to be effective. On improved-diffusion, the selected pair reaches \(0.886\) AvgAUROC versus \(0.895\) for the best admissible pair, while on EDM, the pairwise landscape exhibits a broad plateau of strong pairs. This indicates that \(\cfs(1{\times}2)\) is not driven by brittle hook cherry-picking; see Appendix~\ref{app:ablation_hooks}.

\noindent\textbf{External positioning outside \mbe.}
Table~\ref{tab:external_positioning_mini} provides a non-protocol-matched positioning against previously reported diffusion-OOD results. These comparisons are included for external positioning. 

\begin{table}[t]
\centering
\caption{\textbf{External positioning outside \mbe\ (single-checkpoint setting only).} Prior-method results are taken from the respective papers or officially reported artifacts.
Full results are in Appendix~\ref{app:external_positioning}.}
\label{tab:external_positioning_mini}
\small
\begin{tabular}{lcc}
\toprule
Method & Avg. AUROC $\uparrow$ & Cost $\downarrow$ \\
\midrule
SCOPED-CelebA & 0.892 & 2F+2J \\
GEPC-CelebA & 0.910 & 8F \\
DiffPath-6D-CelebA & 0.931 & 10F \\
DiffPath-6D-ImageNet & 0.850 & 10F \\
\textbf{\(\cfs(1{\times}2)\)-CelebA (ours)} & \second{0.935} & \best{1F} \\
\textbf{\(\cfs(1{\times}2)\)-ImageNet (ours)} & \best{0.962} & \best{1F} \\
\bottomrule
\end{tabular}
\end{table}

\subsection{Source-family robustness}

In Table~\ref{tab:source_family_summary}, we repeat the same CIFAR-scale benchmark with a shared CelebA32 source family to test whether the ranking persists after changing the frozen source representation.

\begin{table}[t]
\centering
\caption{\textbf{Source-family robustness on the CIFAR-scale benchmark.} Same protocol as Table~\ref{tab:main_cifar_ckpt}, but with a shared CelebA32 checkpoint family.  More experiments are exposed in Appendix~\ref{app:source_family_full}.}
\label{tab:source_family_summary}
\footnotesize
\setlength{\tabcolsep}{3.4pt}
\renewcommand{\arraystretch}{1.03}
\begin{tabular}{lccccc}
\toprule
& \multicolumn{2}{c}{Improved} & \multicolumn{2}{c}{EDM} & \multirow{2}{*}{\(\#F\)/img \(\downarrow\)} \\
\cmidrule(lr){2-3}\cmidrule(lr){4-5}
Method & \(\AvgAUROC \uparrow\) & \(\AvgWorstAUROC \uparrow\) & \(\AvgAUROC \uparrow\) & \(\AvgWorstAUROC \uparrow\) & \\
\midrule
\msma\                  & 0.881 & 0.808 & 0.790 & 0.673 & 10 \\
\diffpath\              & 0.829 & 0.743 & 0.776 & 0.650 & 10 \\
\ddpmood\               & 0.581 & 0.362 & 0.575 & 0.325 & 364 \\
\gepc\                  & 0.749 & 0.650 & 0.745 & 0.648 & \second{8} \\
\midrule
\textbf{\(\cfs_{\mathrm{dec}}(1{\times}1)\)} & \second{0.907} & \second{0.823} & \second{0.914} & \second{0.827} & \best{1} \\
\textbf{\(\cfs(1{\times}2)\)}     & \best{0.908}   & \best{0.827}   & \best{0.928}   & \best{0.846}   & \best{1} \\
\bottomrule
\end{tabular}
\end{table}

The ordering remains essentially unchanged under both source families: \(\cfs(1{\times}2)\) remains best on both backbone families, and \(\cfs_{\mathrm{dec}}(1{\times}1)\) remains a near-tied one-forward companion. This argues against a checkpoint-family artifact and supports the probing-space interpretation. 

\section{Discussion and Limitations}
\label{sec:discussion}

Once protocol confounders are removed, the main bottleneck is not simply \emph{more output-space engineering}, but \emph{where} the frozen diffusion backbone is probed. Our results suggest that a small number of native internal snapshots can retain relative-OOD geometry that output-space summaries partly attenuate. In this sense, \cfs\ should be read not only as a score family, but as evidence that, under controlled \mbe\ comparison, a representation-first probing strategy can be more informative than broader output-space summaries.
Across controlled comparisons, \(\cfs(1{\times}2)\) is the strongest one-forward operating point, while the sparse \(\cfs_{\mathrm{dec}}(1{\times}1)\) remains remarkably competitive, showing that a large fraction of the useful signal is already concentrated in a single late decoder snapshot.

\textbf{Limitations.}
\cfs\ requires internal access to the diffusion backbone and is therefore less black-box than output-space scores. Its portability is structural rather than representational: canonical levels can be aligned across backbones, but internal features remain architecture and source-dependent. Hard multimodal ID regimes such as CIFAR-10 remain challenging. Finally, our theory is intentionally local rather than universal. 


\section{Conclusion}
\label{sec:conclusion}
We introduced \mbe, a protocol for fair cross-backbone diffusion OOD evaluation, and \cfs, a family of minimal representation-space detectors based on sparse native internal snapshots. Under \mbe, the strongest one-forward operating point in our main controlled comparisons is \(\cfs(1{\times}2)\), while the even sparser \(\cfs_{\mathrm{dec}}(1{\times}1)\) remains remarkably competitive. More broadly, once protocol confounders are removed, strong relative diffusion OOD detection can already be obtained far upstream of elaborate output-space probes: a tiny number of sparse, canonically aligned internal snapshots captures a large fraction of the useful relative-OOD geometry.

\bibliographystyle{plainnat}
\bibliography{refs}

\appendix

\section{Appendix Roadmap}
\label{app:roadmap}

This appendix supports the paper's central claim: under a backbone-equated protocol, a frozen diffusion checkpoint can be used as a relative-OOD representation map, and a tiny number of sparse low-noise internal snapshots already captures strong useful signal.

It is organized as follows:
\begin{itemize}
\item Section~\ref{app:theory}: proofs and supplementary validations for the main-paper theory, including conditional encoder-decoder complementarity, low-noise corruption stability, diagonal-score separation, canonical-level matching, and the interpretation of the hook-selection proxy.
\item Sections~\ref{app:protocols}--\ref{app:taxonomy_baselines}: shared protocol, canonicalization, logical cost accounting, and harmonized baseline implementations.
\item Section~\ref{app:ablations}: focused ablations testing budget, hook robustness, canonical-level robustness, pooling, head choice, bank size, and seed stability.
\item Sections~\ref{app:cifar} and \ref{app:source_family_full}: full CIFAR-scale pairwise results under the primary and alternative source-family policies.
\item Section~\ref{app:external_positioning}: external positioning against prior reported diffusion-based CIFAR-scale results outside the controlled \mbe\ protocol.
\item Section~\ref{app:large_scale}: checkpoint-controlled large-scale results on ImageNet200 and ImageNet1K using a single official ImageNet-64 improved-diffusion backbone.
\item Section~\ref{app:impl}: implementation and configuration details, including the main \cfs\ hyperparameters, ID-only head configurations, determinism settings, profiling protocol, and representative compute-resource reporting.
\item Section~\ref{app:broader_impact}: broader impact and responsible-use discussion.
\end{itemize}

Unless noted otherwise, appendix ablations are centered on \cfs{}$(1\times2)$, the primary main-paper operating point. We retain \(\cfs_{\mathrm{dec}}(1{\times}1)\) only in targeted controls where the compression result is directly relevant.



\section{Theory Appendix}
\label{app:theory}

This appendix supplies the formal statements, proofs, and supplementary validations supporting the main paper theory. The main text relies directly on two load-bearing claims: conditional encoder-decoder complementarity and low-noise corruption stability. We additionally formalize here the diagonal-score separation result deferred from the main text,
a canonical-level stability bound under discretization, and a formal interpretation of the hook-selection proxy.

\subsection{Notation}
\label{app:theory_notation}

Fix a frozen diffusion backbone and evaluation domains \(P\) and \(Q\), corresponding to the
relative ID and OOD distributions. For a canonical corruption level \(\lambda\),
\[
\mathbf x_\lambda
=
a(\lambda)\, \mathbf x_0
+
b(\lambda)\,\boldsymbol{\varepsilon}\,,
\qquad
\boldsymbol{\varepsilon}\sim\mathcal N(\mathbf 0,\mathbf I)\,.
\]
For a selected late decoder hook and a selected deep encoder hook,
\[
\mathbf z_\lambda(\mathbf x_0)
=
\begin{bmatrix}
\mathbf z_{d,\lambda}(\mathbf x_0)\\
\mathbf z_{e,\lambda}(\mathbf x_0)
\end{bmatrix}\,,
\]
and the local model is
\[
\mathbf z_\lambda(\mathbf x_0)\mid H_0
\sim
\mathcal N(\boldsymbol{\mu}_\lambda,\boldsymbol{\Sigma}_\lambda)\,,
\qquad
\mathbf z_\lambda(\mathbf x_0)\mid H_1
\sim
\mathcal N(\boldsymbol{\mu}_\lambda+\boldsymbol{\Delta}_\lambda,\boldsymbol{\Sigma}_\lambda)\,,
\]
with \(H_0:\mathbf x_0\sim P\) and \(H_1:\mathbf x_0\sim Q\).

\subsection{Deferred statement from the main text}
\label{app:theory_deferred_results}

The following result is used in the interpretation of the diagonal \cfs\ score, but is deferred here to keep the main text focused.

Let
\[
\mathbf D_\lambda=\operatorname{diag}(\boldsymbol{\Sigma}_\lambda),
\qquad
d_\lambda=\dim(\mathbf z_\lambda),
\]
and consider the oracle diagonal score
\[
s^\circ_\lambda(\mathbf z)
:=
\frac{1}{d_\lambda}\,
(\mathbf z-\boldsymbol{\mu}_\lambda)^\top\,
\mathbf D_\lambda^{-1}\,
(\mathbf z-\boldsymbol{\mu}_\lambda)\,,
\qquad
\kappa_\lambda
:=
\boldsymbol{\Delta}_\lambda^\top\,
\mathbf D_\lambda^{-1}\,
\boldsymbol{\Delta}_\lambda\,.
\]

\begin{theorem}[Diagonal-score separation under the local model]
\label{thm:diag_score_app}
Under Eq.~\eqref{eq:local_model_main},
\[
\mathbb E_{H_0}[s^\circ_\lambda]=1\,,
\qquad
\mathbb E_{H_1}[s^\circ_\lambda]=1+\frac{\kappa_\lambda}{d_\lambda}\,.
\]
Moreover, the variance of \(s^\circ_\lambda\) is controlled by the correlation structure of
\(\boldsymbol{\Sigma}_\lambda\), yielding the detection-power bound proved below.
\end{theorem}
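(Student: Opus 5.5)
The plan is to compute the first two moments of the quadratic form directly, using the standard identities for Gaussian quadratic forms. Write \(\mathbf w=\mathbf z-\boldsymbol\mu_\lambda\). Under \(H_0\), \(\mathbf w\sim\mathcal N(\mathbf 0,\boldsymbol\Sigma_\lambda)\). Under \(H_1\), \(\mathbf w\sim\mathcal N(\boldsymbol\Delta_\lambda,\boldsymbol\Sigma_\lambda)\). The score is \(s^\circ_\lambda=d_\lambda^{-1}\mathbf w^\top\mathbf A\mathbf w\) with the symmetric weight \(\mathbf A=\mathbf D_\lambda^{-1}\). For any \(\mathbf w\) with mean \(\mathbf m\) and covariance \(\boldsymbol\Sigma\), the linearity of trace and expectation gives
\[
\mathbb E[\mathbf w^\top\mathbf A\mathbf w]=\operatorname{tr}(\mathbf A\boldsymbol\Sigma)+\mathbf m^\top\mathbf A\mathbf m .
\]
This uses no Gaussianity, only the second moment.

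For the means, take \(\mathbf A=\mathbf D_\lambda^{-1}\). Then
\[
\operatorname{tr}(\mathbf D_\lambda^{-1}\boldsymbol\Sigma_\lambda)=\sum_j \Sigma_{jj}/\Sigma_{jj}=d_\lambda ,
\]
because \(\mathbf D_\lambda\) holds exactly the diagonal of \(\boldsymbol\Sigma_\lambda\). With \(\mathbf m=\mathbf 0\) this gives \(\mathbb E_{H_0}[s^\circ_\lambda]=1\). With \(\mathbf m=\boldsymbol\Delta_\lambda\) the extra term is \(\kappa_\lambda\), so \(\mathbb E_{H_1}[s^\circ_\lambda]=1+\kappa_\lambda/d_\lambda\). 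We only need \(\Sigma_{jj}>0\), which is implicit in defining \(\mathbf D_\lambda^{-1}\).

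For the variance, I use the Gaussian identity
\[
\operatorname{Var}(\mathbf w^\top\mathbf A\mathbf w)=2\operatorname{tr}\bigl((\mathbf A\boldsymbol\Sigma)^2\bigr)+4\,\mathbf m^\top\mathbf A\boldsymbol\Sigma\mathbf A\mathbf m .
\]
Setting \(\mathbf R_\lambda=\mathbf D_\lambda^{-1/2}\boldsymbol\Sigma_\lambda\mathbf D_\lambda^{-1/2}\), the correlation matrix, gives
\[
\operatorname{Var}_{H_0}=\frac{2\|\mathbf R_\lambda\|_F^2}{d_\lambda^2},\qquad
\operatorname{Var}_{H_1}=\frac{2\|\mathbf R_\lambda\|_F^2+4\,\tilde{\boldsymbol\Delta}^\top\mathbf R_\lambda\tilde{\boldsymbol\Delta}}{d_\lambda^2},
\]
where \(\tilde{\boldsymbol\Delta}=\mathbf D_\lambda^{-1/2}\boldsymbol\Delta_\lambda\). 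This is the precise sense in which the variance is ``controlled by the correlation structure''. When \(\mathbf R_\lambda=\mathbf I\), \(\|\mathbf R_\lambda\|_F^2=d_\lambda\), which recovers the \(\chi^2\) scaling.

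For the detection-power bound, I threshold at \(\tau=1+\kappa_\lambda/(2d_\lambda)\) and apply Chebyshev's inequality under each hypothesis. The false-alarm probability is then at most \(8\|\mathbf R_\lambda\|_F^2/\kappa_\lambda^2\). The miss probability is at most \(4\bigl(2\|\mathbf R_\lambda\|_F^2+4\|\mathbf R_\lambda\|_{\mathrm{op}}\kappa_\lambda\bigr)/\kappa_\lambda^2\), using \(\tilde{\boldsymbol\Delta}^\top\mathbf R_\lambda\tilde{\boldsymbol\Delta}\le\|\mathbf R_\lambda\|_{\mathrm{op}}\kappa_\lambda\). Both error probabilities therefore vanish once \(\kappa_\lambda\gg\|\mathbf R_\lambda\|_F\) and \(\kappa_\lambda\gg\|\mathbf R_\lambda\|_{\mathrm{op}}\).

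The expectations are routine. The main obstacle is the variance formula. I would derive it by whitening, writing \(\mathbf w=\mathbf m+\boldsymbol\Sigma^{1/2}\mathbf g\) with \(\mathbf g\) standard Gaussian, and expanding the quadratic form. The cross term is linear in \(\mathbf g\), and the pure term is a Gaussian quadratic form whose variance is \(2\|\cdot\|_F^2\) by Isserlis' theorem. The cross term is uncorrelated with the pure term because all odd Gaussian moments vanish. The remaining work is stating the power bound cleanly: the exact form of the bound proved below is not given, so I commit to the Chebyshev version above.
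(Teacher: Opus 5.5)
Your proposal is correct. Your computation of the two means and the two variances is essentially the paper's own. The paper first whitens to \(\mathbf y=\mathbf D_\lambda^{-1/2}(\mathbf z-\boldsymbol\mu_\lambda)\), with \(\mathbf y\sim\mathcal N(\mathbf 0,\mathbf R_\lambda)\) under \(H_0\) and \(\mathcal N(\boldsymbol\delta_\lambda,\mathbf R_\lambda)\) under \(H_1\). It then reads off \(\operatorname{tr}(\mathbf R_\lambda)=d_\lambda\), \(\boldsymbol\delta_\lambda^\top\boldsymbol\delta_\lambda=\kappa_\lambda\) and \(2\operatorname{tr}(\mathbf R_\lambda^2)+4\boldsymbol\delta_\lambda^\top\mathbf R_\lambda\boldsymbol\delta_\lambda\). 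You keep the weight \(\mathbf A=\mathbf D_\lambda^{-1}\) and rewrite afterwards, which is the same calculation; your \(\tilde{\boldsymbol\Delta}\) is the paper's \(\boldsymbol\delta_\lambda\). You also sketch a justification of the Gaussian quadratic-form variance identity, which the paper simply quotes.

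Where you genuinely differ is the detection-power bound. The paper applies Cantelli's inequality under \(H_1\) alone, for an arbitrary threshold \(\tau<1+\kappa_\lambda/d_\lambda\), obtaining
\[
\Pr_{H_1}[s^\circ_\lambda>\tau]\ge 1-\frac{\operatorname{Var}_{H_1}(s^\circ_\lambda)}{\operatorname{Var}_{H_1}(s^\circ_\lambda)+(1+\kappa_\lambda/d_\lambda-\tau)^2}.
\]
It leaves \(\boldsymbol\delta_\lambda^\top\mathbf R_\lambda\boldsymbol\delta_\lambda\) unbounded and says nothing about false alarms. You instead fix the midpoint threshold and apply Chebyshev under both hypotheses.

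\textbf{What your route buys.} You get a false-alarm bound in addition to a miss bound. Using \(\boldsymbol\delta_\lambda^\top\mathbf R_\lambda\boldsymbol\delta_\lambda\le\|\mathbf R_\lambda\|_{\mathrm{op}}\kappa_\lambda\), you also get an explicit sufficient condition for vanishing error. This makes ``controlled by the correlation structure'' quantitative.

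\textbf{What the paper's route buys.} It holds for any \(\tau\), and its one-sided Cantelli constant is never worse than Chebyshev's.

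You can combine the two by using Cantelli on each side at your threshold. For example, the false-alarm probability is at most \(\operatorname{Var}_{H_0}(s^\circ_\lambda)/\bigl(\operatorname{Var}_{H_0}(s^\circ_\lambda)+\kappa_\lambda^2/(4d_\lambda^2)\bigr)\).

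A minor redundancy: since \(\|\mathbf R_\lambda\|_{\mathrm{op}}\le\|\mathbf R_\lambda\|_F\), your condition \(\kappa_\lambda\gg\|\mathbf R_\lambda\|_{\mathrm{op}}\) is already implied by \(\kappa_\lambda\gg\|\mathbf R_\lambda\|_F\).
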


\subsection{Proof of Theorem~\ref{thm:conditional_complementarity_main}}
\label{app:proof_conditional_complementarity}

\begin{proof}
Write
\[
\boldsymbol{\Delta}_\lambda
=
\begin{bmatrix}
\boldsymbol{\Delta}_{d,\lambda}\\
\boldsymbol{\Delta}_{e,\lambda}
\end{bmatrix}\,,
\qquad
\boldsymbol{\Sigma}_\lambda
=
\begin{bmatrix}
\boldsymbol{\Sigma}_{dd,\lambda} & \boldsymbol{\Sigma}_{de,\lambda}\\
\boldsymbol{\Sigma}_{ed,\lambda} & \boldsymbol{\Sigma}_{ee,\lambda}
\end{bmatrix}\,.
\]
The paired Mahalanobis separation is
\[
\mathrm{Sep}_{\mathrm{pair}}(\lambda)
=
\boldsymbol{\Delta}_\lambda^\top\,
\boldsymbol{\Sigma}_\lambda^{-1}\,
\boldsymbol{\Delta}_\lambda\,,
\]
while the decoder-only separation is
\[
\mathrm{Sep}_{\mathrm{dec}}(\lambda)
=
\boldsymbol{\Delta}_{d,\lambda}^\top\,
\boldsymbol{\Sigma}_{dd,\lambda}^{-1}\,
\boldsymbol{\Delta}_{d,\lambda}\,.
\]
Let
\[
\boldsymbol{\Sigma}_{e\mid d,\lambda}
=
\boldsymbol{\Sigma}_{ee,\lambda}
-
\boldsymbol{\Sigma}_{ed,\lambda}\,
\boldsymbol{\Sigma}_{dd,\lambda}^{-1}\,
\boldsymbol{\Sigma}_{de,\lambda}\, ,
\]
be the Schur complement of the decoder block. By the block inverse formula,
\[
\boldsymbol{\Sigma}_\lambda^{-1}
=
\begin{bmatrix}
\boldsymbol{\Sigma}_{dd,\lambda}^{-1}
+
\boldsymbol{\Sigma}_{dd,\lambda}^{-1}\,
\boldsymbol{\Sigma}_{de,\lambda}\,
\boldsymbol{\Sigma}_{e\mid d,\lambda}^{-1}\,
\boldsymbol{\Sigma}_{ed,\lambda}\,
\boldsymbol{\Sigma}_{dd,\lambda}^{-1}
&
-
\boldsymbol{\Sigma}_{dd,\lambda}^{-1}\,
\boldsymbol{\Sigma}_{de,\lambda}\,
\boldsymbol{\Sigma}_{e\mid d,\lambda}^{-1}
\\[0.25em]
-
\boldsymbol{\Sigma}_{e\mid d,\lambda}^{-1}\,
\boldsymbol{\Sigma}_{ed,\lambda}\,
\boldsymbol{\Sigma}_{dd,\lambda}^{-1}
&
\boldsymbol{\Sigma}_{e\mid d,\lambda}^{-1}
\end{bmatrix}\,.
\]
Substituting this expression into
\(\boldsymbol{\Delta}_\lambda^\top\,
\boldsymbol{\Sigma}_\lambda^{-1}\,
\boldsymbol{\Delta}_\lambda\)
and collecting terms gives
\[
\mathrm{Sep}_{\mathrm{pair}}(\lambda)
=
\mathrm{Sep}_{\mathrm{dec}}(\lambda)
+
\mathrm{Res}_{e\mid d}(\lambda)\, ,
\]
where
\[
\mathrm{Res}_{e\mid d}(\lambda)
=
\left(
\boldsymbol{\Delta}_{e,\lambda}
-
\boldsymbol{\Sigma}_{ed,\lambda}\,
\boldsymbol{\Sigma}_{dd,\lambda}^{-1}\,
\boldsymbol{\Delta}_{d,\lambda}
\right)^\top\,
\boldsymbol{\Sigma}_{e\mid d,\lambda}^{-1}\,
\left(
\boldsymbol{\Delta}_{e,\lambda}
-
\boldsymbol{\Sigma}_{ed,\lambda}\,
\boldsymbol{\Sigma}_{dd,\lambda}^{-1}\,
\boldsymbol{\Delta}_{d,\lambda}
\right)\,.
\]
Since
\(\boldsymbol{\Sigma}_{e\mid d,\lambda}\succ 0\),
the residual term is nonnegative. Therefore
\[
\mathrm{Sep}_{\mathrm{pair}}(\lambda)
\ge
\mathrm{Sep}_{\mathrm{dec}}(\lambda)\,.
\]
Equality holds if and only if the conditional residual vanishes, namely
\[
\boldsymbol{\Delta}_{e,\lambda}
=
\boldsymbol{\Sigma}_{ed,\lambda}\,
\boldsymbol{\Sigma}_{dd,\lambda}^{-1}\,
\boldsymbol{\Delta}_{d,\lambda}\,.
\]
This proves the decomposition and the claimed nonnegativity.
\end{proof}

\subsection{Proof of Proposition~\ref{prop:low_noise_stability_main}}
\label{app:proof_low_noise_stability}

\begin{proof}
Fix \(\mathbf x_0\) and write
\[
\mathbf x_\lambda
=
a(\lambda)\, \mathbf x_0
+
b(\lambda)\,\boldsymbol{\varepsilon}\, .
\]
By the assumed first-order mean-square expansion of
\(\boldsymbol{\phi}_{\lambda,h}\) around
\(a(\lambda)\mathbf x_0\), we have
\[
\boldsymbol{\phi}_{\lambda,h}(\mathbf x_\lambda)
=
\boldsymbol{\phi}_{\lambda,h}(a(\lambda)\mathbf x_0)
+
b(\lambda)\, 
\mathbf J_{\lambda,h}(\mathbf x_0)\, 
\boldsymbol{\varepsilon}
+
\mathbf r_{\lambda,h}(\mathbf x_0,\boldsymbol{\varepsilon})\, ,
\]
with
\[
\mathbb E
\!\left[
\left\|\mathbf r_{\lambda,h}(\mathbf x_0,\boldsymbol{\varepsilon})\right\|_2^2
\,\middle|\,
\mathbf x_0
\right]
=
o\left(b(\lambda)^2\right)\, .
\]
Therefore,
\[
\mathbf z_{\lambda,h}(\mathbf x_0)
-
\boldsymbol{\phi}_{\lambda,h}(a(\lambda)\, \mathbf x_0)
=
b(\lambda)\, 
\mathbf J_{\lambda,h}(\mathbf x_0)\, 
\boldsymbol{\varepsilon}
+
\mathbf r_{\lambda,h}(\mathbf x_0,\boldsymbol{\varepsilon})\,.
\]
Taking squared norms and conditional expectations gives
\[
\mathbb E
\!\left[
\left\|
\mathbf z_{\lambda,h}(\mathbf x_0)
-
\boldsymbol{\phi}_{\lambda,h}(a(\lambda)\,\mathbf x_0)
\right\|_2^2
\,\middle|\,
\mathbf x_0
\right]
=
b(\lambda)^2
\mathbb E
\!\left[
\left\|
\mathbf J_{\lambda,h}(\mathbf x_0)
\boldsymbol{\varepsilon}
\right\|_2^2
\right]
+
o\left(b(\lambda)^2\right)\, .
\]
Since
\(\boldsymbol{\varepsilon}\sim\mathcal N(\mathbf 0,\mathbf I)\),
\[
\mathbb E
\!\left[
\left\|
\mathbf J_{\lambda,h}(\mathbf x_0)
\, \boldsymbol{\varepsilon}
\right\|_2^2
\right]
=
\operatorname{tr}
\!\left(
\mathbf J_{\lambda,h}(\mathbf x_0)\, 
\mathbf J_{\lambda,h}(\mathbf x_0)^\top
\right)\, .
\]
Hence
\[
\mathbb E
\!\left[
\left\|
\mathbf z_{\lambda,h}(\mathbf x_0)
-
\boldsymbol{\phi}_{\lambda,h}(a(\lambda)\, \mathbf x_0)
\right\|_2^2
\,\middle|\,
\mathbf x_0
\right]
=
b(\lambda)^2
\operatorname{tr}
\!\left(
\mathbf J_{\lambda,h}(\mathbf x_0)\, 
\mathbf J_{\lambda,h}(\mathbf x_0)^\top
\right)
+
o(b\left(\lambda)^2\right)\,,
\]
which proves the result.
\end{proof}

\subsection{Proof of Theorem~\ref{thm:diag_score_app}}
\label{app:proof_diag_score}

\begin{proof}
Let
\[
\mathbf D_\lambda
=
\operatorname{diag}(\boldsymbol{\Sigma}_\lambda),
\qquad
\mathbf R_\lambda
=
\mathbf D_\lambda^{-1/2}
\boldsymbol{\Sigma}_\lambda
\mathbf D_\lambda^{-1/2},
\qquad
\boldsymbol{\delta}_\lambda
=
\mathbf D_\lambda^{-1/2}
\boldsymbol{\Delta}_\lambda\,.
\]
Define the whitened-but-correlated variable
\[
\mathbf y
=
\mathbf D_\lambda^{-1/2}
(\mathbf z-\boldsymbol{\mu}_\lambda)\,.
\]
Under \(H_0\), we have
\[
\mathbf y
\sim
\mathcal N(\mathbf 0,\mathbf R_\lambda)\,,
\]
and under \(H_1\),
\[
\mathbf y
\sim
\mathcal N(\boldsymbol{\delta}_\lambda,\mathbf R_\lambda)\,.
\]
The oracle diagonal score can be written as
\[
s_\lambda^\circ(\mathbf z)
=
\frac{1}{d_\lambda}\,
\mathbf y^\top \,\mathbf y\,.
\]

Because \(\mathbf R_\lambda\) is a correlation matrix,
\(\operatorname{tr}(\mathbf R_\lambda)=d_\lambda\). Hence
\[
\mathbb E_{H_0}[\mathbf y^\top \mathbf y]
=
\operatorname{tr}(\mathbf R_\lambda)
=
d_\lambda\,,
\]
so
\[
\mathbb E_{H_0}[s_\lambda^\circ]=1\, .
\]
Under \(H_1\),
\[
\mathbb E_{H_1}\left[\mathbf y^\top \mathbf y\right]
=
\operatorname{tr}(\mathbf R_\lambda)
+
\boldsymbol{\delta}_\lambda^\top\,
\boldsymbol{\delta}_\lambda
=
d_\lambda+\kappa_\lambda\, ,
\]
where
\[
\kappa_\lambda
=
\boldsymbol{\Delta}_\lambda^\top\,
\mathbf D_\lambda^{-1}\,
\boldsymbol{\Delta}_\lambda
=
\boldsymbol{\delta}_\lambda^\top\,
\boldsymbol{\delta}_\lambda\, .
\]
Therefore
\[
\mathbb E_{H_1}[s_\lambda^\circ]
=
1+\frac{\kappa_\lambda}{d_\lambda}\,.
\]

For a Gaussian quadratic form \(\mathbf y^\top \,\mathbf A \,\mathbf y\) with
\(\mathbf A=\mathbf I\), the variance satisfies
\[
\operatorname{Var}\left(\mathbf y^\top \, \mathbf y\right)
=
2\operatorname{tr}\left(\mathbf R_\lambda^2\right)
+
4\,\boldsymbol{\delta}_\lambda^\top
\,\mathbf R_\lambda
\,\boldsymbol{\delta}_\lambda\,,
\]
with the second term absent under \(H_0\). Dividing by \(d_\lambda^2\) gives
\[
\operatorname{Var}_{H_0}(s_\lambda^\circ)
=
\frac{2}{d_\lambda^2}\,
\operatorname{tr}\left(\mathbf R_\lambda^2\right)\,,
\]
and
\[
\operatorname{Var}_{H_1}(s_\lambda^\circ)
=
\frac{2}{d_\lambda^2}\,
\operatorname{tr}\left(\mathbf R_\lambda^2\right)
+
\frac{4}{d_\lambda^2}\,
\boldsymbol{\delta}_\lambda^\top\,
\mathbf R_\lambda\,
\boldsymbol{\delta}_\lambda\,.
\]

Finally, apply Cantelli's inequality under \(H_1\) to the random variable
\(s_\lambda^\circ\). For any
\(\tau < \mathbb E_{H_1}[s_\lambda^\circ]\),
\[
\Pr_{H_1}[s_\lambda^\circ \le \tau]
\le
\frac{
\operatorname{Var}_{H_1}(s_\lambda^\circ)
}{
\operatorname{Var}_{H_1}(s_\lambda^\circ)
+
\left(
\mathbb E_{H_1}[s_\lambda^\circ]-\tau
\right)^2
}\, .
\]
Taking complements and substituting the mean gives
\[
\Pr_{H_1}[s_\lambda^\circ > \tau]
\ge
1-
\frac{
\operatorname{Var}_{H_1}(s_\lambda^\circ)
}{
\operatorname{Var}_{H_1}(s_\lambda^\circ)
+
\left(
1+\kappa_\lambda/d_\lambda-\tau
\right)^2
}\, .
\]
This proves the theorem.
\end{proof}

\subsection{Canonical-level matching sanity check}
\label{app:canonical_stability}

This diagnostic tests the implementation role of canonical logSNR matching. For a discrete
backbone, a requested canonical level \(\lambda_{\mathrm{ref}}\) must be mapped to a native timestep
\(t\), whose effective logSNR can be different from \(\lambda_{\mathrm{ref}}\) and is denoted by \(\lambda_t\).

The following simple bound explains why such mismatches can matter: if the internal descriptor and the resulting slot score vary smoothly with the canonical level, then the logSNR mismatch induces
controlled score drift.

Fix a selected hook \(h\). Assume that, on the evaluation domain \(\mathcal X\), the pooled descriptor
\(\mathbf z_{.,h}(\mathbf x)\) is \(L_{h}\)-Lipschitz in the canonical level:
\[
\|\mathbf z_{\lambda,h}(\mathbf x)-\mathbf z_{\lambda',h}(\mathbf x)\|_2
\le
L_{h}|\lambda-\lambda'|\,,
\qquad
\forall \mathbf x\in\mathcal X,\ \lambda \text{ and } \lambda' \text{ two corruption levels}.
\]
Assume also that the corresponding oracle slot score \(s^\circ_{.,h}\) is
\(M_{h}\)-Lipschitz in its feature argument over the attained range. Then
\[
|s^\circ_{\lambda,h}(\mathbf x)-s^\circ_{\lambda',h}(\mathbf x)|
\le
M_{h}L_{h}|\lambda-\lambda'|.
\]
Consequently, for a discrete backbone using a native matched level \(\lambda_t\) and a requested canonical level \(\lambda_{\mathrm{ref}}\), by averaging these bounds over the selected hooks in the oracle
 \cfs\ score :
\[
|S^\circ_{\cfs,\lambda_{\mathrm{ref}}}(\mathbf x)-S^\circ_{\cfs,\lambda_t}(\mathbf x)|
\le
\frac{1}{|\mathcal S|}
\sum_{h\in\mathcal S}
M_{h}L_{h}|\lambda_{\mathrm{ref}}-\lambda_t|\,.
\]

The same reasoning can be applied to AUROC score difference.



 This diagnostic is only meaningful for discrete backbones (it is detailed in Appendix~\ref{app:canon_mapping}), so we report it on improved-diffusion. Figure~\ref{fig:theory_canonical_appendix} shows two complementary views. First, larger effective canonical mismatches tend to induce larger OOD score drift. Second, sufficiently large mismatches can also produce measurable AUROC degradation relative to the matched logSNR-uniform reference policy. The relation is not perfectly linear, which is expected under discrete timestep collisions and pair-dependent difficulty, but the global trend supports the role of canonical logSNR matching as a genuine methodological requirement rather than a cosmetic alignment choice.

 \begin{figure*}[t]
\centering
\includegraphics[width=.48\textwidth]{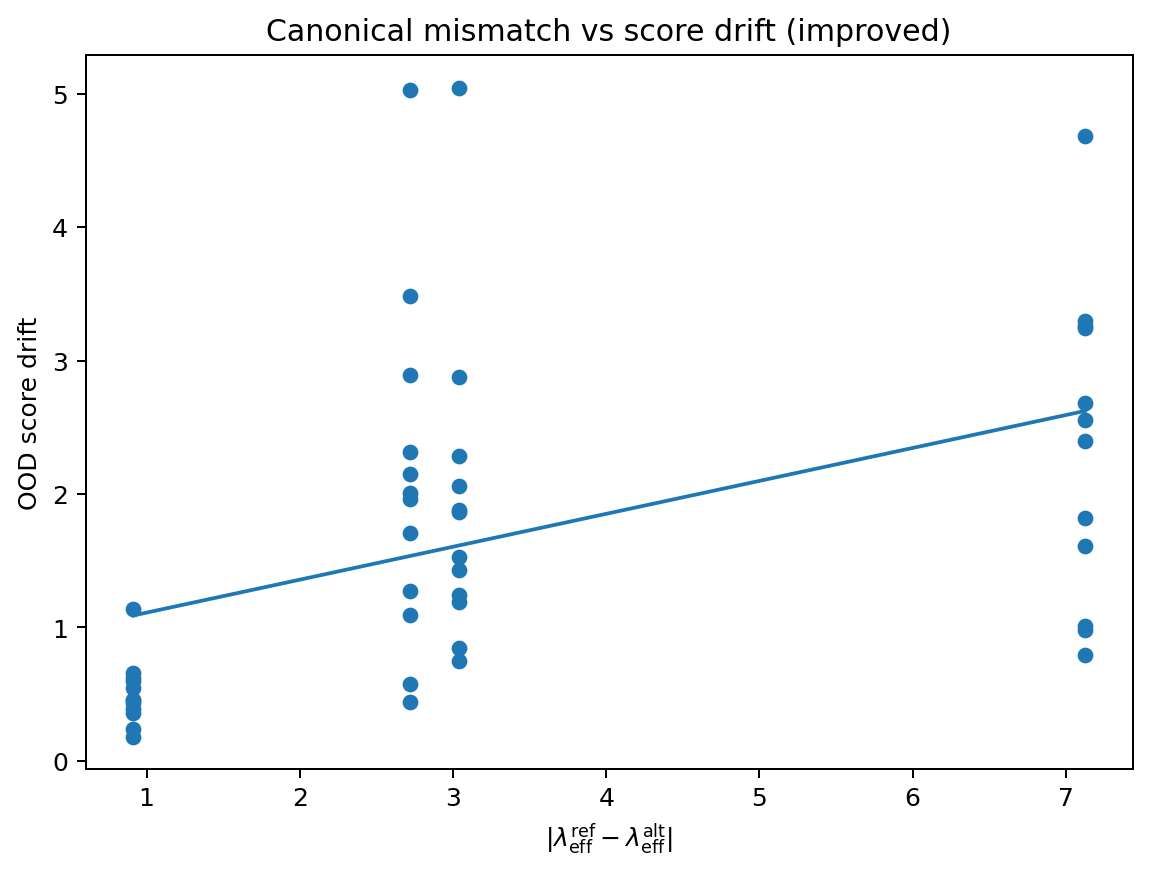}
\hfill
\includegraphics[width=.48\textwidth]{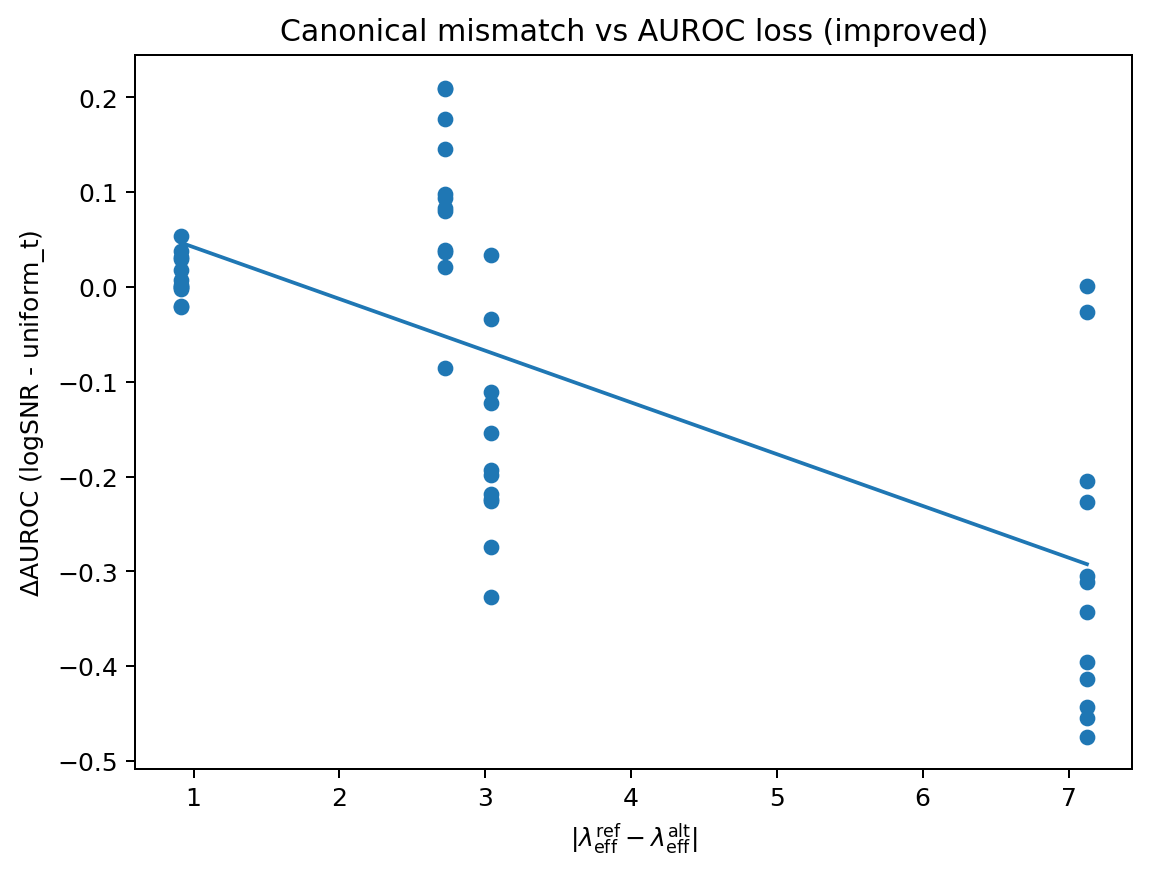}
\caption{
\textbf{Canonical-level matching sanity check on improved-diffusion.}
\textbf{Left:} standardized score drift versus effective logSNR mismatch
\(|\lambda_{\mathrm{ref}}-\lambda_t|\). 
\textbf{Right:} AUROC degradation relative to the matched logSNR policy.
Large mismatches can induce score drift and degrade AUROC, supporting logSNR matching as an
implementation requirement rather than a cosmetic alignment choice.
}
\label{fig:theory_canonical_appendix}
\end{figure*}

\subsection{Empirical diagnostic protocol}
\label{app:empirical_diagnostic_protocol}

The theory of Section~\ref{sec:method} yields two measurable quantities used in our empirical diagnostics: the diagonal noncentrality \(\hat\kappa_\lambda(\mathcal S)\) and the
content-to-instability ratio \(\hat R_h(\lambda)\). These quantities are estimated on held-out data and compared directly against downstream OOD performance.

For \(\hat\kappa_\lambda(\mathcal S)\), we study rank correlation with pairwise AUROC across candidate sparse probes and ID\(\to\)OOD pairs. This diagnostic is summarized in the main paper through Table~\ref{tab:theory_diag_summary}, with representative scatter plots deferred to Figure~\ref{fig:theory_kappa_appendix}. It confirms this prediction on both improved-diffusion and EDM backbones. Across candidate sparse probes and ID\(\to\)OOD pairs, larger values of \(\hat\kappa_\lambda(\mathcal S)/d\) are strongly aligned with larger AUROC, supporting the interpretation of the diagonal score as a structured local detector rather than as an arbitrary lightweight classifier.

For \(\hat R_h(\lambda)\), we study rank correlation with average AUROC across candidate hooks, and we additionally measure within-image corruption variance as a function of \(b(\lambda)^2\). These validations are reported in
Figure~\ref{fig:theory_ratio_appendix}.


\begin{figure*}[t]
\centering
\includegraphics[width=.48\textwidth]{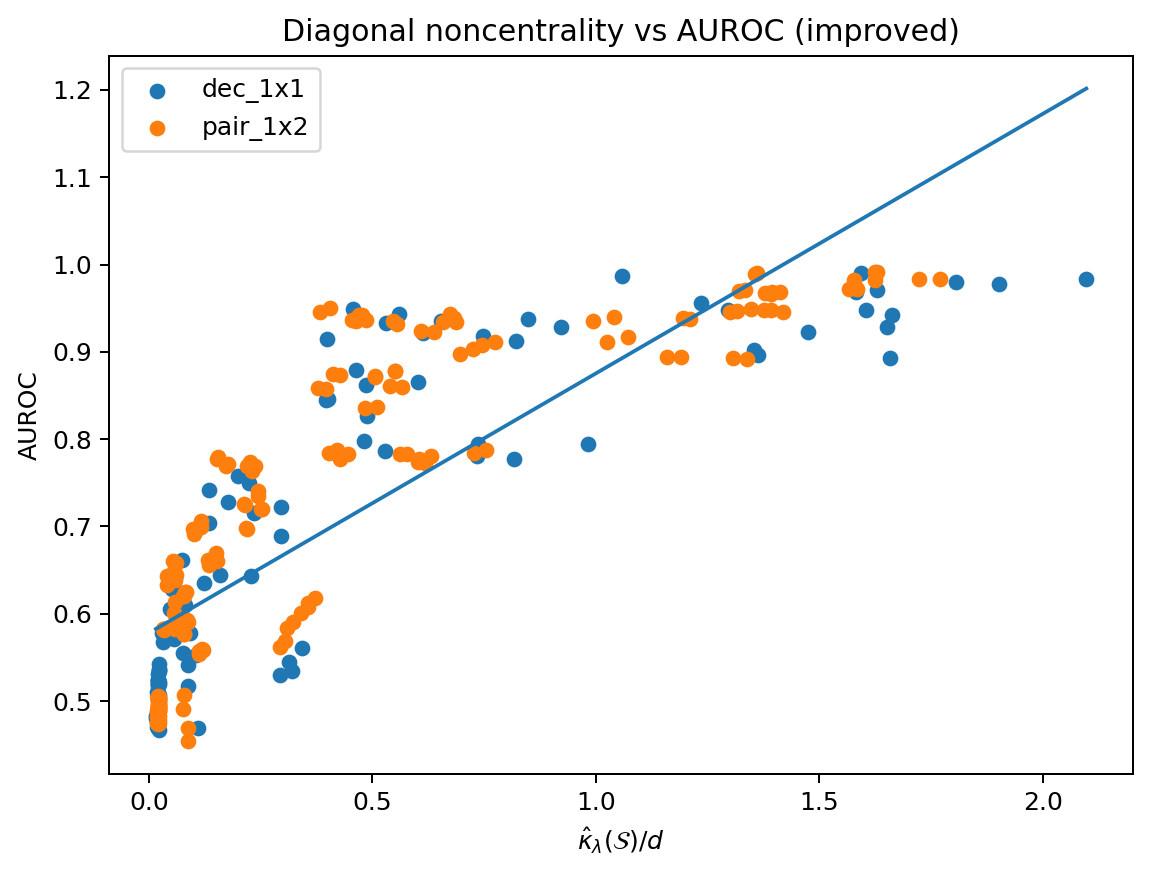}
\includegraphics[width=.48\textwidth]{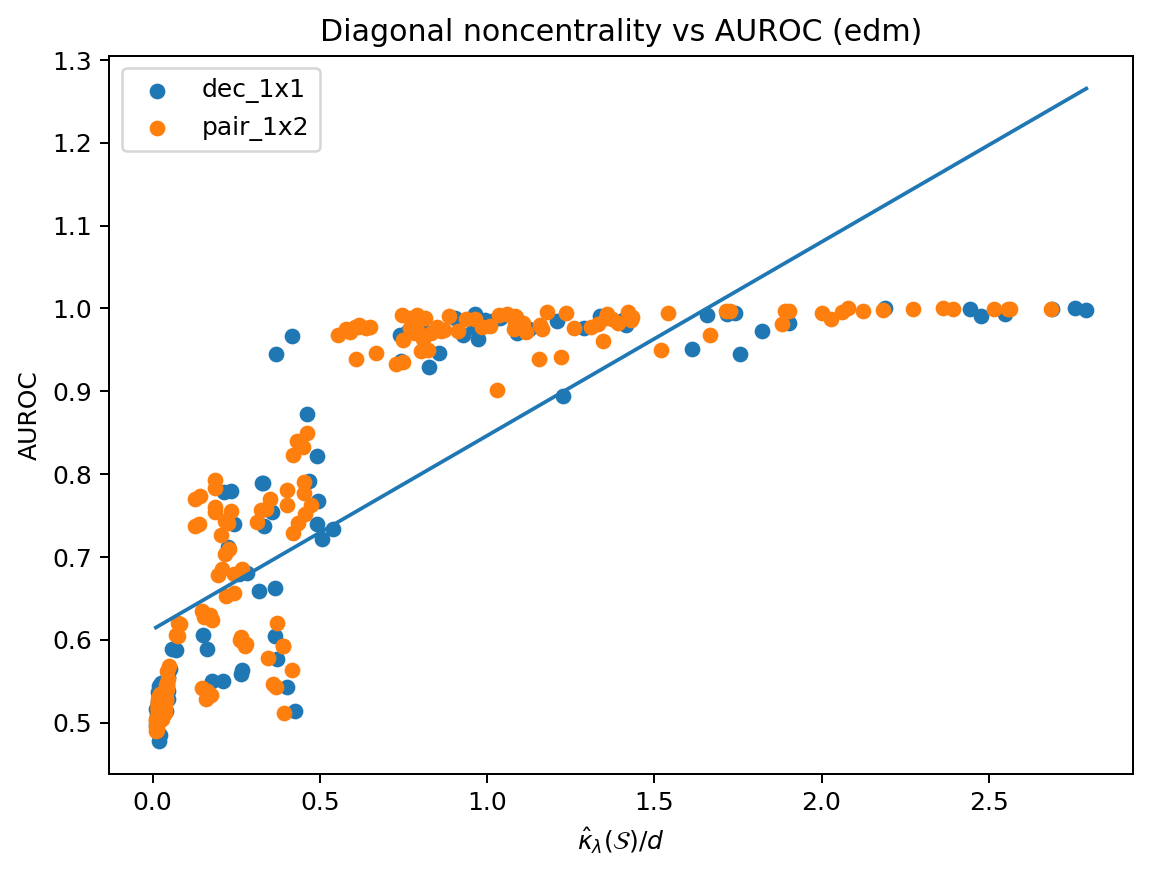}
\caption{
\textbf{Empirical validation of diagonal separation.}
The estimated diagonal noncentrality \(\hat\kappa_\lambda(\mathcal S)/d\) is strongly aligned with downstream AUROC on both improved-diffusion and EDM backbones.
}
\label{fig:theory_kappa_appendix}
\end{figure*}

\begin{figure*}[t]
\centering
\includegraphics[width=.48\textwidth]{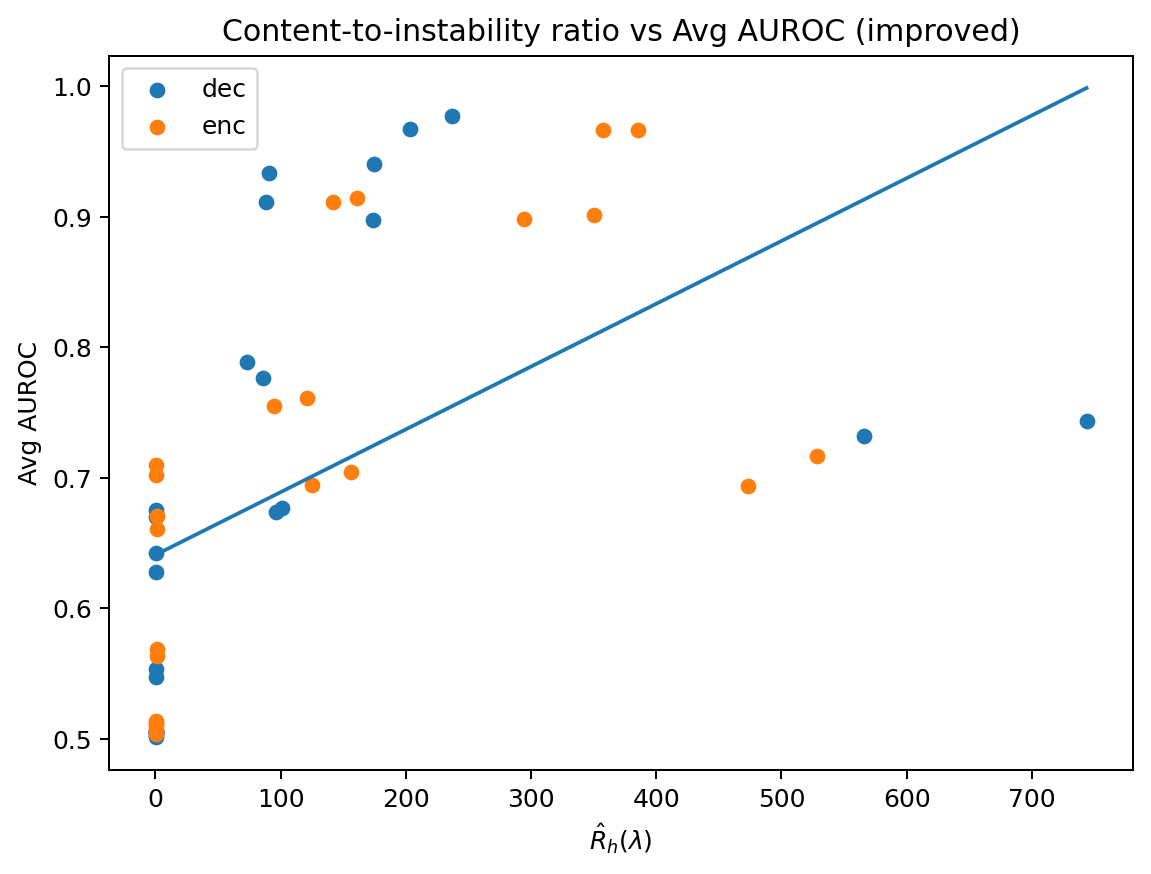}
\hfill
\includegraphics[width=.48\textwidth]{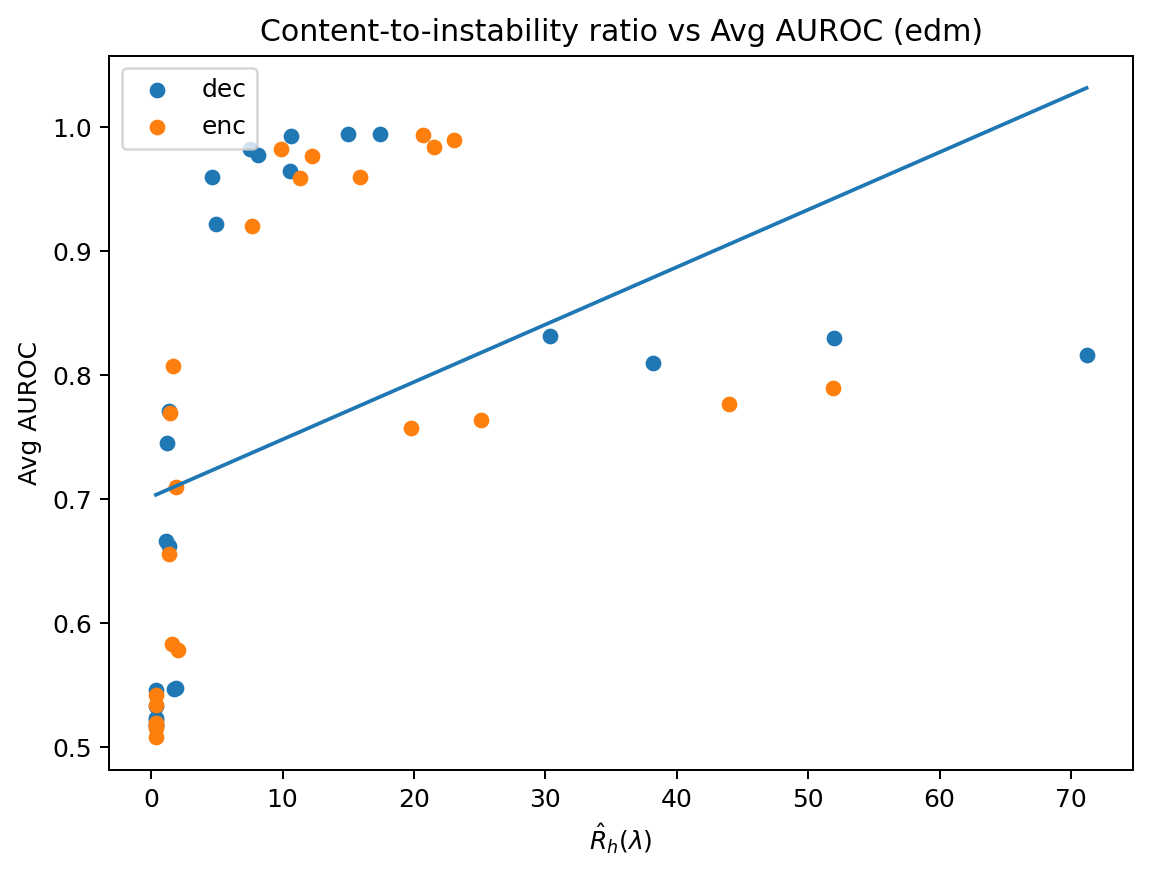}

\vspace{0.4em}

\includegraphics[width=.48\textwidth]{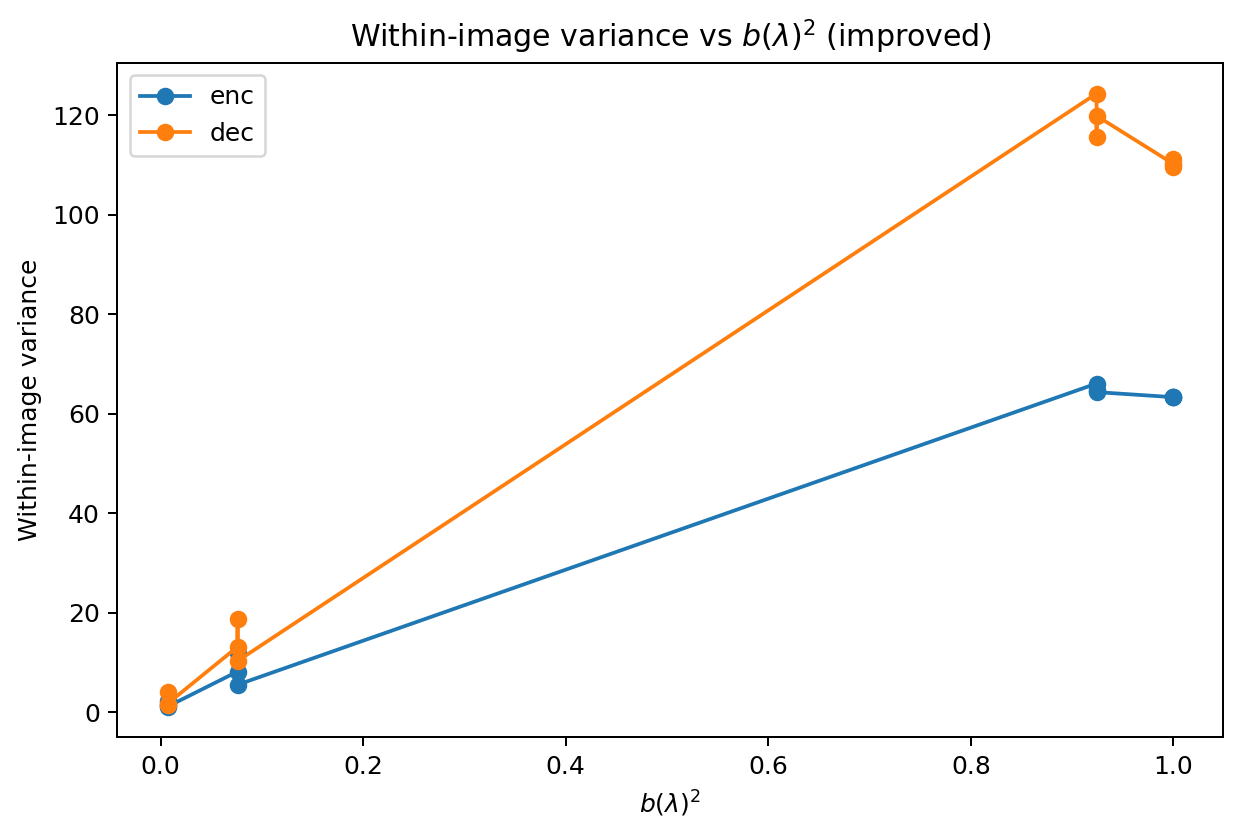}
\hfill
\includegraphics[width=.48\textwidth]{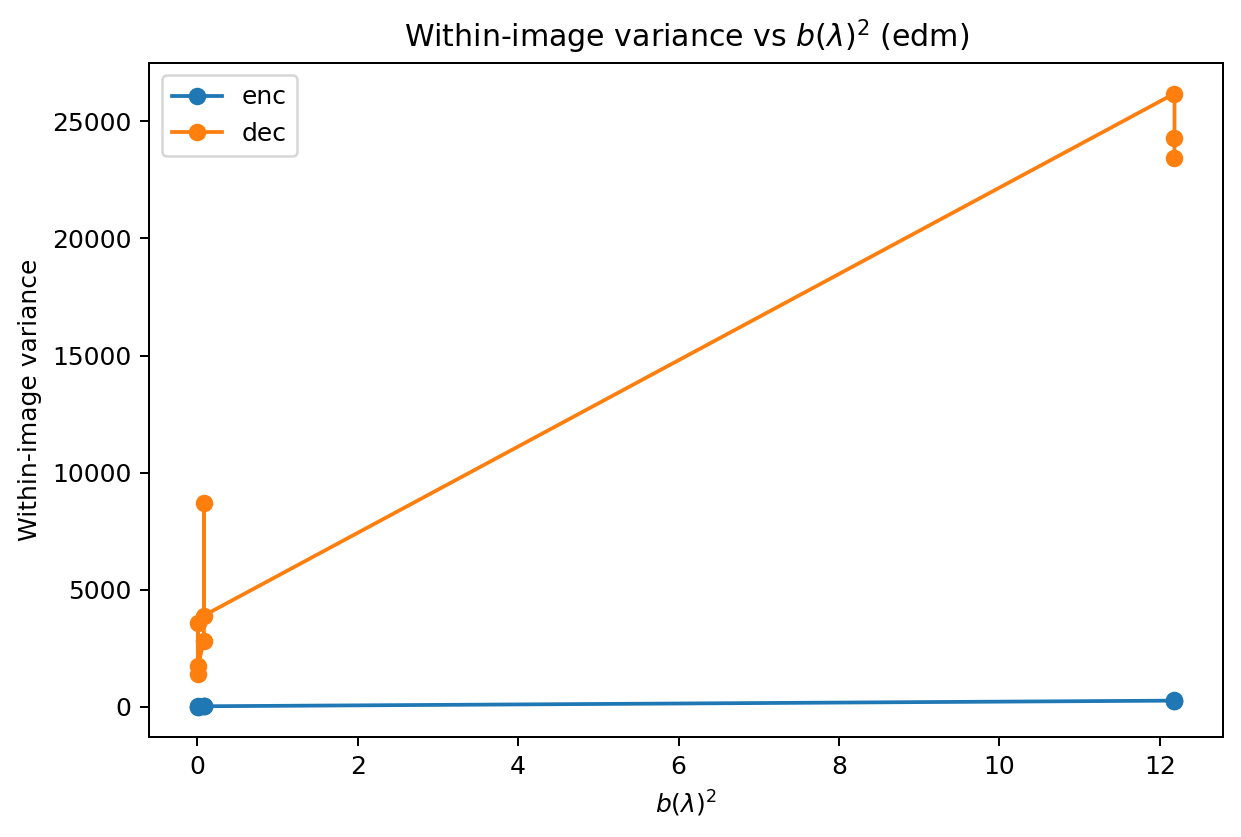}
\caption{
\textbf{Empirical validation of low-noise stability and the content-to-instability diagnostic.}
\textbf{Top row:} \(\hat R_h(\lambda)\) versus downstream Avg AUROC across candidate hooks for improved-diffusion (left) and EDM (right).
\textbf{Bottom row:} within-image corruption variance versus \(b(\lambda)^2\) for the selected encoder and decoder hooks on improved-diffusion (left) and EDM (right).
Across both backbone families, larger content-to-instability ratios align with stronger downstream performance, and lower-noise canonical probing reduces corruption-induced variability.
}
\label{fig:theory_ratio_appendix}
\end{figure*}

\subsection{Interpretation}
\label{app:theory_scope_interp}

Taken together, the appendix theory supports three concrete conclusions about sparse diffusion probing.

First, the paired encoder-decoder separation decomposes into a decoder term plus a nonnegative conditional encoder residual. This supports using a late decoder snapshot as the primary sparse probe, while interpreting encoder snapshots as complementary rather than primary. Second, the diagonal \cfs\ score is governed by a measurable diagonal noncentrality parameter \(\kappa_\lambda\), so the score can be analyzed as a structured local detector rather than as an arbitrary lightweight head. Third, low-noise canonical probing reduces corruption-induced instability: under a local smoothness approximation, within-image feature variance scales as \(b(\lambda)^2\), and the empirical content-to-instability diagnostic captures this effect.

When included, the canonical-matching diagnostic further supports that logSNR alignment is not merely a bookkeeping device: large discrete-level mismatches can induce score drift and degrade AUROC.

The theory remains local rather than universal. It does not claim that sparse internal probing dominates every conceivable output-space detector. Its purpose is narrower: to explain and predict which sparse internal probes should work best under a backbone-equated protocol.

\section{Experimental Protocols, Splits, and Canonicalization}
\label{app:protocols}

This section groups the protocol elements shared across methods: dataset splits, canonical level construction, backbone-specific corruption mappings, adapter outputs, and logical cost accounting.

\subsection{Small-scale benchmark protocol}
\label{app:small_protocol}

The small-scale benchmark uses the three ID datasets
\[
\mathcal{I}_{\text{small}}=\{\text{CIFAR-10},\text{SVHN},\text{CelebA32}\},
\]
and the OOD pool
\[
\mathcal{O}_{\text{small}}=\{\text{CIFAR-10},\text{SVHN},\text{CelebA32},\text{CIFAR-100},\text{DTD}\}.
\]
For each ID dataset, all remaining datasets in the pool are treated as OOD, producing \(12\) ID\(\to\)OOD pairs per backbone.

For each ID dataset, we define:
\begin{itemize}
\item an ID-fit split used to fit ID-only statistics and density heads;
\item an ID-test split used for in-distribution evaluation;
\item external OOD-test splits for all OOD datasets.
\end{itemize}
These splits are fixed and shared across methods.

\paragraph{Relative-OOD interpretation.}
Let \(P_\star\) denote the training distribution of the frozen checkpoint, and let \(P\) denote the chosen evaluation ID dataset. Our detector should be interpreted as performing OOD detection \emph{relative to the evaluation reference bank} \(P\), not as an absolute test of membership in \(P_\star\). This distinction is especially important in cross-source or transfer settings.









\subsection{ImageNet200 / ImageNet1K protocol}
\label{app:imagenet_protocol}

ImageNet200 and ImageNet1K are used as checkpoint-controlled large-scale benchmarks under a single official ImageNet-64 improved-diffusion backbone. Since the official training split is not used in our setup, we adopt deterministic disjoint val-only protocols. For ImageNet200, we use a class-stratified split of the validation set into:
\begin{itemize}
\item \textbf{ID-fit / bank split:} used to fit ID statistics;
\item \textbf{ID-test split:} the complementary subset used for in-distribution evaluation.
\end{itemize}
For ImageNet1K, we use the same val-only deterministic split policy to define ID-fit and ID-test subsets.

In both benchmarks, the OOD suite contains two near-OOD sets, \textbf{NINCO} and \textbf{SSB-hard}, together with one far-OOD texture-shift set, \textbf{Textures}.

\paragraph{Checkpoint-controlled protocol.}
All large-scale results in this appendix use the same official ImageNet-64 improved-diffusion checkpoint (\texttt{imagenet64\_uncond\_100M\_1500K.pt}). This removes cross-family variability and isolates method differences under a single shared-source backbone.



\subsection{Canonical level construction: \(K_{\text{grid}}\) vs.\ \(K_c\)}
\label{app:kgrid}

\paragraph{Purpose.}
Discrete backbones require mapping continuous canonical levels to discrete timesteps, which may produce duplicates. We separate:
\begin{itemize}
\item \(K_{\text{grid}}\): candidate grid resolution used to build a dense set of possible canonical levels;
\item \(K_c\): number of selected levels actually used by the method.
\end{itemize}

\paragraph{Construction.}
We construct a candidate grid
\[
\{\lambda_i^{\mathrm{grid}}\}_{i=1}^{K_{\text{grid}}}
\subset [\lambda_{\min},\lambda_{\max}]\,.
\]
For discrete backbones, each candidate level is mapped to the nearest available timestep in logSNR space. When \texttt{unique=true}, repeated timestep assignments are removed, and the final selected set contains at most \(K_c\) distinct levels. For continuous backbones, the selected levels are evenly spaced directly in canonical space.

\paragraph{Interpretation.}
Increasing \(K_{\text{grid}}\) improves mapping fidelity on discrete backbones without necessarily increasing test-time cost, because only the final selected \(K_c\) levels are used downstream.

\begin{remark}
The separation between \(K_{\text{grid}}\) and \(K_c\) is important for fair backbone matching: selection fidelity and evaluation cost should not be conflated.
\end{remark}

\subsection{Backbone-specific realization of canonical corruption}
\label{app:canon_mapping}

\subsubsection{Improved-diffusion (VP-DDPM)}

For improved-diffusion checkpoints, the native forward corruption process is
\[
\mathbf{x}_t=\sqrt{\bar\alpha_t}\,\mathbf{x}_0+\sqrt{1-\bar\alpha_t}\,\boldsymbol{\eps},
\qquad \boldsymbol{\eps}\sim\mathcal N(\mathbf{0}, \mathbf{I})\, ,
\]
where \(\bar\alpha_t\) is the cumulative noise-schedule coefficient of the checkpoint.

Hence, the corresponding corruption coefficients in the shared canonical form
\[
\mathbf{x}_\lambda = a(\lambda)\, \mathbf{x}_0 + b(\lambda)\, \boldsymbol{\eps}\, ,
\]
are
\[
a_t=\sqrt{\bar\alpha_t}\,,
\qquad
b_t=\sqrt{1-\bar\alpha_t}\, ,
\]
and the induced discrete canonical logSNR is
\[
\lambda_t=\log\frac{\bar\alpha_t}{1-\bar\alpha_t}\, .
\]

A desired canonical level \(\lambda\) is therefore mapped to the nearest native timestep in logSNR space:
\[
t(\lambda)=\arg\min_t |\lambda_t-\lambda|\,.
\]
When several candidate canonical levels map to the same native timestep, duplicates are removed by the unique-level construction described in Appendix~\ref{app:kgrid}.

\subsubsection{EDM family}

EDM-style models expose a continuous noise variable, but the exact native parameterization depends on the checkpoint preconditioning family. Rather than matching checkpoints through their native interface directly, we canonicalize them through the effective corruption ratio
\[
\tilde\sigma(\lambda):=\frac{b(\lambda)}{a(\lambda)}=\exp(-\lambda/2)\,,
\]
which follows from
\[
\lambda=\log\frac{a(\lambda)^2}{b(\lambda)^2}\,.
\]

The adapter, therefore, maps each canonical level \(\lambda\) to the checkpoint-specific model input corresponding to the same effective ratio \(\tilde\sigma(\lambda)\), and returns coefficients \((a,b)\) such that
\[
\mathbf{x}_\lambda = a \, \mathbf{x}_0 + b \, \boldsymbol{\eps}\,.
\]

Different EDM-family preconditionings may realize the same \(\tilde\sigma\) with different coefficient pairs \((a,b)\). For example, a VE/EDM-style realization uses
\[
a=1,\qquad b=\tilde\sigma\,,
\]
whereas a VP-style realization can be written as
\[
a=(1+\tilde\sigma^2)^{-1/2}\, ,
\qquad
b=\tilde\sigma(1+\tilde\sigma^2)^{-1/2}\,.
\]
In both cases,
\[
\frac{b}{a}=\tilde\sigma\,,
\]
so the same canonical \(\lambda\) corresponds to the same effective corruption strength even though the native checkpoint interface differs.

\subsubsection{Shared canonical semantics}

Although improved-diffusion and EDM differ in interface and preconditioning, all methods operate through the same canonical corruption abstraction. This allows meaningful cross-backbone comparisons at matched canonical levels.

\subsection{Shared adapter outputs and reconstruction rules}
\label{app:shared_x0_eps}

All harmonized methods are routed through a shared adapter layer. At each canonical level, the adapter provides:
\begin{itemize}
\item a denoised estimate \(\hat{\mathbf{x}}_0\),
\item optional access to intermediate activations,
\item and, when needed, an \(\hat{\boldsymbol{\eps}}\) estimate recovered through
\begin{equation}
\hat{\boldsymbol{\eps}} = \frac{\mathbf{x}-a\,\hat{\mathbf{x}}_0}{b}\,.
\label{eq:shared_epshat_app}
\end{equation}
\end{itemize}

This ensures that all output-space baselines and internal-feature methods use a compatible corruption semantics, regardless of the native output conventions of the original implementation.



\subsection{Implementation details specific to \cfs}
\label{app:cfs_impl_details}

The conceptual design of \cfs\ is described in Section~\ref{sec:method}. Here we report only implementation-level details needed for exact reproduction.

\subsubsection{Admissible hook search and shortlist size}

A candidate hook is admissible only if a dry forward pass returns a 4D tensor of shape \(B\times C\times H\times W\). In practice, we restrict the search to stage-level encoder and decoder blocks and keep a small shortlist within each structural region before applying the ID-only proxy. For every experiment, we report:
\begin{itemize}
\item the number of admissible encoder candidates,
\item the number of admissible decoder candidates,
\item the shortlist size retained in each region,
\item the final selected encoder and decoder module names.
\end{itemize}

\subsubsection{ID-only probe configuration}

The proxy uses a small ID-only probe set. For exact reproducibility, we report:
\begin{itemize}
\item the number of ID probe images,
\item the number of corruption repeats per image,
\item whether the same noise draw is reused across canonical levels,
\end{itemize}

\subsubsection{Exact pooled-slot construction}

For each retained slot, we apply channel-wise spatial mean and standard deviation pooling, yielding a descriptor of dimension \(2C_{k,\ell}\). We report, for each backbone family and benchmark:
\begin{itemize}
\item the selected canonical levels,
\item the selected encoder and decoder block names,
\item the resulting feature-map shapes,
\item the pooled descriptor dimensions.
\end{itemize}

\subsubsection{Diagonal score details}

Each pooled slot is modeled independently with ID-only diagonal statistics. For exact reproduction, we report:
\begin{itemize}
\item whether the slot features are standardized before fitting,
\item whether slot scores are averaged uniformly or reweighted.
\end{itemize}

\section{Implementation Taxonomy and Baseline Specifications}
\label{app:taxonomy_baselines}

\subsection{Implementation taxonomy}
\label{app:taxonomy}

We implement these methods to preserve the core probe and score logic of the original method while routing it through our shared adapter and canonicalization pipeline.



\paragraph{Methods in this paper.}
\begin{itemize}
\item \msma: harmonized faithful port;
\item \diffpath: harmonized faithful port;
\item \ddpmood: harmonized faithful port;
\item \gepc: harmonized faithful port;
\item \cfs: proposed internal representation-space method.
\end{itemize}

\paragraph{Methods discussed but not retained in the strict main benchmark.}
We position SCOPED and EigenScore conceptually in related work, but do not include them in the strict main \mbe\ table. In our attempts, we did not obtain a controlled, harmonized rerun of SCOPED suitable for the shared benchmark. EigenScore was also not retained because matched reruns within the shared adapter/canonicalization pipeline were substantially heavier, and some public checkpoint/artifact combinations did not yield reliable runs. We therefore restrict the strict main benchmark to methods for which we can provide controlled backbone-equated evaluation under matched corruption semantics and budget accounting.

\paragraph{Official repositories used as starting points.}
When available, our harmonized implementations were initialized from the official public repositories of the corresponding methods:
MSMA\footnote{\url{https://github.com/ahsanMah/msma}},
DiffPath\footnote{\url{https://github.com/clear-nus/diffpath}},
DDPM-OOD\footnote{\url{https://github.com/marksgraham/ddpm-ood}},
and GEPC\footnote{\url{https://github.com/RouzAY/gepc-diffusion}}.
These repositories were then adapted to the shared adapter, canonicalization, and logical-budget interface of \mbe.

\subsection{Common harmonized interface}
\label{app:baselines_common}

All baselines are evaluated through the same adapter interface, the same input normalization to \([-1,1]\), and the same canonical corruption semantics. For a selected canonical level \(\lambda_k\) with coefficients \((a_k,b_k)\), we explicitly corrupt the clean image as
\[
\mathbf{x}_k = a_k \, \mathbf{x}_0 + b_k \, \boldsymbol{\eps}, \qquad \boldsymbol{\eps} \sim \mathcal{N}(\mathbf{0},\mathbf{I})\,.
\]
From the frozen backbone, we then recover native denoising quantities $\hat{\mathbf{x}}_{0,k}$ and $\hat{\boldsymbol{\eps}}_k$.
For improved-diffusion backbones, \(\hat\eps_k\) is extracted natively from the model output using the correct timestep scaling and mean-parameterization conventions, and \(\hat{\mathbf{x}}_{0,k}\) is recovered from the same forward pass. For EDM-style backbones, the adapter returns \(\hat{\mathbf{x}}_{0,k}\) through one denoising call and reconstructs
\[
\hat{\boldsymbol{\eps}}_k = \frac{\mathbf{x}_k - a_k \, \hat{\mathbf{x}}_{0,k}}{b_k}\, .
\]
Thus, one backbone evaluation at one level counts as one logical forward pass, even if both \(\hat{\mathbf{x}}_0\) and \(\hat{\boldsymbol{\eps}}\) are recovered from that same call.


\subsection{MSMA (harmonized faithful port)}
\label{app:msma_details}

MSMA is the closest baseline in our suite to a harmonized faithful port. Its core logic is preserved: build a multiscale descriptor from denoiser-derived quantities across \(K_c\) levels, then fit an ID-only density head on the resulting feature vectors.

\paragraph{Feature construction.}
For each canonical level \(\lambda_k\), we compute
\[
f_k(\mathbf{x}_0)=\|\hat{\boldsymbol{\eps}}_k\|_2\, ,
\]
where \(\hat{\boldsymbol{\eps}}_k\) is obtained through the shared native adapter interface. The final multiscale descriptor is
\[
F(\mathbf{x}_0)=\big[f_1(\mathbf{x}_0),\dots,f_{K_c}(\mathbf{x}_0)\big]\in\mathbb{R}^{K_c}\, .
\]

\paragraph{Head and scoring.}
We fit an ID-only density model on \(F(x)\), after optional feature standardization. Our implementation supports:
\begin{itemize}
\item a diagonal Gaussian head,
\item a Gaussian mixture model (GMM),
\item a \(k\)-nearest-neighbor distance head.
\end{itemize}
The final score is always OOD-high: negative log-likelihood for Gaussian/GMM heads, or distance for the KNN head.

\paragraph{What is preserved and what is adapted.}
The preserved part is the multiscale descriptor and the density score. The adapted part is the use of a common adapter interface, canonical logSNR levels, and a shared improved/EDM implementation.

\subsection{DiffPath (harmonized faithful port)}
\label{app:diffpath_details}

DiffPath is implemented as a faithful port for path-based diffusion OOD detection. Rather than reproducing a native implementation verbatim, we preserve the key idea: summarize a \emph{multilevel denoising path} and fit an ID-only density model on the resulting path statistics.

\paragraph{Recursive path construction.}
Levels are ordered from clean to noisy. We initialize
\[
\mathbf{x}_{1} = a_1 \, \mathbf{x}_0 + b_1 \, \boldsymbol{\eps}\, ,
\]
and for each level \(k\) compute \((\hat{\mathbf{x}}_{0,k},\hat{\boldsymbol{\eps}}_k)\) from the current state \({\mathbf{x}}_k\). A scalar path statistic \(q_k\) is then extracted from \(\hat{\boldsymbol{\eps}}_k\) using one of the following reductions:
\[
q_k \in \Big\{
\mathrm{mean}(\hat{\boldsymbol{\eps}}_k),\,
\mathrm{mean}(|\hat{\boldsymbol{\eps}}_k|),\,
\mathrm{mean}(\hat{\boldsymbol{\eps}}_k^2)
\Big\}\,.
\]
We then propagate the path recursively:
\[
\mathbf{x}_{k+1} = a_{k+1}\, \hat{\mathbf{x}}_{0,k} + b_{k+1}\, \hat{\boldsymbol{\eps}}_k\,.
\]

\paragraph{Feature variants.}
We use two harmonized feature families. The 1D variant computes
\[
\phi_{\mathrm{1d}}(x)
=
\sqrt{
\frac{1}{K_c-1}
\sum_{k=1}^{K_c-1}
\left(
\frac{q_{k+1}-q_k}{\Delta\lambda_k}
\right)^2
}\,,
\]


The 6D variant computes low-order moments of both the path values \(Q=(q_1,\ldots,q_{K_c})\) and their level-wise differences
\(\Delta Q=(q_2-q_1,\ldots,q_{K_c}-q_{K_c-1})\):
\[
\phi_{\mathrm{6d}}(x)
=
\big[
\operatorname{mean}(Q),\,
\operatorname{mean}(Q^2),\,
\|Q\|_3,\,
\operatorname{mean}(|\Delta Q|),\,
\operatorname{mean}((\Delta Q)^2),\,
\|\Delta Q\|_3
\big].
\]

\paragraph{Head and scoring.}
We fit either a 1D KDE for the 1D feature or a diagonal Gaussian model for the 6D feature. Scores are OOD-high via negative log-density.

\subsection{DDPM-OOD (harmonized faithful port)}
\label{app:ddpmood_details}

Our DDPM-OOD baseline is implemented as a harmonized \emph{multi-start reconstruction}. It preserves the main scoring logic of DDPM-OOD: reconstruction error should be evaluated from several noisy starting points, normalized using ID statistics \emph{per start}, and then aggregated into a single OOD score.

\paragraph{Canonical reverse schedule and starting points.}
We first build a canonical level set
\[
\lambda_1 > \lambda_2 > \cdots > \lambda_{K_c}\,,
\]
ordered from clean to noisy, and reverse it into a noisy-to-clean reconstruction schedule. A subset of starting points is then selected by subsampling this reverse schedule.

\paragraph{Multi-start reconstruction.}
For a selected starting level \(\lambda^{(s)}\), we generate
\[
\mathbf{x}_{s} = a_{s} \, \mathbf{x}_0 + b_{s} \, \boldsymbol{\eps}\, ,
\qquad \boldsymbol{\eps} \sim \mathcal{N}(\mathbf{0}, \mathbf{I})\,,
\]
and reconstruct deterministically along the corresponding reverse suffix. If the suffix levels are denoted
\[
\lambda^{(s)}=\lambda_{j_1},\lambda_{j_2},\dots,\lambda_{j_m}\,,
\]
then at each step, we compute \((\hat x_{0,j_r},\hat\eps_{j_r})\) from the current state and propagate toward the next cleaner canonical level:
\[
\mathbf{x}_{j_{r+1}} = a_{j_{r+1}} \, \hat{\mathbf{x}}_{0,j_r} + b_{j_{r+1}} \, \hat{\boldsymbol{\eps}}_{j_r}\,.
\]
The final clean reconstruction is the terminal \(\hat{\mathbf{x}}_0\) at the end of the suffix.

\paragraph{Per-start reconstruction errors and ID normalization.}
For each start \(s\), we compute the reconstruction error
\[
m_s(\mathbf{x}_0)=\frac{1}{d}\, \left\|\hat{\mathbf{x}}_{0,s}-\mathbf{x}_0\right\|_2^2\,.
\]
This yields a vector of per-start errors
\[
M(\mathbf{x}_0)=\big[m_1(\mathbf{x}_0),\dots,m_S(\mathbf{x}_0)\big]\,.
\]
For each start \(s\), we fit ID-only normalization statistics and form
\[
z_s(\mathbf{x})=\frac{m_s(\mathbf{x})-c_s}{\sigma_s}\,.
\]

\paragraph{Aggregation.}
The final scalar score is obtained by aggregating the per-start normalized deviations:
\[
S(\mathbf{x})=\operatorname{Agg}\big(z_1(\mathbf{x}),\dots,z_S(\mathbf{x})\big)\, ,
\]
where \(\operatorname{Agg}\) denotes mean, median, or sum.

\subsection{GEPC (harmonized faithful port)}
\label{app:gepc_details}

Our GEPC baseline is implemented as a MBE-adapted output-consistency faithful port. The central idea is preserved: if the denoiser behaves approximately equivariantly under a small discrete transformation group, then in-distribution inputs should exhibit stronger posterior consistency than OOD inputs.

\paragraph{Canonical corruption and transformed outputs.}
For each selected canonical level \(\lambda_k\), we first form
\[
\mathbf{x}_k = a_k \, \mathbf{x}_0 + b_k \, \boldsymbol{\eps}\, ,
\qquad \boldsymbol{\eps} \sim \mathcal{N}(\mathbf{0}, \mathbf{I})\, ,
\]
then evaluate the denoiser on both \(\mathbf{x}_k\) and transformed copies \(g\cdot \mathbf{x}_k\), where \(g\) belongs to a small discrete group such as flips or \(180^\circ\) rotation. From each output we recover \(\hat{\mathbf{x}}_{0,k}\) and \(\hat{\boldsymbol{\eps}}_k\), and build the score proxy
\[
\hat{\mathbf{s}}_k = -\frac{\hat{\boldsymbol{\eps}}_k}{b_k}\,.
\]

\paragraph{Consistency features.}
We compare the reference output to the transformed-and-brought-back outputs \(g^{-1}\cdot\hat{\mathbf{s}}_k(g\cdot \mathbf{x}_k)\) and \(g^{-1}\cdot\hat {\mathbf{x}}_{0,k}(g\cdot \mathbf{x}_k)\). This yields several OOD-high consistency features, including a normalized score discrepancy, a cosine consistency score, and a normalized \(\hat{\mathbf{x}}_0\)-consistency score.

\paragraph{Level-wise calibration and aggregation.}
At each level, these raw consistency features are calibrated with an ID-only head, typically KDE or z-score normalization. Feature scores are then aggregated within each level, and finally across canonical levels using mean, weighted mean, or trimmed mean aggregation.

\section{Focused Ablations}
\label{app:ablations}

We restrict appendix ablations to controls that directly test the paper's central claim: the gain of \cfs\ comes from \emph{where} the frozen diffusion backbone is probed, rather than from hidden budget, brittle hook choices, or downstream head complexity.

\subsection{Pareto budget analysis and budget-matched comparisons}
\label{app:ablation_budget}

A standard concern in diffusion OOD comparisons is hidden compute. We therefore report a budget-aware comparison in which logical test-time cost is made explicit. The goal is simple: if output-space baselines are given a matched or larger budget, does the representation-space advantage persist?

More precisely, we vary the number of selected canonical levels while keeping Monte Carlo test \(\mathrm{MC}_{\mathrm{test}}=1\), so that the dominant logical cost scales with the number of backbone evaluations. For \cfs, \(K_c\times K_s\) denotes the number of selected canonical levels and retained stage slots. Since all retained hooks at a given level are extracted in the same forward pass, logical cost depends on \(K_c\), not on \(K_s\).

\begin{table*}[t]
\centering
\caption{
\textbf{Budget-aware comparison under \mbe.}
We vary the logical test-time budget while keeping \(\mathrm{MC}_{\mathrm{test}}=1\).
This disentangles representation quality from hidden multilevel accumulation.
For \cfs, \(K_c \times K_s\) denotes the number of selected canonical levels and retained stage slots, respectively.
Because all retained hooks at a given canonical level are extracted within the same backbone forward pass, the logical cost depends on \(K_c\) only, not on \(K_s\).
}
\label{tab:budget_nfe}
\small
\setlength{\tabcolsep}{5pt}
\resizebox{\textwidth}{!}{%
\begin{tabular}{cc|cc|cc}
\toprule
Budget & Method
& \multicolumn{2}{c|}{Improved-diffusion}
& \multicolumn{2}{c}{EDM} \\
(\#F/img) &
& \(\AvgAUROC \uparrow\) & \(\AvgWorstAUROC \uparrow\)
& \(\AvgAUROC \uparrow\) & \(\AvgWorstAUROC \uparrow\) \\
\midrule
1 & \cfssubvartag{enc}{1}{1}{low}  & 0.849 & 0.773 & 0.900 & 0.805 \\
1 & \cfssubvartag{dec}{1}{1}{low}  & \second{0.886} & \second{0.793} & \best{0.919} & \second{0.809} \\
1 & \cfsvartag{1}{2}{low}          & \best{0.887} & \best{0.799} & \second{0.916} & \best{0.814} \\
1 & \cfsvartag{1}{2}{high}         & 0.497 & 0.492 & 0.498 & 0.495 \\
1 & \cfssubvartag{enc}{1}{1}{high} & 0.500 & 0.496 & 0.498 & 0.494 \\
1 & \cfssubvartag{dec}{1}{1}{high} & 0.495 & 0.486 & 0.498 & 0.495 \\
\addlinespace
2 & \cfsvartag{2}{1}{enc}          & 0.839 & 0.761 & 0.880 & 0.781 \\
2 & \cfsvartag{2}{1}{dec}          & 0.853 & 0.762 & 0.906 & 0.800 \\
2 & \cfsvar{2}{2}                  & 0.868 & 0.779 & 0.902 & 0.804 \\
2 & \cfsvar{2}{4}                  & 0.873 & 0.783 & 0.900 & 0.794 \\
\addlinespace
2 & \msma                           & 0.781 & 0.652 & 0.770 & 0.645 \\
2 & \diffpath                       & 0.802 & 0.633 & 0.778 & 0.630 \\
\addlinespace
4 & \cfs\,\(4\times2\)                & 0.843 & 0.763 & 0.873 & 0.791 \\
4 & \msma                           & 0.785 & 0.669 & 0.790 & 0.673 \\
4 & \diffpath                       & 0.766 & 0.642 & 0.787 & 0.631 \\
\addlinespace
8 & \msma                           & 0.779 & 0.675 & 0.798 & 0.682 \\
8 & \diffpath                       & 0.779 & 0.640 & 0.792 & 0.634 \\
\bottomrule
\end{tabular}}
\end{table*}


Table~\ref{tab:budget_nfe} shows that the best one-forward operating points are already sparse \cfs\ variants. In particular, \(\cfs_{\mathrm{dec}}(1\times1_{\mathrm{low}})\) and \(\cfs(1\times2_{\mathrm{low}})\) dominate budget-matched output-space baselines while using only one backbone evaluation per image. Richer \cfs\ variants improve representation size without changing the backbone cost at fixed \(K_c\), but the gains beyond the strongest one-forward operating points are modest. The key conclusion is therefore not merely that \cfs\ is accurate, but that its gain does not come from hidden multilevel accumulation.

\subsection{Hook-pair robustness and proxy validation}
\label{app:ablation_hooks}

A natural concern is that the encoder-decoder pair selected by \cfs{}$(1\times2)$ could be fragile or cherry-picked. To test this, we evaluate multiple admissible encoder and decoder candidates within the same structural regions and report (i) the pairwise performance landscape, (ii) the relation between the ID-side pair proxy and final pair performance, and (iii) a complementary region-wise proxy validation. All results are computed at the fixed low-noise canonical level used in the main paper.

\begin{figure*}[t]
\centering


\includegraphics[width=.48\textwidth]{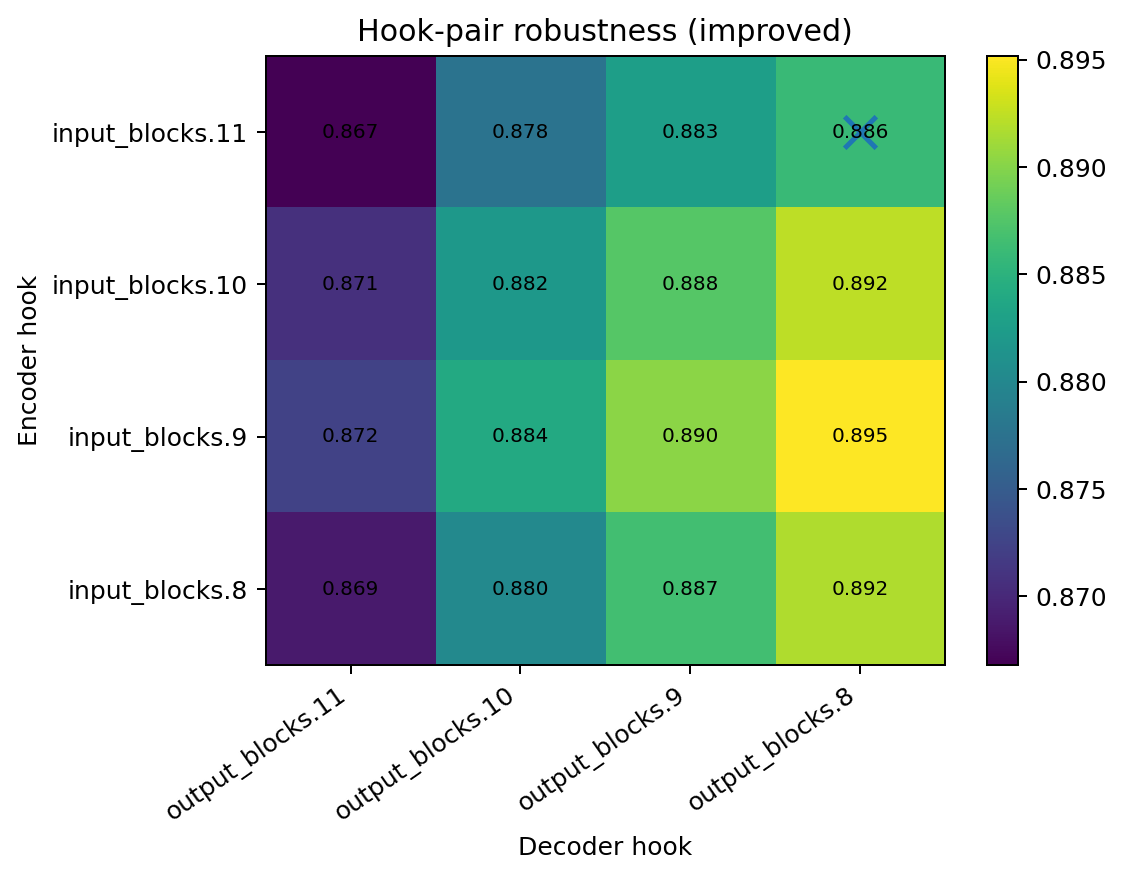}
\hfill
\includegraphics[width=.48\textwidth]{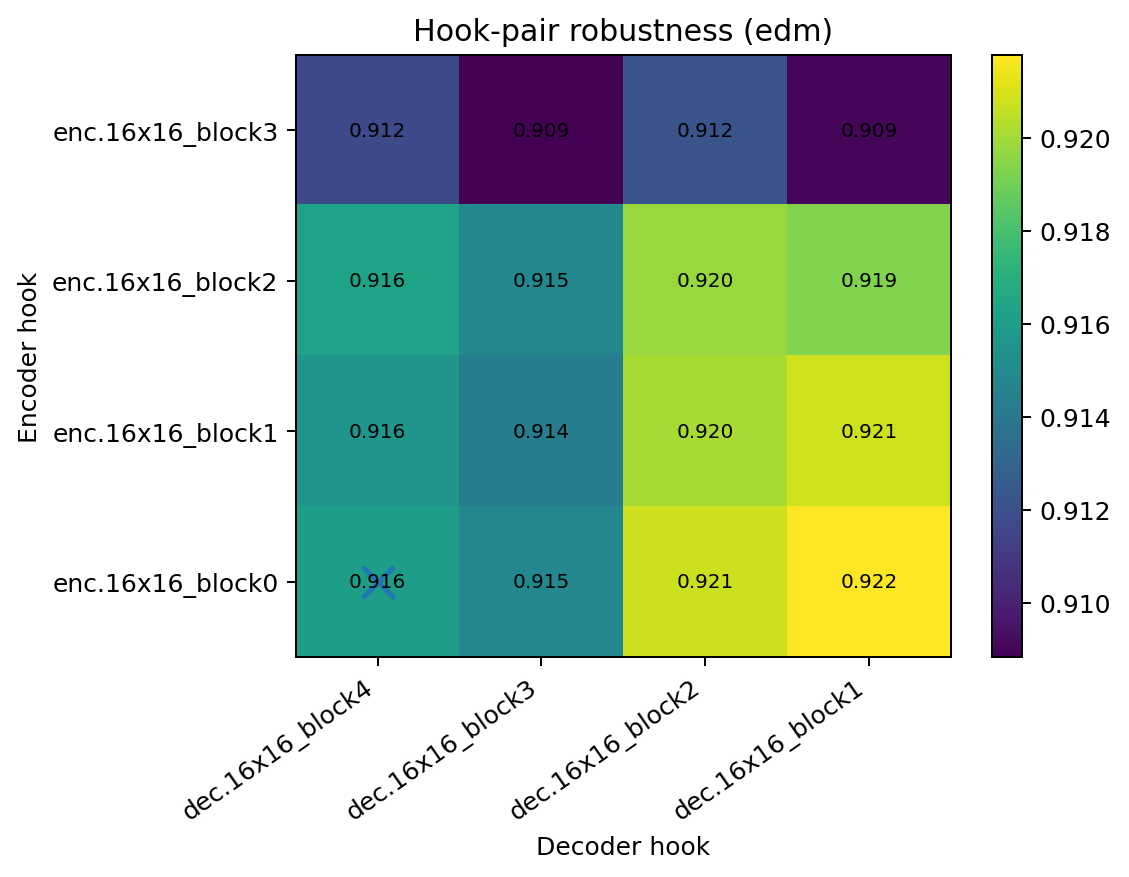}
\caption{
\textbf{Corresponding hook-pair heatmaps for \cfs{}$(1\times2)$.}
Each cell reports the Avg AUROC of one admissible pair at the low-noise canonical level used in the main paper.
The marker denotes the final pair obtained from the ID-side proxy shortlist.
For the improved backbone, the selected pair falls in the same high-performing basin as the oracle pair.
For EDM, the dominant pattern is a broad plateau of strong pairs, indicating low sensitivity to the exact hook choice even when the proxy is less predictive.
}
\label{fig:hook_pair_robustness}
\end{figure*}

Figure~\ref{fig:hook_pair_robustness} supports robustness more strongly than exact pair recovery. For the improved backbone, the landscape is structured rather than spiky: the proxy-selected pair reaches \(0.886\) Avg AUROC, versus \(0.895\) for the oracle admissible pair, so the proxy lands in the correct high-performing basin even without identifying the exact optimum. For EDM, the heatmap is flatter, and the main conclusion is different: several nearby pairs perform similarly well, so the method is not driven by a single brittle choice.



\begin{table}[t]
\centering
\caption{
\textbf{Region-wise proxy-selected modules versus empirical oracle modules for \cfs{}$(1\times2)$.}
For each backbone and region, we compare the downstream \(\AvgAUROC\) obtained with the proxy-selected module to that obtained with the best admissible module in hindsight, while keeping the opposite region fixed to its proxy-selected choice.
Small gaps indicate that the proxy-selected module lies close to the empirical oracle without using any OOD labels.
}
\label{tab:proxy_oracle_gap}
\small
\begin{tabular}{llccc}
\toprule
Backbone & Region & Proxy-selected \(\AvgAUROC \uparrow\) & Oracle \(\AvgAUROC \uparrow\) & Gap \(\downarrow\) \\
\midrule
Improved-diffusion & Encoder & 0.8861 & 0.8952 & 0.0091 \\
Improved-diffusion & Decoder & 0.8861 & 0.8861 & 0.0000 \\
EDM                & Encoder & 0.9161 & 0.9163 & 0.0002 \\
EDM                & Decoder & 0.9161 & 0.9218 & 0.0057 \\
\bottomrule
\end{tabular}
\end{table}

Table~\ref{tab:proxy_oracle_gap} shows that, on improved-diffusion, the proxy is more informative on the decoder side than on the encoder side, matching the stronger decoder-column structure in the heatmap. On EDM, the proxy is weaker as a regional ranker, but the oracle gaps remain small. Overall, the selection rule does not need to find the exact best pair; it only needs to place \(\cfs(1{\times}2)\) in a stable, high-performing part of the network without exhaustive search.

\subsection{Canonical-level robustness}
\label{app:ablation_level_heatmap}

The main paper identifies low-noise canonical probing as the strongest operating regime. We now test whether this effect reflects a stable region or a narrowly tuned choice of canonical level.

Since the final method uses single-level variants, we sweep the canonical level \(\lambda\) and evaluate two representative detectors: \(\cfs_{\mathrm{dec}}(1{\times}1)\) and \(\cfs(1{\times}2)\). The key question is whether performance remains strong over a neighborhood of low-noise levels, rather than peaking at a single hand-picked value.

\begin{figure*}[t]
\centering
\includegraphics[width=.48\textwidth]{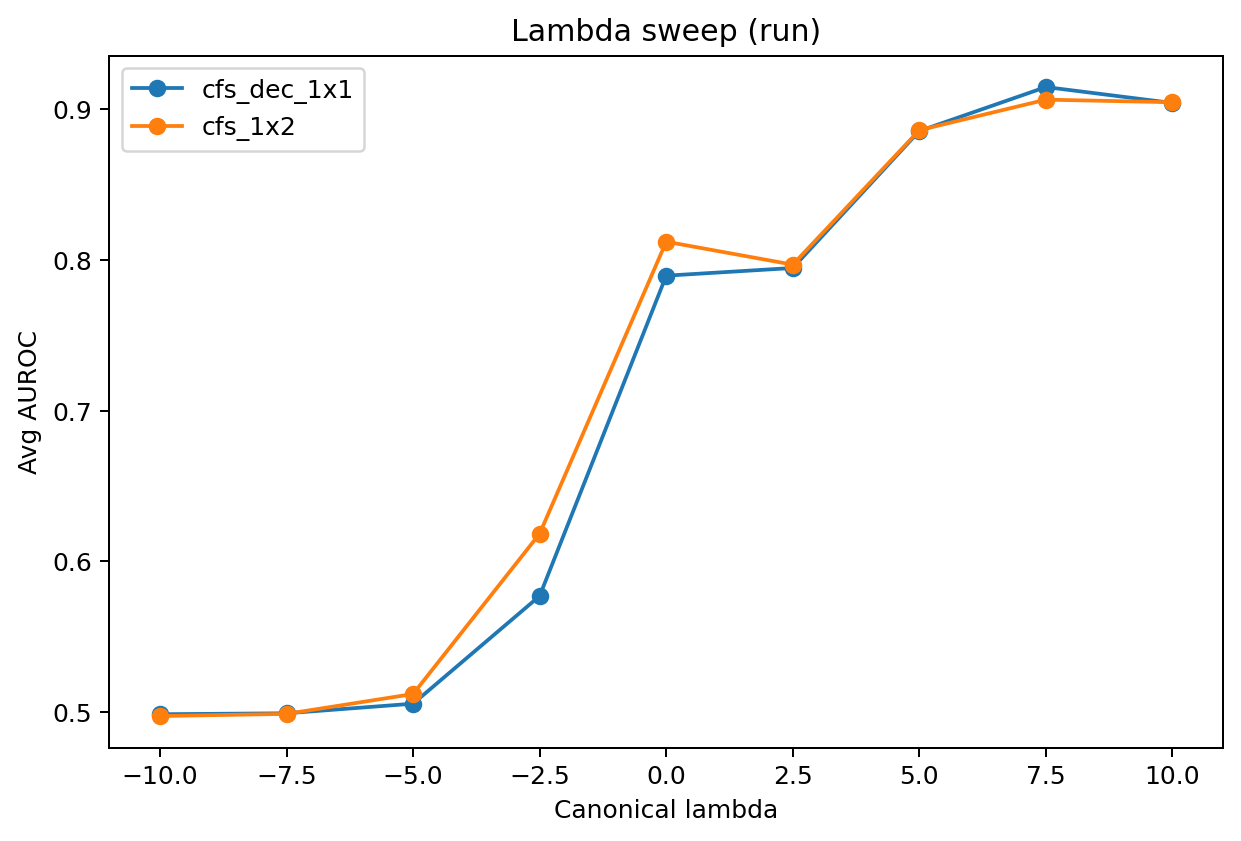}
\hfill
\includegraphics[width=.48\textwidth]{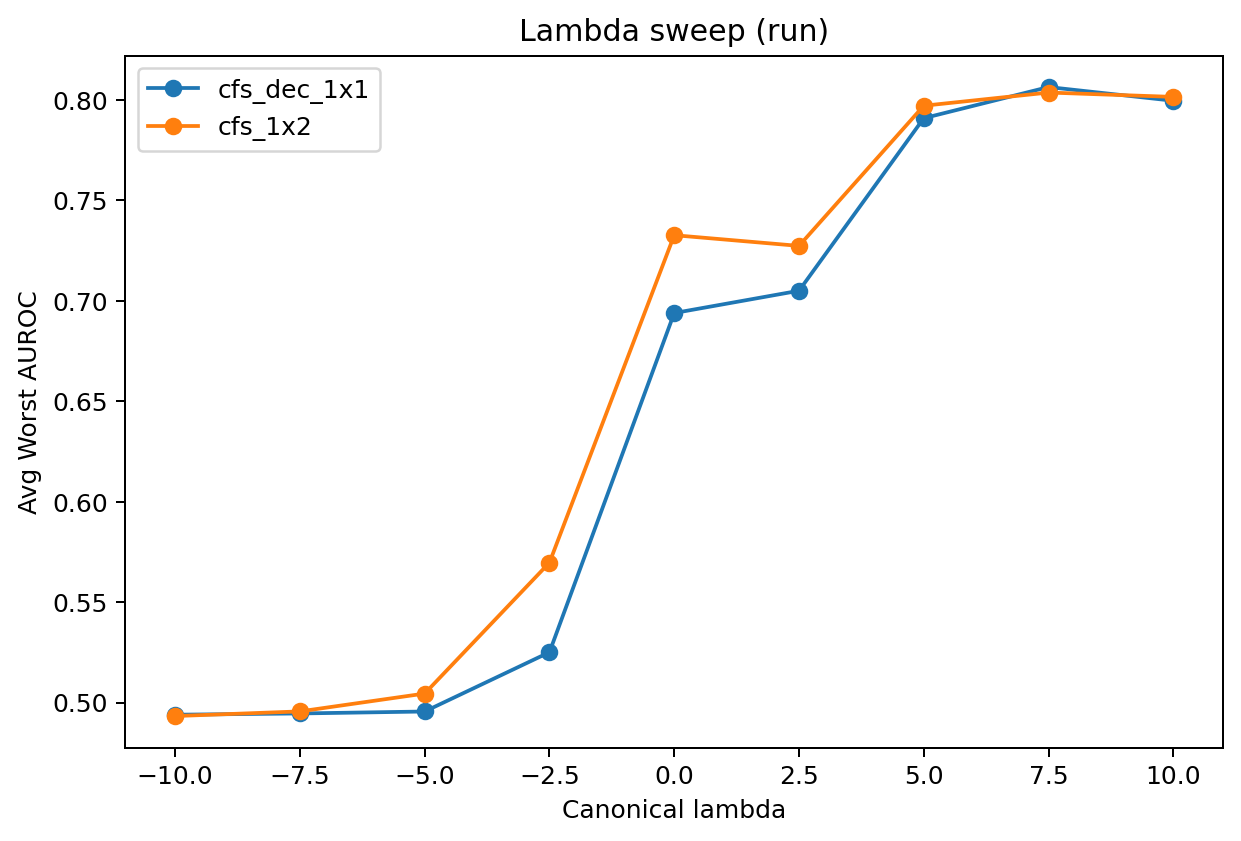}

\vspace{0.4em}

\includegraphics[width=.48\textwidth]{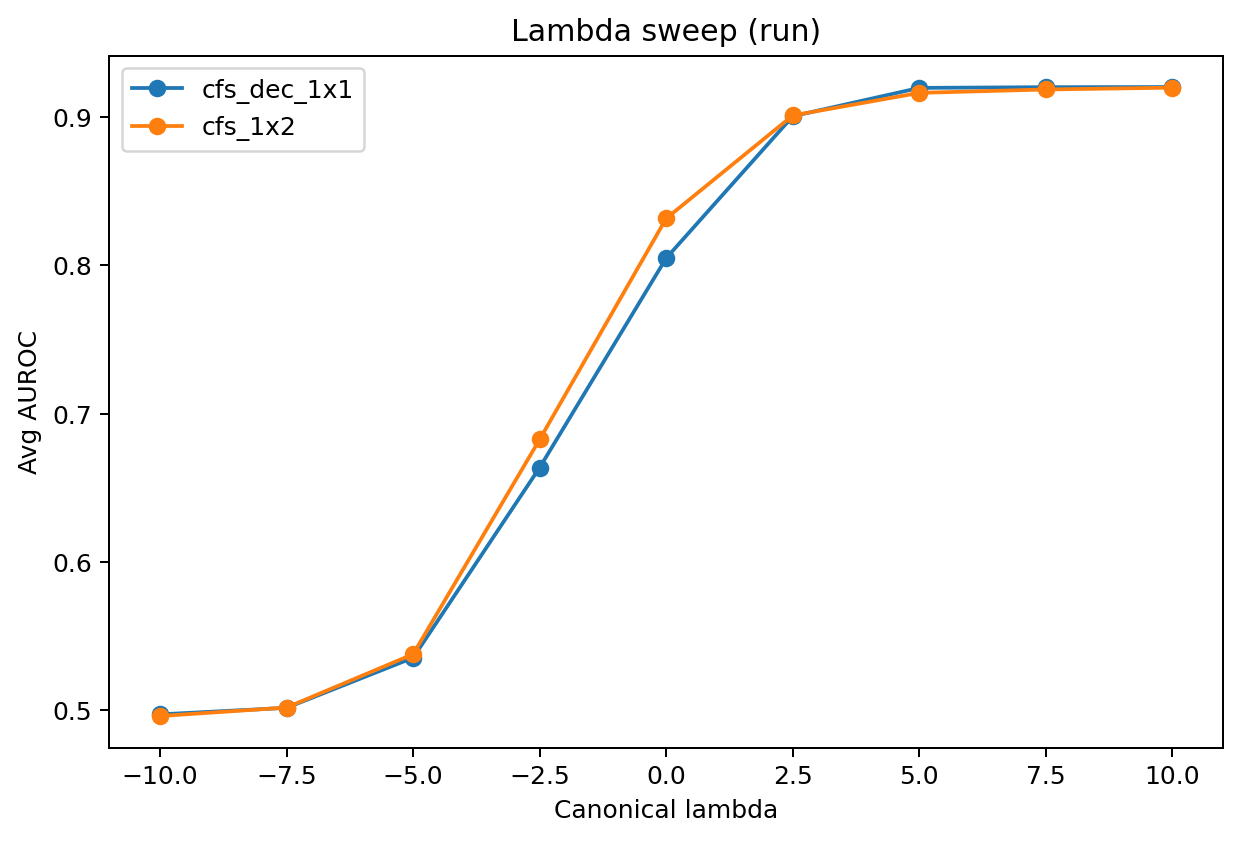}
\hfill
\includegraphics[width=.48\textwidth]{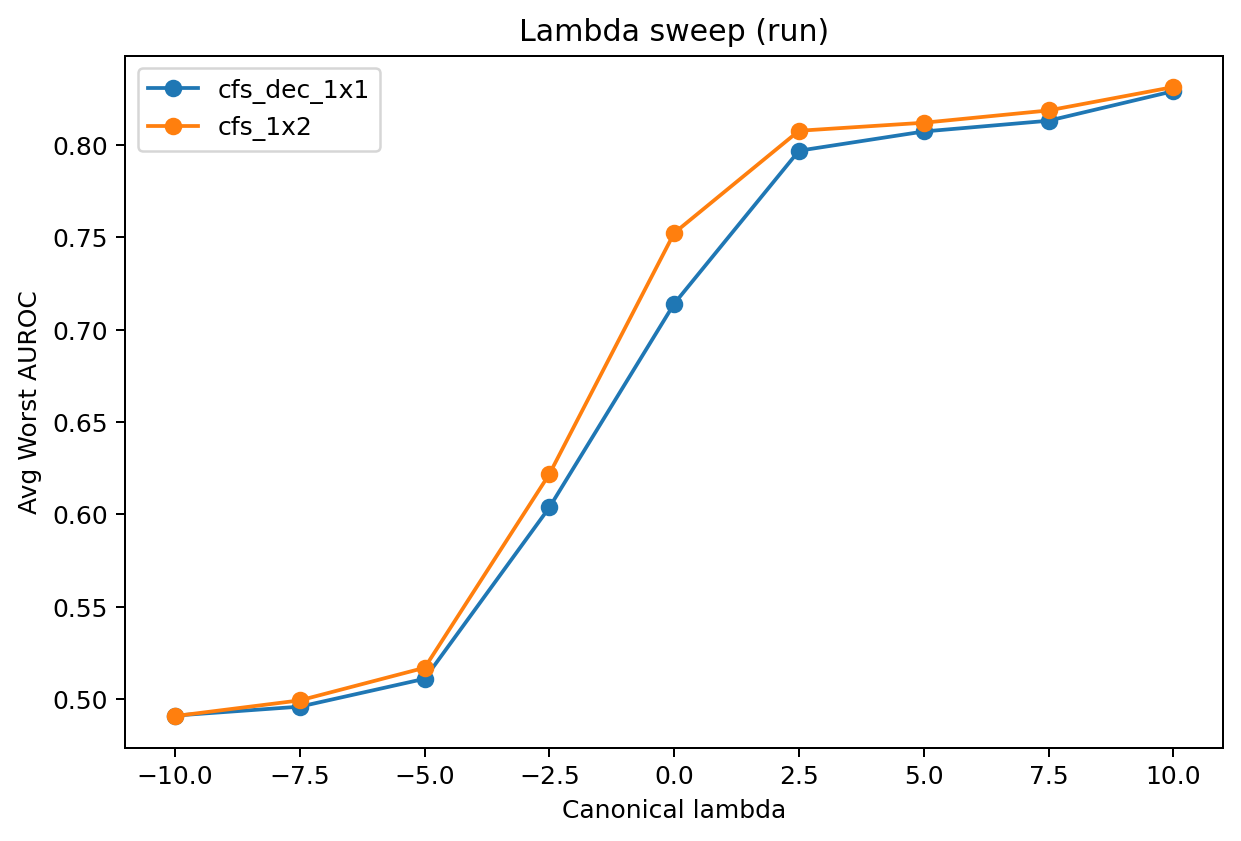}
\caption{
\textbf{Canonical-level robustness for single-level \cfs\ variants.}
\textbf{Top:} improved-diffusion backbone.
\textbf{Bottom:} EDM backbone.
\textbf{Left:} \(\AvgAUROC\). \textbf{Right:} \(\AvgWorstAUROC\).
Across both backbones, performance is near chance for strongly negative \(\lambda\), rises sharply through the intermediate regime, and then enters a broad high-performing plateau at positive \(\lambda\).
This supports the claim that the low-noise advantage is robust rather than tied to a single finely tuned canonical level.
}
\label{fig:lambda_heatmaps}
\end{figure*}

Figure~\ref{fig:lambda_heatmaps} shows the same qualitative pattern on both backbones. For strongly negative \(\lambda\), both detectors remain close to chance, indicating that highly corrupted canonical levels are not useful. Performance then rises rapidly as \(\lambda\) moves toward lower-noise regimes, before flattening into a broad plateau for positive \(\lambda\). Thus, the benefit of low-noise probing is not confined to one operating point.

There are, however, mild backbone-specific differences. On improved-diffusion, \(\cfs(1{\times}2)\) is most helpful in the transition regime, whereas on the final high-\(\lambda\) plateau the gap to decoder-only probing becomes small. EDM shows an even flatter plateau once \(\lambda\) becomes positive. Overall, the sweeps suggest that the canonical level is the primary driver, whereas richer representation composition mainly improves the intermediate regime and worst-case robustness.

\subsection{\cfs family analysis}
\label{app:ablation_repr_comp}

The role of representation composition is already visible in the budget-matched rows of Table~\ref{tab:budget_nfe}, so we do not duplicate those results. At fixed logical cost, the comparison between \(\cfs(2\times1_{\mathrm{enc}})\), \(\cfs(2\times1_{\mathrm{dec}})\), and \(\cfs(2\times2)\) shows how performance changes when using encoder-only, decoder-only, or combined encoder-decoder snapshots across two canonical levels.

Decoder-only probing already captures most of the gain once the canonical level reaches the strong low-noise regime. Encoder-decoder composition is most useful before the final plateau is reached and for worst-case robustness. The main claim is therefore not that composition is always necessary, but that it provides a meaningful robustness margin beyond an already strong decoder-only baseline.

\subsection{Pooling rule}
\label{app:ablation_pooling}

We compare mean-only, std-only, and mean+std pooling for \(\cfs(1{\times}2)\). This tests whether first-order channel averages suffice or whether within-channel dispersion carries additional useful signal once both encoder and decoder snapshots are retained. Results are presented in Table~\ref{tab:pooling_ablation}.

\begin{table}[t]
\centering
\caption{
\textbf{Pooling rule ablation for \(\cfs(1{\times}2)\).}
We report both backbone families separately to test whether the gain of the primary detector relies only on first-order channel averages or also on within-channel dispersion captured by standard deviation pooling.
}
\label{tab:pooling_ablation}
\small
\setlength{\tabcolsep}{4.5pt}
\begin{tabular}{lcccc}
\toprule
Pooling
& \multicolumn{2}{c}{Improved-diffusion}
& \multicolumn{2}{c}{EDM} \\
\cmidrule(lr){2-3}\cmidrule(lr){4-5}
& \(\AvgAUROC \uparrow\) & \(\AvgWorstAUROC \uparrow\)
& \(\AvgAUROC \uparrow\) & \(\AvgWorstAUROC \uparrow\) \\
\midrule
Mean only   & 0.8645 & 0.7785 & 0.9078 & \textbf{0.8150} \\
Std only    & 0.8684 & 0.7381 & 0.9047 & 0.7693 \\
Mean + Std  & \textbf{0.8870} & \textbf{0.7990} & \textbf{0.9160} & 0.8140 \\
\bottomrule
\end{tabular}
\end{table}

Combining mean and standard-deviation pooling is clearly beneficial on improved-diffusion, where it gives the best average and worst-case performance. On EDM, mean+std gives the best average AUROC, while mean-only is marginally better in AvgWorstAUROC. Overall, dispersion features help the main operating point, but their gain is not perfectly uniform across criteria.

\subsection{Diagonal score versus alternative ID-only scores}
\label{app:ablation_heads}

The main paper uses a lightweight diagonal score so that the comparison stays focused on representation quality rather than downstream classifier capacity. We now test whether the strong performance of \(\cfs(1{\times}2)\) persists under stronger but still ID-only scores.

\begin{table}[t]
\centering
\caption{
\textbf{Head sensitivity under the same sparse representation for \(\cfs(1{\times}2)\).}
We report both backbone families separately.
The goal is to test whether the usefulness of the primary detector comes mainly from the selected representation rather than from a particular downstream head.
}
\label{tab:head_ablation}
\small
\setlength{\tabcolsep}{4.5pt}
\begin{tabular}{lcccc}
\toprule
Head
& \multicolumn{2}{c}{Improved-diffusion}
& \multicolumn{2}{c}{EDM} \\
\cmidrule(lr){2-3}\cmidrule(lr){4-5}
& \(\AvgAUROC \uparrow\) & \(\AvgWorstAUROC \uparrow\)
& \(\AvgAUROC \uparrow\) & \(\AvgWorstAUROC \uparrow\) \\
\midrule
Diagonal (main paper) & \second{0.8861} & 0.7973 & 0.9161 & 0.8122 \\
Shrinkage covariance  & 0.8763 & \second{0.8234} & \second{0.9207} & \second{0.8618} \\
KNN                   & \best{0.9101} & \best{0.8622} & \best{0.9501} & \best{0.9043} \\
GMM-light             & 0.8597 & 0.7838 & 0.9121 & 0.7837 \\
\bottomrule
\end{tabular}
\end{table}

Table~\ref{tab:head_ablation} shows that the sparse \(\cfs(1{\times}2)\) representation remains strong across several ID-only heads, which supports the view that the main signal comes from the selected features rather than from a specialized classifier. At the same time, the exact ranking is not head-invariant: KNN is strongest on both backbones, and shrinkage covariance also improves worst-case robustness over the diagonal head.

We therefore interpret the diagonal score as a conservative evaluation choice rather than as the empirically strongest one. Its role in the main paper is to keep the detector lightweight and to avoid conflating feature quality with additional head capacity.

\subsection{ID-fit bank size sensitivity}
\label{app:ablation_bank_size}

Because \(\cfs(1{\times}2)\) is calibrated from ID-only reference statistics, practical performance may depend on the size of the ID-fit bank. We therefore vary the number of ID-fit samples used to estimate the slot statistics and report the resulting performance.

\begin{table}[t]
\centering
\caption{
\textbf{Sensitivity to the ID-fit bank size for \(\cfs(1{\times}2)\).}
Each entry reports the mean over three random seeds.
The goal is to assess whether the sparse two-slot representation reaches stable performance with moderate ID-only calibration data.
}
\label{tab:bank_size_ablation}
\small
\setlength{\tabcolsep}{4.5pt}
\begin{tabular}{ccccc}
\toprule
ID-fit bank size
& \multicolumn{2}{c}{Improved-diffusion}
& \multicolumn{2}{c}{EDM} \\
\cmidrule(lr){2-3}\cmidrule(lr){4-5}
& \(\AvgAUROC \uparrow\) & \(\AvgWorstAUROC \uparrow\)
& \(\AvgAUROC \uparrow\) & \(\AvgWorstAUROC \uparrow\) \\
\midrule
100  & 0.8873 & 0.8031 & 0.9177 & 0.8207 \\
500  & 0.8871 & 0.8006 & 0.9164 & 0.8150 \\
1000 & 0.8863 & 0.7982 & 0.9161 & 0.8117 \\
Full & 0.8866 & 0.7979 & 0.9164 & 0.8127 \\
\bottomrule
\end{tabular}
\end{table}

Table~\ref{tab:bank_size_ablation} shows that \(\cfs(1{\times}2)\) is only weakly sensitive to the ID-fit bank size on both backbones. Even a bank of 100 ID samples already performs very close to the larger-bank regime. There is therefore no evidence that the main operating point depends on an unusually large calibration set.

\subsection{Statistical stability across random seeds}
\label{app:ablation_seeds}

Finally, we report repeated runs that vary the stochastic seed in the evaluation pipeline. The goal is not to estimate every source of variance exhaustively, but to check whether the main conclusions remain stable under the stochastic components that most directly affect the reported scores.

Tables~\ref{tab:seed_stability_improved} and~\ref{tab:seed_stability_edm} report seed stability for both \(\cfs(1{\times}2)\) and \(\cfs_{\mathrm{dec}}(1{\times}1)\).

\begin{table}[t]
\centering
\caption{
\textbf{Statistical stability across random seeds on improved-diffusion.}
We report mean \(\pm\) standard deviation over repeated runs.
The very small deviations indicate that the comparison between the two sparse CFS variants is not driven by stochastic fluctuations.
}
\label{tab:seed_stability_improved}
\small
\setlength{\tabcolsep}{4.5pt}
\begin{tabular}{lccc}
\toprule
Method & \(\AvgAUROC \uparrow\) & \(\AvgWorstAUROC \uparrow\) & FPR95 \(\downarrow\) \\
\midrule
\(\cfs_{\mathrm{dec}}(1{\times}1)\)   & \(0.88566 \pm 0.00008\) & \(0.79103 \pm 0.00015\) & \(0.35678 \pm 0.00016\) \\
\(\cfs(1{\times}2)\)                  & \(0.88609 \pm 0.00002\) & \(0.79721 \pm 0.00003\) & \(0.33546 \pm 0.00024\) \\
\bottomrule
\end{tabular}
\end{table}

\begin{table}[t]
\centering
\caption{
\textbf{Statistical stability across random seeds on EDM.}
We report mean \(\pm\) standard deviation over repeated runs.
Again, the deviations are extremely small, showing that the reported trade-off between the two sparse CFS variants is stable across seeds.
}
\label{tab:seed_stability_edm}
\small
\setlength{\tabcolsep}{4.5pt}
\begin{tabular}{lccc}
\toprule
Method & \(\AvgAUROC \uparrow\) & \(\AvgWorstAUROC \uparrow\) & FPR95 \(\downarrow\) \\
\midrule
\(\cfs_{\mathrm{dec}}(1{\times}1)\)   & \(0.91950 \pm 0.00009\) & \(0.80730 \pm 0.00027\) & \(0.18149 \pm 0.00021\) \\
\(\cfs(1{\times}2)\)                  & \(0.91618 \pm 0.00009\) & \(0.81223 \pm 0.00005\) & \(0.19132 \pm 0.00012\) \\
\bottomrule
\end{tabular}
\end{table}

Across both backbones, the standard deviations are tiny, so the observed rankings are highly stable across repeated runs. For improved-diffusion, \(\cfs(1{\times}2)\) consistently remains slightly stronger than \(\cfs_{\mathrm{dec}}(1{\times}1)\) in both average and worst-case AUROC while also improving FPR95. For EDM, the trade-off is equally stable but slightly different: \(\cfs_{\mathrm{dec}}(1{\times}1)\) retains a small advantage in \(\AvgAUROC\) and FPR95, whereas \(\cfs(1{\times}2)\) retains the better \(\AvgWorstAUROC\).

\subsection{Architecture-transfer sanity check}
\label{app:arch_transfer}

Although our main experiments use U-Net-style diffusion backbones, CFS only requires access to block-level internal activations at canonical corruption levels. As a sanity check, we apply the same sparse snapshot construction to a transformer-based diffusion backbone, U-ViT. Unlike convolutional U-Nets, U-ViT keeps a token sequence of nearly constant size across its transformer blocks; its U-shaped structure comes from input blocks, a middle block, output blocks, and long skip connections rather than from explicit spatial downsampling and upsampling. We therefore replace the U-Net encoder/decoder hook taxonomy by early, middle, and late transformer-block snapshots, and pool token features using channel-wise mean and standard deviation over image tokens.

The purpose is to test whether the sparse-snapshot principle transfers beyond convolutional U-Nets. All runs use the same low-noise canonical level, with target $\lambda=5.0$, effective $\lambda=5.0329$, native U-ViT timestep $t=21$, and $b^2=6.48\times 10^{-3}$. The hooked U-ViT activations have shape $B\times 257\times 512$, corresponding to 256 image tokens plus one time token; CFS pools the image tokens only. We also include an output-space baseline using the same U-ViT checkpoint and the same canonical level: it scores images by the final reconstruction MSE $\|\hat{\mathbf{x}}_0-\mathbf{x}_0\|^2$ with an ID-only one-dimensional density fit.

\begin{table}[t]
\centering
\caption{\textbf{Architecture-transfer sanity check on U-ViT.}
CFS probes early, middle, and late U-ViT block outputs instead of U-Net encoder/decoder hooks. All variants use one canonical low-noise level, one backbone forward, token mean/std pooling, and an ID-only diagonal score. The output baseline uses the same checkpoint, level, and one-forward budget, but scores only the final reconstruction error. Averages are computed over 12 ordered pairs: for each ID dataset in \{CIFAR-10, SVHN, CelebA32\}, we evaluate against the other two source datasets plus CIFAR-100 and DTD as OOD.}
\label{tab:arch_transfer_transformer}
\small
\begin{tabular}{llcccc}
\toprule
Backbone & Method & Region / signal & AvgAUROC $\uparrow$ & AvgFPR95 $\downarrow$ & Cost $\downarrow$ \\
\midrule
U-ViT-CIFAR10 & Output recon. MSE & output & 0.778 & 0.653 & 1F \\
U-ViT-CIFAR10 & CFS-early & \texttt{in\_blocks.2} & 0.872 & 0.356 & 1F \\
U-ViT-CIFAR10 & CFS-mid & \texttt{mid\_block} & 0.874 & 0.292 & 1F \\
U-ViT-CIFAR10 & CFS-late & \texttt{out\_blocks.2} & \second{0.891} & \best{0.213} & 1F \\
U-ViT-CIFAR10 & CFS-early+late & \texttt{in\_blocks.2}+\texttt{out\_blocks.2} & \best{0.896} & \second{0.238} & 1F \\
\bottomrule
\end{tabular}
\end{table}

Tables~\ref{tab:arch_transfer_transformer} and~\ref{tab:arch_transfer_transformer_breakdown} support the architecture-transfer interpretation: sparse internal snapshots remain informative even when spatial U-Net feature maps are replaced by token-level transformer states. Importantly, the gain does not come merely from using a low-noise U-ViT forward: the output reconstruction baseline reaches only $0.778$ AvgAUROC and $0.653$ AvgFPR95, whereas CFS-late improves to $0.891$ AvgAUROC and $0.213$ AvgFPR95 at the same one-forward cost. The paired early+late variant slightly improves AvgAUROC, while the late snapshot gives the best AvgFPR95, suggesting that late transformer blocks already concentrate most of the useful signal in this setting. We emphasize that this result should not be read as a new U-ViT OOD benchmark: performance is not uniform across all source datasets, and the CIFAR-10-as-ID case remains the main failure mode.

\begin{table}[t]
\centering
\caption{\textbf{Per-source breakdown of the U-ViT architecture-transfer sanity check.}
Each cell reports the average over the four OOD datasets associated with the corresponding ID source. The output baseline uses final reconstruction MSE, while CFS variants use internal token snapshots. The main failure mode is CIFAR-10 as ID, especially against CelebA32 and CIFAR-100.}
\label{tab:arch_transfer_transformer_breakdown}
\small
\begin{tabular}{llccc}
\toprule
Method & Metric & ID=SVHN & ID=CIFAR-10 & ID=CelebA32 \\
\midrule
Output recon. MSE & AvgAUROC $\uparrow$ & 0.927 & 0.594 & 0.813 \\
Output recon. MSE & AvgFPR95 $\downarrow$ & 0.467 & 0.803 & 0.687 \\
\midrule
CFS-early & AvgAUROC $\uparrow$ & 0.945 & \best{0.722} & 0.949 \\
CFS-early & AvgFPR95 $\downarrow$ & 0.238 & 0.638 & 0.191 \\
\midrule
CFS-mid & AvgAUROC $\uparrow$ & 0.979 & 0.652 & \second{0.991} \\
CFS-mid & AvgFPR95 $\downarrow$ & 0.086 & 0.747 & 0.044 \\
\midrule
CFS-late & AvgAUROC $\uparrow$ & \best{0.996} & 0.686 & \best{0.992} \\
CFS-late & AvgFPR95 $\downarrow$ & \best{0.016} & \second{0.587} & \best{0.036} \\
\midrule
CFS-early+late & AvgAUROC $\uparrow$ & \second{0.985} & \second{0.719} & 0.984 \\
CFS-early+late & AvgFPR95 $\downarrow$ & \second{0.072} & \best{0.575} & \second{0.066} \\
\bottomrule
\end{tabular}
\end{table}

\section{Extended CIFAR-Scale Results}
\label{app:cifar}

This section provides the full pairwise breakdown behind the compact summaries shown in the main paper. Its role is descriptive rather than argumentative: it exposes where the main representation-space advantage comes from, pair by pair, under the same controlled protocol.



We begin with the full pairwise breakdown for the primary shared-source benchmark, i.e.\ the CIFAR10-source checkpoint family used in the main paper. Tables~\ref{tab:cifar10_source_full_auroc} and~\ref{tab:cifar10_source_full_fpr95} report AUROC and FPR95 for all ID\(\to\)OOD pairs. To keep the appendix readable, we expose pairwise operating-point behavior mainly for the two main sparse variants, \(\cfs_{\mathrm{dec}}(1{\times}1)\) and \(\cfs(1{\times}2)\).

\begin{table*}[t]
\centering
\caption{
\textbf{Full per-pair AUROC on the CIFAR-scale benchmark under the primary CIFAR10-source policy.}
Each entry corresponds to one ID\(\to\)OOD pair.
The same fixed shared-source protocol is used for all methods.
All metrics are reported with three decimal places.
}
\label{tab:cifar10_source_full_auroc}
\scriptsize
\resizebox{\textwidth}{!}{%
\begin{tabular}{llcccc|cccc|cccc}
\toprule
& & \multicolumn{4}{c|}{CIFAR10 as ID}
  & \multicolumn{4}{c|}{SVHN as ID}
  & \multicolumn{4}{c}{CelebA32 as ID} \\
\cmidrule(lr){3-6}\cmidrule(lr){7-10}\cmidrule(lr){11-14}
Method & Backbone
& SVHN & CelebA32 & CIFAR100 & DTD
& CIFAR10 & CelebA32 & CIFAR100 & DTD
& CIFAR10 & SVHN & CIFAR100 & DTD \\
\midrule
\msma     & improved & 0.646 & 0.438 & 0.579 & 0.769
                    & 0.916 & 0.925 & 0.935 & 0.936
                    & 0.710 & 0.950 & 0.762 & 0.936 \\
\msma     & EDM      & 0.663 & 0.490 & 0.606 & 0.796
                    & 0.937 & 0.962 & 0.936 & \second{0.941}
                    & 0.640 & 0.933 & 0.738 & 0.908 \\
\addlinespace
\diffpath & improved & 0.886 & 0.371 & 0.550 & 0.683
                    & 0.940 & 0.965 & 0.946 & 0.914
                    & 0.640 & 0.955 & 0.688 & 0.797 \\
\diffpath & EDM      & 0.832 & 0.316 & 0.561 & 0.780
                    & 0.903 & 0.960 & 0.891 & 0.924
                    & 0.697 & 0.970 & 0.763 & 0.911 \\
\addlinespace
\ddpmood  & improved & 0.132 & 0.527 & 0.538 & 0.433
                    & 0.894 & 0.947 & 0.904 & 0.756
                    & 0.476 & 0.059 & 0.522 & 0.415 \\
\ddpmood  & EDM      & 0.124 & \best{0.586} & 0.568 & 0.431
                    & 0.907 & 0.965 & 0.926 & 0.802
                    & 0.441 & 0.035 & 0.516 & 0.409 \\
\addlinespace
\gepc     & improved & 0.584 & 0.487 & 0.494 & 0.600
                    & 0.754 & 0.580 & 0.595 & 0.584
                    & 0.736 & 0.659 & 0.571 & 0.747 \\
\gepc     & EDM      & 0.917 & 0.317 & 0.506 & 0.680
                    & 0.963 & 0.954 & 0.855 & 0.798
                    & 0.799 & 0.962 & 0.686 & 0.848 \\
\midrule
\textbf{\(\cfs_{\mathrm{dec}}(1{\times}1)\)} & improved & \second{0.925} & \second{0.540} & \second{0.687} & 0.941
                                           & 0.964 & 0.897 & 0.873 & 0.893
                                           & 0.983 & 0.973 & 0.966 & 0.990 \\
\textbf{\(\cfs_{\mathrm{dec}}(1{\times}1)\)} & EDM      & \best{0.969} & 0.451 & \best{0.697} & \best{0.968}
                                           & \best{0.997} & \best{0.993} & \second{0.986} & \best{0.991}
                                           & \best{0.993} & \best{0.999} & \best{0.990} & \best{0.999} \\
\textbf{\(\cfs(1{\times}2)\)}     & improved & 0.888 & 0.524 & 0.635 & 0.932
                                           & 0.966 & 0.947 & 0.907 & 0.917
                                           & 0.981 & \second{0.984} & 0.966 & 0.993 \\
\textbf{\(\cfs(1{\times}2)\)}     & EDM      & \best{0.969} & 0.468 & 0.665 & \second{0.959}
                                           & \second{0.996} & \second{0.990} & \best{0.989} & \best{0.991}
                                           & \second{0.988} & \best{0.999} & \second{0.986} & \second{0.998} \\
\bottomrule
\end{tabular}%
}
\end{table*}

\begin{table*}[t]
\centering
\caption{
\textbf{Full per-pair FPR95 on the CIFAR-scale benchmark under the primary CIFAR10-source policy.}
Each entry corresponds to one ID\(\to\)OOD pair.
Lower is better.
All metrics are reported with three decimal places.
}
\label{tab:cifar10_source_full_fpr95}
\scriptsize
\resizebox{\textwidth}{!}{%
\begin{tabular}{llcccc|cccc|cccc}
\toprule
& & \multicolumn{4}{c|}{CIFAR10 as ID}
  & \multicolumn{4}{c|}{SVHN as ID}
  & \multicolumn{4}{c}{CelebA32 as ID} \\
\cmidrule(lr){3-6}\cmidrule(lr){7-10}\cmidrule(lr){11-14}
Method & Backbone
& SVHN & CelebA32 & CIFAR100 & DTD
& CIFAR10 & CelebA32 & CIFAR100 & DTD
& CIFAR10 & SVHN & CIFAR100 & DTD \\
\midrule
\msma     & improved & 0.828 & 0.966 & 0.925 & 0.780
                    & 0.416 & 0.310 & 0.367 & 0.387
                    & 0.872 & 0.273 & 0.825 & 0.412 \\
\msma     & EDM      & 0.839 & 0.945 & 0.908 & 0.782
                    & 0.339 & 0.169 & 0.356 & 0.380
                    & 0.904 & 0.266 & 0.841 & 0.574 \\
\addlinespace
\diffpath & improved & 0.335 & 0.969 & 0.931 & 0.882
                    & 0.381 & 0.185 & 0.370 & 0.590
                    & 0.897 & 0.188 & 0.868 & 0.745 \\
\diffpath & EDM      & 0.435 & 0.979 & 0.927 & 0.785
                    & 0.641 & 0.206 & 0.704 & 0.539
                    & 0.891 & 0.139 & 0.840 & 0.553 \\
\addlinespace
\ddpmood  & improved & 0.996 & 0.853 & 0.948 & 0.996
                    & 0.548 & 0.229 & 0.523 & 0.969
                    & 0.993 & 1.000 & 0.993 & 1.000 \\
\ddpmood  & EDM      & 0.996 & \best{0.809} & 0.932 & 0.993
                    & 0.468 & 0.134 & 0.387 & 0.900
                    & 0.998 & 1.000 & 0.996 & 1.000 \\
\addlinespace
\gepc     & improved & 0.826 & 0.948 & 0.949 & 0.847
                    & 0.714 & 0.863 & 0.874 & 0.892
                    & 0.793 & 0.832 & 0.903 & 0.712 \\
\gepc     & EDM      & 0.330 & 0.977 & 0.947 & 0.902
                    & 0.241 & 0.261 & 0.672 & 0.851
                    & 0.801 & 0.188 & 0.856 & 0.683 \\
\midrule
\textbf{\(\cfs_{\mathrm{dec}}(1{\times}1)\)} & improved & 0.269 & \second{0.835} & \best{0.803} & 0.271
                                           & 0.230 & 0.431 & 0.523 & 0.487
                                           & 0.095 & 0.126 & 0.149 & 0.048 \\
\textbf{\(\cfs_{\mathrm{dec}}(1{\times}1)\)} & EDM      & \best{0.093} & 0.868 & \second{0.849} & \best{0.133}
                                           & \best{0.011} & \best{0.033} & \second{0.064} & \best{0.043}
                                           & \best{0.036} & \best{0.003} & \best{0.046} & \best{0.003} \\
\textbf{\(\cfs(1{\times}2)\)}     & improved & 0.371 & 0.842 & 0.855 & 0.329
                                           & 0.205 & 0.216 & 0.409 & 0.404
                                           & 0.109 & 0.080 & 0.157 & 0.037 \\
\textbf{\(\cfs(1{\times}2)\)}     & EDM      & \second{0.098} & 0.862 & 0.861 & \second{0.177}
                                           & \second{0.021} & \second{0.042} & \best{0.052} & \second{0.048}
                                           & \second{0.057} & \second{0.004} & \second{0.065} & \second{0.007} \\
\bottomrule
\end{tabular}%
}
\end{table*}

The full tables make two patterns clear. First, the main-paper summaries are not driven by one isolated pair: the sparse \cfs\ variants are strong across a broad set of ID\(\to\)OOD configurations, especially on compact IDs and texture-driven shifts. Second, the decoder-only and paired encoder-decoder variants remain very close on many pairs, which reinforces the compression result already visible in the main tables.

\section{Full Source-Family Robustness Results}
\label{app:source_family_full}

This section tests whether the main representation-space pattern is tied to one checkpoint family or survives a controlled change in the frozen source representation.



We report the full pairwise breakdown for the alternative CelebA32-source checkpoint family. Tables~\ref{tab:celeba_source_full_auroc} and~\ref{tab:celeba_source_full_fpr95} provide the detailed AUROC and FPR95 results. The goal is to test whether the source-family change modifies only absolute performance or also the qualitative ranking between representation-space and output-space methods.

\begin{table*}[t]
\centering
\caption{
\textbf{Full per-pair AUROC on the CIFAR-scale benchmark under the alternative CelebA32-source policy.}
This is the detailed source-family robustness breakdown complementary to the main-paper summary.
All metrics are reported with three decimal places.
}
\label{tab:celeba_source_full_auroc}
\scriptsize
\resizebox{\textwidth}{!}{%
\begin{tabular}{llcccc|cccc|cccc}
\toprule
& & \multicolumn{4}{c|}{CIFAR10 as ID}
  & \multicolumn{4}{c|}{SVHN as ID}
  & \multicolumn{4}{c}{CelebA32 as ID} \\
\cmidrule(lr){3-6}\cmidrule(lr){7-10}\cmidrule(lr){11-14}
Method & Backbone
& SVHN & CelebA32 & CIFAR100 & DTD
& CIFAR10 & CelebA32 & CIFAR100 & DTD
& CIFAR10 & SVHN & CIFAR100 & DTD \\
\midrule
\msma     & improved & 0.766 & 0.859 & 0.562 & 0.781
                    & 0.937 & 0.976 & 0.944 & \second{0.934}
                    & \second{0.956} & 0.930 & \second{0.953} & 0.973 \\
\msma     & EDM      & 0.659 & 0.278 & 0.536 & 0.705
                    & 0.928 & 0.937 & 0.931 & 0.891
                    & 0.851 & \second{0.984} & 0.860 & 0.916 \\
\addlinespace
\diffpath & improved & \best{0.952} & 0.641 & 0.538 & 0.700
                    & \best{0.979} & 0.946 & \best{0.970} & 0.913
                    & 0.779 & 0.915 & 0.787 & 0.827 \\
\diffpath & EDM      & 0.817 & 0.350 & 0.524 & 0.726
                    & 0.936 & 0.957 & 0.933 & 0.922
                    & 0.680 & 0.900 & 0.709 & 0.864 \\
\addlinespace
\ddpmood  & improved & 0.094 & 0.201 & 0.482 & 0.349
                    & 0.928 & 0.761 & 0.914 & 0.741
                    & 0.841 & 0.251 & 0.816 & 0.595 \\
\ddpmood  & EDM      & 0.069 & 0.242 & 0.503 & 0.348
                    & \second{0.955} & 0.914 & \second{0.950} & 0.809
                    & 0.750 & 0.097 & 0.741 & 0.520 \\
\addlinespace
\gepc     & improved & 0.533 & \best{0.975} & 0.508 & 0.643
                    & 0.516 & 0.963 & 0.515 & 0.577
                    & 0.929 & 0.944 & 0.928 & 0.957 \\
\gepc     & EDM      & 0.728 & 0.545 & 0.533 & 0.654
                    & 0.910 & 0.966 & 0.913 & 0.833
                    & 0.577 & 0.925 & 0.590 & 0.765 \\
\midrule
\textbf{\(\cfs_{\mathrm{dec}}(1{\times}1)\)} & improved & 0.827 & 0.943 & 0.604 & 0.865
                                           & 0.884 & \best{0.997} & 0.905 & 0.867
                                           & \best{0.999} & \best{1.000} & \best{0.998} & \best{1.000} \\
\textbf{\(\cfs_{\mathrm{dec}}(1{\times}1)\)} & EDM      & 0.858 & 0.953 & \second{0.605} & 0.863
                                           & 0.935 & 0.991 & 0.950 & \second{0.962}
                                           & 0.941 & 0.976 & 0.947 & 0.985 \\
\textbf{\(\cfs(1{\times}2)\)}     & improved & 0.828 & 0.907 & 0.602 & \second{0.878}
                                           & 0.903 & \second{0.993} & 0.916 & 0.881
                                           & \second{0.997} & \best{1.000} & \second{0.996} & \second{0.999} \\
\textbf{\(\cfs(1{\times}2)\)}     & EDM      & \second{0.879} & \second{0.957} & \best{0.618} & \best{0.887}
                                           & \second{0.955} & \second{0.993} & \second{0.966} & \best{0.974}
                                           & 0.963 & \second{0.989} & 0.968 & 0.993 \\
\bottomrule
\end{tabular}%
}
\end{table*}

\begin{table*}[t]
\centering
\caption{
\textbf{Full per-pair FPR95 on the CIFAR-scale benchmark under the alternative CelebA32-source policy.}
This is the detailed source-family robustness breakdown complementary to the main-paper summary.
Lower is better.
All metrics are reported with three decimal places.
}
\label{tab:celeba_source_full_fpr95}
\scriptsize
\resizebox{\textwidth}{!}{%
\begin{tabular}{llcccc|cccc|cccc}
\toprule
& & \multicolumn{4}{c|}{CIFAR10 as ID}
  & \multicolumn{4}{c|}{SVHN as ID}
  & \multicolumn{4}{c}{CelebA32 as ID} \\
\cmidrule(lr){3-6}\cmidrule(lr){7-10}\cmidrule(lr){11-14}
Method & Backbone
& SVHN & CelebA32 & CIFAR100 & DTD
& CIFAR10 & CelebA32 & CIFAR100 & DTD
& CIFAR10 & SVHN & CIFAR100 & DTD \\
\midrule
\msma     & improved & 0.627 & 0.520 & 0.941 & 0.788
                    & 0.380 & 0.121 & 0.335 & 0.423
                    & 0.233 & 0.298 & 0.251 & 0.135 \\
\msma     & EDM      & 0.768 & 0.990 & 0.941 & 0.872
                    & 0.362 & 0.261 & 0.355 & 0.593
                    & 0.618 & 0.069 & 0.600 & 0.460 \\
\addlinespace
\diffpath & improved & \best{0.118} & 0.848 & 0.948 & 0.875
                    & \best{0.057} & 0.312 & \second{0.141} & 0.674
                    & 0.849 & 0.293 & 0.845 & 0.820 \\
\diffpath & EDM      & \second{0.351} & 0.975 & 0.940 & 0.813
                    & 0.399 & 0.181 & 0.434 & 0.470
                    & 0.886 & 0.235 & 0.877 & 0.668 \\
\addlinespace
\ddpmood  & improved & 0.999 & 0.986 & 0.962 & 0.999
                    & 0.370 & 0.681 & 0.456 & 0.960
                    & 0.768 & 0.998 & 0.837 & 0.999 \\
\ddpmood  & EDM      & 1.000 & 0.977 & 0.953 & 0.999
                    & 0.220 & 0.280 & 0.243 & 0.897
                    & 0.901 & 1.000 & 0.925 & 1.000 \\
\addlinespace
\gepc     & improved & 0.922 & 0.193 & 0.948 & 0.864
                    & 0.937 & 0.202 & 0.943 & 0.914
                    & 0.175 & 0.137 & 0.188 & 0.135 \\
\gepc     & EDM      & 0.858 & 0.918 & 0.937 & 0.906
                    & 0.425 & 0.151 & 0.433 & 0.757
                    & 0.929 & 0.481 & 0.910 & 0.781 \\
\midrule
\textbf{\(\cfs_{\mathrm{dec}}(1{\times}1)\)} & improved & 0.532 & \second{0.154} & 0.918 & 0.599
                                           & 0.484 & \best{0.014} & 0.438 & 0.580
                                           & \best{0.004} & \best{0.001} & \best{0.007} & \best{0.001} \\
\textbf{\(\cfs_{\mathrm{dec}}(1{\times}1)\)} & EDM      & 0.378 & \second{0.143} & \best{0.903} & \second{0.545}
                                           & 0.294 & 0.023 & 0.231 & \second{0.158}
                                           & 0.146 & 0.065 & 0.140 & 0.053 \\
\textbf{\(\cfs(1{\times}2)\)}     & improved & 0.538 & 0.218 & 0.917 & 0.557
                                           & 0.385 & 0.028 & 0.371 & 0.487
                                           & \second{0.013} & \second{0.002} & \second{0.017} & \second{0.002} \\
\textbf{\(\cfs(1{\times}2)\)}     & EDM      & 0.354 & \best{0.133} & \second{0.909} & \best{0.501}
                                           & \second{0.178} & \second{0.017} & \best{0.140} & \best{0.112}
                                           & \second{0.101} & \second{0.037} & \second{0.098} & \second{0.033} \\
\bottomrule
\end{tabular}%
}
\end{table*}


A source-family change modifies the representation bias of the frozen checkpoint. The relevant question is therefore not whether absolute numbers remain identical, but whether the ranking and the representation-versus-output-space contrast remain qualitatively stable.

The full tables support the same broad conclusion as the compact main-paper summary: changing the source family shifts absolute numbers but does not erase the main pattern. Sparse representation-space probing remains highly competitive, and the contrast between \cfs\ and broader output-space baselines survives the change in frozen source representation.

\section{External Positioning Against Prior Reported Diffusion Results}
\label{app:external_positioning}

This section provides external context by comparing \cfs\ to previously reported diffusion-based CIFAR-scale results. 

\begin{table*}[t]
\centering
\caption{
\textbf{External positioning against prior reported diffusion-based results on the CIFAR-scale benchmark.}
Rows above the final block are taken from prior work. We separate: (i) prior diffusion-based methods reported in ID-specific settings, and (ii) single-checkpoint diffusion methods using one frozen checkpoint across multiple ID/OOD pairs.
}
\label{tab:external_positioning}
\footnotesize
\setlength{\tabcolsep}{3.2pt}
\renewcommand{\arraystretch}{1.04}
\resizebox{\textwidth}{!}{%
\begin{tabular}{lcccccccccccccc}
\toprule
& \multicolumn{4}{c}{CIFAR-10 as ID}
& \multicolumn{4}{c}{SVHN as ID}
& \multicolumn{4}{c}{CelebA32 as ID}
& \multirow{2}{*}{Avg.}
& \multirow{2}{*}{$\Cost$} \\
\cmidrule(lr){2-5}\cmidrule(lr){6-9}\cmidrule(lr){10-13}
Method
& SVHN & CelebA & C100 & DTD
& C10 & CelebA & C100 & DTD
& C10 & SVHN & C100 & DTD
&  &  \\
\midrule
\multicolumn{15}{c}{\textit{Diffusion-based, ID-specific / reported from prior work}} \\
\midrule
NLL
& 0.091 & 0.574 & 0.521 & 0.609
& \second{0.990} & \second{0.999} & \best{0.992} & 0.983
& 0.814 & 0.105 & 0.786 & 0.809
& 0.689 & 1000F \\

IC
& 0.921 & 0.516 & 0.519 & 0.553
& 0.080 & 0.028 & 0.100 & 0.174
& 0.485 & 0.972 & 0.510 & 0.559
& 0.451 & 1000F \\

MSMA
& 0.957 & \best{1.000} & 0.615 & \second{0.986}
& 0.976 & 0.995 & 0.980 & \second{0.996}
& 0.910 & \second{0.996} & 0.927 & \second{0.999}
& \second{0.945} & 10F \\

DDPM-OOD
& 0.390 & 0.659 & 0.536 & 0.598
& 0.951 & 0.986 & 0.945 & 0.910
& 0.795 & 0.636 & 0.778 & 0.773
& 0.746 & 350F \\

LMD
& \best{0.992} & 0.557 & 0.604 & 0.667
& 0.919 & 0.890 & 0.881 & 0.914
& 0.989 & \best{1.000} & 0.979 & 0.972
& 0.865 & $10^4$F \\

EigenScore
& 0.810 & 0.873 & \best{0.880} & --
& \best{0.992} & 0.994 & \second{0.982} & --
& 0.965 & 0.888 & 0.944 & --
& 0.925 & 300F \\
\midrule
\multicolumn{15}{c}{\textit{Diffusion-based, single-checkpoint / outside \mbe\ (mostly reported from prior work)}} \\
\midrule

SCOPED-CelebA
& 0.814 & 0.940 & 0.477 & --
& 0.971 & 0.996 & 0.959 & --
& 0.925 & 0.994 & 0.962 & --
& 0.892 & 2F+2J \\
GEPC-CelebA
& 0.842 & \second{0.999} & 0.554 & --
& 0.880 & \best{1.000} & 0.897 & --
& \best{1.000} & \best{1.000} & \best{1.000} & --
& 0.908 & 8F \\

\addlinespace
DiffPath-6D-CelebA
& 0.910 & 0.897 & 0.590 & 0.923
& 0.939 & 0.979 & 0.953 & 0.981
& 0.998 & \best{1.000} & 0.998 & \second{0.999}
& 0.931 & 10F \\

DiffPath-6D-ImageNet
& 0.856 & 0.502 & 0.580 & 0.841
& 0.943 & 0.964 & 0.954 & 0.969
& 0.807 & 0.981 & 0.843 & 0.964
& 0.850 & 10F \\

\addlinespace
\textbf{\(\cfs(1{\times}2)\)-CelebA (ours)}
& 0.901 & 0.939 & 0.614 & 0.900
& 0.952 & 0.998 & 0.960 & 0.957
& \second{0.999} & \best{1.000} & \second{0.999} & \best{1.000}
& 0.935 & \best{1F} \\

\textbf{\(\cfs(1{\times}2)\)-ImageNet (ours)}
& \second{0.962} & 0.994 & \second{0.629} & \best{0.996}
& 0.981 & \best{1.000} & \second{0.982} & \best{1.000}
& \second{0.999} & \best{1.000} & \second{0.999} & 0.998
& \best{0.962} & \best{1F} \\
\bottomrule
\end{tabular}}
\end{table*}

Table~\ref{tab:external_positioning} shows that the sparse representation-space advantage of \cfs\ is not confined to the controlled \mbe\ setting. Even outside backbone-equated evaluation, \cfs\ remains competitive with or stronger than previously reported single-checkpoint diffusion results, while using only one backbone evaluation per image.

\subsection{Positioning against DLSR on its native published settings}
\label{app:dlsr_positioning}

DLSR is the closest prior in spirit, but it studies a different object: learned feature reconstruction rather than sparse probing of native frozen states. It also falls outside our MBE protocol. Since DLSR is only available on the ID/OOD pairs supported by its published setup, we report in Table~\ref{tab:dlsr_overlap} a positioning comparison on the DLSR-native overlapping subset, without treating this table as part of the main MBE claim.

\begin{table*}[t]
\centering
\resizebox{\textwidth}{!}{%
\begin{tabular}{lcccc|cccc|cccc|c|c}
\toprule
& \multicolumn{4}{c}{CIFAR-10 as ID}
& \multicolumn{4}{c}{SVHN as ID}
& \multicolumn{4}{c}{CelebA32 as ID}
& \multirow{2}{*}{Avg.\ overlap}
& \multirow{2}{*}{Cost} \\
\cmidrule(lr){2-5}\cmidrule(lr){6-9}\cmidrule(lr){10-13}
Method
& SVHN & CelebA & C100 & DTD
& C10 & CelebA & C100 & DTD
& C10 & SVHN & C100 & DTD
& & \\
\midrule

DLSR (+MSE)
& 0.973 & -- & \best{0.875} & \best{1.000}
& -- & -- & -- & --
& -- & -- & -- & \second{0.999}
& \second{0.962}
& -- \\

DLSR (+LR)
& \second{0.982} & -- & \second{0.872} & \best{1.000}
& -- & -- & -- & --
& -- & -- & -- & 0.985
& 0.960
& -- \\

DLSR (+MFsim)
& \best{0.989} & -- & 0.856 & \best{1.000}
& -- & -- & -- & --
& -- & -- & -- & \best{1.000}
& 0.961
& -- \\

\textbf{\(\cfs(1{\times}2)\)-ImageNet (diag)}
& 0.962 & \second{0.994} & 0.629 & \second{0.996}
& \second{0.981} & \best{1.000} & \second{0.982} & \best{1.000}
& \best{0.999} & \best{1.000} & \best{0.999} & 0.998
& \second{0.962}
& \best{1F} \\

\textbf{\(\cfs(1{\times}2)\)-ImageNet (kNN)}
& 0.977 & \best{1.000} & 0.717 & \best{1.000}
& \best{0.998} & \best{1.000} & \best{0.998} & \best{1.000}
& \best{0.999} & \best{1.000} & \best{0.999} & \second{0.999}
& \best{0.974}
& \best{1F} \\

\bottomrule
\end{tabular}%
}
\caption{\textbf{Positioning against DLSR on its native published evaluation pairs.} DLSR lies outside the MBE protocol and uses an additional learned feature-reconstruction module, so we do not report a backbone-forward-equivalent logical cost for it. For our methods, Cost denotes the backbone-forward logical cost. Averages are computed over the non-missing overlapping entries of each method.}
\label{tab:dlsr_overlap}
\end{table*}

DLSR is therefore evaluated only on pairs supported by the original protocol/repo. The full cross-ID benchmark would require extending the original DLSR training and evaluation pipeline beyond the published setup.

\section{Checkpoint-Controlled Large-Scale Results}
\label{app:large_scale}

This appendix reports a checkpoint-controlled large-scale comparison on ImageNet200 and ImageNet1K using a single official ImageNet-64 improved-diffusion checkpoint. These experiments are not meant to replace the controlled CIFAR-scale evidence in the main paper. Their role is narrower: to test whether the same representation-first signal remains informative in a substantially harder large-scale regime under a fixed backbone.

In all experiments below, all methods use the same official ImageNet-64 improved-diffusion checkpoint (\texttt{imagenet64\_uncond\_100M\_1500K.pt}). We evaluate on two near-OOD regimes, \textbf{NINCO} and \textbf{SSB-hard}, and one far-OOD regime, \textbf{Textures}. Unless stated otherwise, both \cfs\ variants use a single low-noise canonical level and therefore retain a logical test-time cost of \(1F+0J\), whereas \msma\ and \diffpath\ use \(10F+0J\).


Table~\ref{tab:large_scale_improved} reports the resulting large-scale comparison. In addition to the primary main-paper operating point \(\cfs(1{\times}2)\), we also include the ultra-sparse decoder-only companion \(\cfs_{\mathrm{dec}}(1{\times}1)\).

\begin{table*}[t]
\centering
\caption{
\textbf{Checkpoint-controlled large-scale comparison on the official ImageNet-64 improved-diffusion backbone.}
All methods use the same official improved-diffusion checkpoint (\texttt{imagenet64\_uncond\_100M\_1500K.pt}).
We report AUROC on NINCO, SSB-hard, and Textures, together with averaged AUROC, AUPR, and FPR95 over these three OOD sets.
The logical test-time cost of the displayed \cfs\ variants is \(1F\); for \msma\ and \diffpath\ it is \(10F\).
}
\label{tab:large_scale_improved}
\small
\setlength{\tabcolsep}{4.5pt}
\resizebox{\textwidth}{!}{%
\begin{tabular}{llccccccc}
\toprule
ID & Method
& NINCO $\uparrow$
& SSB-hard $\uparrow$
& Textures $\uparrow$
& Avg.\ AUROC $\uparrow$
& Avg.\ AUPR $\uparrow$
& Avg.\ FPR95 $\downarrow$
& Cost \\
\midrule
ImageNet200 & \msma
& 0.6121 & 0.5244 & 0.7804 & 0.6390 & 0.9651 & 0.8917 & \(\second{10F}\) \\
ImageNet200 & \diffpath
& 0.5874 & 0.5052 & 0.6325 & 0.5750 & 0.9525 & 0.9333 & \(\second{10F}\) \\
\addlinespace
ImageNet200 & \textbf{\(\cfs_{\mathrm{dec}}(1{\times}1)\)}
& \best{0.6549} & \second{0.5357} & \best{0.9093} & \best{0.7000} & \best{0.9739} & \best{0.7444} & \(\best{1F}\) \\
ImageNet200 & \textbf{\(\cfs(1{\times}2)\)}
& \second{0.6492} & \best{0.5361} & \second{0.8969} & \second{0.6941} & \second{0.9732} & \second{0.7578} & \(\best{1F}\) \\
\midrule
ImageNet1K & \msma
& 0.5972 & 0.5100 & 0.7790 & 0.6287 & 0.8513 & 0.8871 & \(\second{10F}\) \\
ImageNet1K & \diffpath
& 0.5913 & 0.5117 & 0.6359 & 0.5796 & 0.8130 & 0.9187 & \(\second{10F}\) \\
\addlinespace
ImageNet1K & \textbf{\(\cfs_{\mathrm{dec}}(1{\times}1)\)}
& \best{0.6613} & \best{0.5337} & \best{0.9081} & \best{0.7010} & \best{0.8902} & \best{0.7401} & \(\best{1F}\) \\
ImageNet1K & \textbf{\(\cfs(1{\times}2)\)}
& \second{0.6490} & \second{0.5247} & \second{0.8948} & \second{0.6895} & \second{0.8855} & \second{0.7591} & \(\best{1F}\) \\
\bottomrule
\end{tabular}}
\end{table*}

These numbers constitute a checkpoint-controlled stress test of whether the representation-first signal remains informative outside the controlled CIFAR-scale regime

This checkpoint-controlled large-scale view yields a clear pattern. First, near-OOD ImageNet-scale detection remains difficult for all methods, especially on SSB-hard. Second, Textures is substantially more discriminative and reveals a clear advantage for sparse \cfs\ probes over \msma\ and \diffpath. Third, under this official improved-backbone setting, the sparse decoder-only variant \(\cfs_{\mathrm{dec}}(1{\times}1)\) is consistently slightly stronger than \(\cfs(1{\times}2)\) on both ImageNet200 and ImageNet1K, while preserving the same \(1F\) logical cost.

Interestingly, in the ImageNet-scale checkpoint-controlled setting, \(\cfs_{\mathrm{dec}}(1{\times}1)\) is slightly stronger than \(\cfs(1{\times}2)\). 
The result suggests that, in this harder large-scale regime, the conditional encoder residual of~Theorem~\ref{thm:conditional_complementarity_main}, is either small or not reliably exploitable by the lightweight diagonal score, so the late decoder snapshot remains the most robust sparse probe.

The NINCO and SSB-hard results indicate that large-scale semantic OOD remains difficult for all diffusion-based methods under this fixed single-backbone evaluation setting.

\section{Implementation and Configuration Details}
\label{app:impl}

This section reports the implementation details needed to reproduce the main \cfs\ runs and the corresponding ablations. All methods are evaluated through the shared \mbe\ adapter described in Appendix~\ref{app:protocols}. Unless stated otherwise, images are normalized to \([-1,1]\), the test-time Monte Carlo count is one, and all reported main-paper methods use OOD-high scores.

\subsection{CFS probing configuration}
\label{app:cfs_config}

Table~\ref{tab:cfs_probe_config} reports the sparse probing configuration used for the primary one-level \cfs{} runs. The same canonical probing setup is used for \(\cfs(1{\times}2)\), while decoder-only variants restrict the retained region to the decoder side.

\begin{table}[t]
\centering
\caption{
\textbf{Sparse \cfs\ probing configuration.}
This configuration specifies the canonical level, hook policy, pooling rule, Monte Carlo settings, and ID-only hook-selection proxy used for the main sparse probing runs.
}
\label{tab:cfs_probe_config}
\small
\setlength{\tabcolsep}{5pt}
\begin{tabular}{ll}
\toprule
Parameter & Value \\
\midrule
Number of selected canonical levels \(K_c\) & \(1\) \\
Candidate canonical grid \(K_{\mathrm{grid}}\) & \(1\) \\
Explicit canonical levels & \([5.0]\) \\
Pooling rule & channel-wise mean + standard deviation \\
Dynamics features & disabled \\
Hook policy & \texttt{sparse\_ed\_id} \\
Region mode for \(\cfs(1{\times}2)\) & encoder + decoder \\
Retained encoder hooks & \(1\) \\
Retained decoder hooks & \(1\) \\
Fit-time Monte Carlo count \(\mathrm{MC}_{\mathrm{fit}}\) & \(1\) \\
Test-time Monte Carlo count \(\mathrm{MC}_{\mathrm{test}}\) & \(1\) \\
Maximum fit batches & \(64\) \\
Internal batch size & \(64\) \\
ID probe batches for hook proxy & \(1\) \\
Maximum candidates per region & \(4\) \\
Clamping & disabled \\
Numerical floor \(\varepsilon\) & \(10^{-6}\) \\
\bottomrule
\end{tabular}
\end{table}

For the main results, each selected slot is scored with the lightweight diagonal statistic in  Eq.~\eqref{eq:slot_score_new}. Table~\ref{tab:cfs_main_head_config} reports the corresponding ID-only head configuration. For the head-sensitivity ablation in Appendix~\ref{app:ablation_heads}, we additionally evaluate stronger ID-only heads on the same sparse representation.

\begin{table}[t]
\centering
\caption{
\textbf{Default ID-only head configuration for the main \cfs{} results.}
The main paper uses the diagonal score to keep the detector lightweight and to avoid conflating representation quality with downstream head capacity.
}
\label{tab:cfs_main_head_config}
\small
\setlength{\tabcolsep}{5pt}
\begin{tabular}{ll}
\toprule
Parameter & Value \\
\midrule
Head type & diagonal Gaussian / diagonal Mahalanobis score \\
Slot statistics & ID-only mean and diagonal variance \\
Variance floor & \(10^{-6}\) \\
Feature pooling & channel-wise mean + standard deviation \\
Slot aggregation & uniform average over selected slots \\
KNN head & not used in main results \\
GMM head & not used in main results \\
Shrinkage covariance head & not used in main results \\
\bottomrule
\end{tabular}
\end{table}

\paragraph{Pooling.}
The main-paper pooling rule is the one defined in Eq.~\eqref{eq:pooled_feature_new}: channel-wise spatial mean concatenated with channel-wise spatial standard deviation. Mean-only and std-only variants are treated as pooling ablations in Appendix~\ref{app:ablation_pooling}, not as the default \cfs{} configuration.

\subsection{Compute profiling protocol}
\label{app:compute_profile}

We profile wall-clock inference cost using the same benchmark runner as the reported experiments in Table~\ref{tab:compute_profile}. For each method, profiling is performed after ID-only fitting and measures the full scoring path: canonical corruption, backbone evaluations, hook or output extraction, and score computation. We use \(5\) warm-up batches and \(50\) measured batches, with CUDA synchronization before and after the measured region. Peak memory is measured with \texttt{torch.cuda.max\_memory\_allocated}. All timings are reported at the same batch size used in the benchmark.

We distinguish logical cost from measured runtime. Logical cost counts backbone evaluations per image, \(\#F\); measured runtime additionally includes feature pooling, density-head evaluation, data movement, and method-specific aggregation overhead. 

\begin{table}[t]
\centering
\caption{
\textbf{Representative compute profile on the CIFAR-scale benchmark.}
Timings are measured after ID-only fitting using the full scoring path with \(5\) warm-up batches and \(50\) measured batches. GPU-hours estimate the fit plus evaluation cost for one ID-vs-OOD benchmark run under the reported split sizes.
}
\label{tab:compute_profile}
\small
\setlength{\tabcolsep}{4.5pt}
\resizebox{\textwidth}{!}{%
\begin{tabular}{llccccc}
\toprule
Method & Backbone & Logical cost & Fit time (s) & ms/img & Peak mem. (GB) & GPU-hours \\
\midrule
\msma & improved & 10F & 160.8 & 19.52 & 0.66 & 0.41 \\
\diffpath & improved & 10F & 163.1 & 19.70 & 0.66 & 0.42 \\
\ddpmood & improved & 364F & 5851.2 & 724.83 & 0.66 & 15.29 \\
\gepc & improved & 8F & 129.0 & 16.26 & 0.67 & 0.34 \\
\(\cfs_{\mathrm{dec}}(1{\times}1)\) & improved & 1F & 26.5 & 3.23 & 0.78 & 0.07 \\
\(\cfs(1{\times}2)\) & improved & 1F & 26.9 & 3.20 & 0.79 & 0.07 \\
\midrule
\msma & EDM & 10F & 506.3 & 64.12 & 0.84 & 1.35 \\
\diffpath & EDM & 10F & 473.9 & 57.89 & 0.84 & 1.22 \\
\ddpmood & EDM & 364F & -- & -- & -- & -- \\
\gepc & EDM & 8F & 381.7 & 48.23 & 0.85 & 1.02 \\
\(\cfs_{\mathrm{dec}}(1{\times}1)\) & EDM & 1F & 53.3 & 6.42 & 0.85 & 0.14 \\
\(\cfs(1{\times}2)\) & EDM & 1F & 49.5 & 5.88 & 0.87 & 0.12 \\
\bottomrule
\end{tabular}}
\end{table}

All profiles in Table~\ref{tab:compute_profile} were obtained on a single NVIDIA GeForce RTX 4060 Laptop GPU with batch size \(128\), and PyTorch 2.10.0. For \ddpmood{} with EDM, we report the logical cost but omit wall-clock profiling because the \(364F\) scoring path is prohibitively expensive; the improved-diffusion profile already illustrates this order-of-magnitude cost.

\section{Broader Impact and Responsible Use}
\label{app:broader_impact}

This work studies post-hoc OOD detection for frozen diffusion backbones. A potential positive impact is improved reliability and auditability of vision systems: sparse internal probes may help identify inputs that fall outside an evaluation reference distribution, while keeping test-time cost low and making protocol confounders explicit.

The main negative risk is over-reliance. An OOD score is not a certificate of safety, correctness, or fairness, and false negatives may create unwarranted confidence in high-stakes settings. This is especially important for sensitive visual domains such as medical imaging, biometric analysis, surveillance, or safety-critical autonomy, where dataset shift, demographic imbalance, or acquisition bias may produce failures not captured by the evaluation protocol.

Our method is intended as a diagnostic and benchmarking tool rather than a standalone deployment safeguard. In practical use, it should be combined with domain-specific validation, calibrated thresholds, uncertainty auditing, and human oversight where decisions may affect people. 

\end{document}